\def\cleartheorem#1{\expandafter\let\csname#1\endcsname\relax
    \expandafter\let\csname c@#1\endcsname\relax
}
\newtheorem{theorem}{Theorem}
\newtheorem{lemma}[theorem]{Lemma} 
\newtheorem{corollary}[theorem]{Corollary}
\newcommand{\defeq}{\doteq} 
\newcommand{\E}{\mathbb{E}} 
\newcommand{\K}{\mathbb{K}} 
\newcommand{\argmin}{\mathop{\mathrm{argmin}}}
\newcommand{\argmax}{\mathop{\mathrm{argmax}}}
\newcommand{\Ordo}{O} 
\newcommand{\as}{a.s.} 
\newcommand{\wrt}{w.r.t.} 
\newcommand{\iid}{i.i.d.} 
\newcommand{\norm}[1]{\lVert#1\rVert} 
\newcommand{\T}{\top} 
\newcommand{\ASSIGN}{\leftarrow} 
\newcommand{\ind}{\mathbb{I}} 
\newcommand{\event}{\mathcal{E}} 
\newcommand{\Prob}{\mathbb{P}} 
\newcommand{\setN}{\mathbb{N}} 
\newcommand{\setR}{\mathbb{R}} 
\newcommand{\setF}{\mathcal{F}} 
\newcommand{\setG}{\mathcal{G}} 
\newcommand{\setH}{\mathcal{H}} 
\newcommand{\setX}{\mathcal{X}}
\newcommand{\setY}{\mathcal{Y}}
\newcommand{\setXhat}{\hat{\setX}}
\newcommand{\setXtilde}{\tilde{\setX}}
\newcommand{\setC}{\mathcal{C}} 
\newcommand{\setI}{\mathcal{I}} 
\newcommand{\setT}{\mathcal{T}} 
\newcommand{\setB}{\mathcal{B}}
\newcommand{\setZ}{\mathcal{Z}}
\newcommand{\setP}{\mathcal{P}}
\newcommand{\setM}{\mathcal{M}}
\renewcommand{\vec}[1]{{\boldsymbol{#1}}} 
\newcommand{\vzero}{\vec{0}} 
\newcommand{\vone}{\vec{1}} 
\newcommand{\vx}{\vec{x}}
\newcommand{\vX}{\vec{X}}
\newcommand{\vxhat}{\hat{\vx}}
\newcommand{\vxtilde}{\tilde{\vx}}
\newcommand{\vXhat}{\vec{\hat{X}}}
\newcommand{\vXtilde}{\vec{\tilde{X}}}
\newcommand{\vXbar}{\vec{\bar{X}}}
\newcommand{\Ybar}{\bar{Y}}
\newcommand{\vu}{\vec{u}}
\newcommand{\vv}{\vec{v}}
\newcommand{\ve}{\vec{e}}
\newcommand{\vw}{\vec{w}}
\newcommand{\vz}{\vec{z}}
\newcommand{\vtheta}{\vec{\theta}}
\newcommand{\data}{{\mathcal{D}_n}} 
\newcommand{\lip}{\lambda} 
\newcommand{\lipf}{\tilde\lambda} 
\newcommand{\somenorm}{p} 
\newcommand{\phin}{\phi_\somenorm} 
\newcommand{\setFn}{\setF_\somenorm} 
\newcommand{\lipphi}{\lip_{\phin}} 
\newcommand{\Eapprox}{E_{\textrm{approx}}} 
\newcommand{\reg}{z} 
\newcommand{\regz}{\mathcal{R}} 
\newcommand{\risk}{\mathcal{L}} 
\newcommand{\fni}{f_n^+} 
\newcommand{\fnih}{\hat{f}_n^+} 
\newcommand{\inds}{\setI_n^+} 
\newcommand{\Khat}{\hat{k}} 
\newcommand{\proj}{\pi} 
\newcommand{\capx}{\tilde{\tau}_\somenorm} 
\newcommand{\clip}{\tau_\somenorm} 
\newcommand{\cphi}{\tau_{\phin}} 
\newcommand{\offsetc}{\tilde{c}_0} 
\newcommand{\rrho}{r_{\hspace{-0.5mm}\rho}}
\newcommand{\rsigma}{r_{\hspace{-0.5mm}\sigma}}
\newcommand{\setFbar}{\overline{\setF}} 
\newcommand{\setGbar}{\overline{\setG}} 
\newcommand{\bnd}{\beta} 
\newcommand{\fniapprox}{\hat{f}_n^{++}}
\newcommand{\setFcvx}{\setF^{\textrm{cvx}}} 
\begin{document}

\title{Near-optimal Delta-convex Estimation of Lipschitz Functions}

\author{\name G\'abor Bal\'azs \email gabalz@gandg.ai \\
       \textsc{G\&G}, Cartagena, Spain}

\editor{Aryeh Kontorovich}

\maketitle

\begin{abstract}
  This paper presents a tractable algorithm for estimating an unknown Lipschitz function from noisy observations and establishes an upper bound on its convergence rate.
  The approach extends max-affine methods from convex shape-restricted regression to the more general Lipschitz setting.
  A key component is a nonlinear feature expansion that maps max-affine functions into a subclass of delta-convex functions, which act as universal approximators of Lipschitz functions while preserving their Lipschitz constants.
  Leveraging this property, the estimator attains the minimax convergence rate (up to logarithmic factors) with respect to the intrinsic dimension of the data under squared loss and subgaussian distributions in the random design setting.
  The algorithm integrates adaptive partitioning to capture intrinsic dimension, a penalty-based regularization mechanism that removes the need to know the true Lipschitz constant, and a two-stage optimization procedure combining a convex initialization with local refinement.
  The framework is also straightforward to adapt to convex shape-restricted regression.
  Experiments demonstrate competitive performance relative to other theoretically justified methods, including nearest-neighbor and kernel-based regressors.
\end{abstract}

\begin{keywords}
  nonparametric regression, Lipschitz function, squared loss, minimax rate, function approximation, delta-convex function, empirical risk minimization
\end{keywords}

\section{Introduction}

This paper considers the fundamental problem of estimating an unknown regression function from noisy observations in the random design setting.
Suppose we observe $n$ independent and identically distributed (\iid)~samples, $\data \defeq \langle (\vX_i,Y_i) : i \in [n] \rangle$, for an unknown real-valued regression function $f_* : \setX_* \to \setR$ on some unknown domain $\setX_* \subseteq \setR^d$, such that
\begin{equation} \label{eq:data-model}
  \vX_i \in \setX_* \textrm{ almost surely (\as)}, \quad\qquad
  Y_i \defeq f_*(\vX_i) + \xi_i .
\end{equation}
The noise $\xi_i$ is centered, satisfying $\E[\xi_i|\vX_i] = 0$ \as~for all $i \in [n]$, where $[m] \defeq \{1,\ldots,m\}$ for any positive integer $m$.
We assume that the regression function $f_*$ is $\lip_*$-Lipschitz on $\setX_*$ with respect to (\wrt)~the Euclidean norm $\norm{\cdot}_2$ for some unknown Lipschitz constant $\lip_* \in (0,\infty)$.
We evaluate the estimators using the excess risk under squared loss, for which the minimax (convergence) rate is known to be $\Theta\big(n^{-2/(2+d_*)}\big)$ in terms of the sample size $n$ and the intrinsic data dimension $d_*$ \citep{Stone1982}.
Throughout the paper, we use the standard asymptotic order of growth notations: $\Omega(\cdot)$, $\Theta(\cdot)$, and $\Ordo(\cdot)$.

In convex (shape-restricted) regression \citep[e.g.,][]{Lim2014,HanWellner2016,Balazs2016,KurEtAl2020}, the regression function $f_*$ is known to be convex on a convex domain~$\setX_*$, and the goal is to estimate $f_*$ by a convex function.
In this setting, it is common to choose the estimator from the class of max-affine functions (functions defined as the maximum of finitely many affine functions) because they approximate any convex function at the worst-case optimal rate \citep{BalazsGyorgySzepesvari2015}.
Moreover, empirical risk minimization over max-affine functions using $n$ hyperplanes can be reformulated as a tractable convex optimization problem (solvable in polynomial time \wrt\ $d$ and $n$; \citealp[Section~6.5.5]{BoydVandenberghe2004}).
Although several extensions of max-affine functions were proposed \citep{BagirovEtAl2010,SunYu2019,SiahkamariEtAl2020}, none have been shown to achieve the minimax rate up to logarithmic factors (i.e., near-minimax rate) in the general Lipschitz setting of~\eqref{eq:data-model}.
In this paper, we fill this gap by using the following extension of max-affine functions:
\begin{equation} \label{eq:dcf-funcs1} \begin{split}
  \setFn(\setXhat) \defeq
  \Big\{f : \setR^d \to \setR \,\big|\, &f(\vx) = \max_{k\in[k_0]}b_{f,k} + \vu_{f,k}^\T(\vx-\vxhat_k) + v_{f,k}\norm{\vx-\vxhat_k}_\somenorm , \\
  & \vx\in\setR^d ,\, b_{f,k} \in \setR ,\, \vu_{f,k} \in \setR^d ,\, v_{f,k} \in \setR ,\, k\in[k_0]\Big\} ,
\end{split} \end{equation}
where $\setXhat \defeq \{\vxhat_1,\ldots,\vxhat_{k_0}\} \subset \setR^d$ is a nonempty, finite set of size $k_0 \defeq |\setXhat|$, and $\norm{\cdot}_\somenorm$ is the usual $p$-norm on $\setR^d$ for $p \in \{1, 2, \infty\}$.

The key observation, based on the function representation in the extension theorem of \citet{McShane1934}, is that when $\setXhat$ is chosen to be an $\epsilon$-cover of the covariate data $\setX_n \defeq \{\vX_1,\ldots,\vX_n\}$ \wrt~$\norm{\cdot}_2$, there exists a function within $\setFn(\setXhat)$ for any $p \in \{1,2,\infty\}$ that is uniformly $\Ordo(\epsilon)$-close to the Lipschitz regression function $f_*$ on $\setX_n$ (see \cref{thm:approxL}).
We use this fact to bound the approximation error of our estimators to $f_*$ on $\setX_n$, where the estimators ``approximately'' minimize the empirical risk over the training data $\data$ within $\setFn(\setXhat)$.
A tradeoff arises in selecting the size of the cover $\setXhat$: increasing $|\setXhat|$ improves approximation accuracy (i.e., smaller $\epsilon$) but results in a more complex representation (i.e., more parameters), and vice versa.
To balance this tradeoff, we construct $\setXhat$ using the adaptive farthest-point clustering algorithm of \citet{Balazs2022}, which achieves $|\setXhat| \approx n^{d_*/(2+d_*)}$ and $\epsilon \approx n^{-1/(2+d_*)}$.
Our main result, stated in \cref{thm:near-minimax-rate}, shows that this choice of $\setXhat$ together with the class $\setFn(\setXhat)$ yields estimators that achieve a near-minimax rate under \eqref{eq:data-model} for subgaussian distributions.

We predominantly use the Euclidean norm $\norm{\cdot}_2$ with the shorthand notation $\norm{\cdot}$.
We only allow the norm to vary in the estimator design in \eqref{eq:dcf-funcs1}, using this flexibility to highlight connections between our method and two other techniques in the literature.
With the max-norm ($\somenorm = \infty$), the elements of the set $\{ f \in \setF_\infty(\setXhat) : v_{f,k} \le 0, k\in[k_0]\}$ are max-min-affine functions (functions defined as the maximum of minima of finitely many affine functions).
Since \citet{Ovchinnikov2002} showed that max-min-affine functions can represent any continuous piecewise-linear function, various max-min-affine estimators have been developed \citep{BagirovEtAl2010,TorielloVielma2012,BagirovEtAl2022}.
However, to the best of our knowledge, none of these come with theoretical guarantees.
In \cref{sec:max-min-affine}, we address this gap with a tractable max-min-affine estimator based on $\setF_\infty(\setXhat)$, to which the near-minimax guarantee of \cref{thm:near-minimax-rate} applies.
In \cref{sec:dcf-alg}, we further discuss an extension to $\setF_1(\setXhat)$, which can be computed by maxout neural networks \citep{GoodfellowEtAl2013}.

Our algorithm regularizes the uniform Lipschitz constant of the estimator and does not require knowledge of the Lipschitz constant $\lip_*$ of the regression function $f_*$.
In \cref{sec:convex-regression}, we further show how our method can be easily adapted to the convex regression setting.
Although \citet{BlanchetEtAl2019} proposed a similar uniform Lipschitz regularization for convex regression and proved a convergence rate in probability, their result only holds for $d > 4$ and for sufficiently large $n$ satisfying $\ln(n) \ge \lip_*$.
\citet{Lim2025} extended this result to $d \le 4$, but it still only provides a convergence rate in probability as $n \to \infty$.
In contrast, we establish a probably approximately correct (PAC) bound that holds for all $n \ge 2$.
To the best of our knowledge, this adapted variant of our estimator is the first tractable method for convex regression to enjoy a PAC guarantee in the random design setting (albeit not necessarily near-minimax) without requiring knowledge of $\lip_*$ or a uniform bound on $f_*$.

The result of \citet[Section~III.2]{HiriartUrruty1985} shows that the class $\setFn(\setXhat)$ lies within the class of delta-convex (DC) functions, whose elements can be expressed as the difference of two convex functions \citep{Hartman1959}.
The classes of max-min-affine, weakly max-affine, and delta-max-affine functions are also contained within the class of DC functions, and all of them have been studied for estimator design \citep[e.g.,][]{BagirovEtAl2010,SunYu2019,SiahkamariEtAl2020}.
However, none of these approaches have achieved a near-minimax rate guarantee in the setting of \eqref{eq:data-model}.
In \cref{sec:max-min-affine,sec:weakly-convex}, we provide approximation results for all of these classes, which may be of independent interest.
To achieve the near-minimax rate in \cref{thm:near-minimax-rate}, we work with $\setFn(\setXhat)$ for two reasons.
First, empirical risk minimization (ERM) over $\setFn(\setXhat)$ leads to a tractable convex optimization problem, as discussed in \cref{sec:analysis}.
In contrast, we are not aware of any tractable ``approximation'' of the ERM problem over the entire class of max-min-affine functions.
Second, for the worst-case optimal approximation $f \in \setFn(\setXhat)$ to a $\lip_*$-Lipschitz regression function $f_*$, the parameter magnitudes $\max_{k\in[k_0]}\max\{\norm{\vu_{f,k}},|v_{f,k}|\}$ are provably bounded above by $\Ordo(\lip_*)$, as shown in \cref{thm:approxL}.
Importantly, this upper bound does not depend on the approximation accuracy.
Since we cannot establish a similar bound for weakly max-affine and delta-max-affine functions, our proof technique does not apply to those cases in general (i.e., without further assuming smoothness of $f_*$).

Finally, we note that several other methods have been shown to achieve a near-minimax rate in the regression setting of \eqref{eq:data-model} with respect to the intrinsic data dimension.
Specifically, these are the nearest-neighbor estimator \citep[Corollary~3]{KulkarniPosner1995}, certain tree-based predictors \citep[Theorem~9]{KpotufeDasgupta2012}, and the Nadaraya-Watson estimator \citep[Theorem~21]{Kpotufe2010}.
Unlike typically discontinuous partitioning estimators, such as nearest-neighbor and tree-based methods, our estimator is always continuous.
While the Nadaraya-Watson estimator also produces a continuous function by forming a weighted average over the entire sample using a continuous kernel, its structural behavior differs from ours.
In particular, the max operator underlying the representation of functions in $\setFn(\setXhat)$ induces a partition of $\setR^d$, making our estimator more closely resemble partitioning regression methods than kernel-based approaches such as Nadaraya-Watson.
Moreover, in contrast to our algorithm and to certain tree-based methods \citep[Theorem~4]{Kpotufe2009b}, both partitioning and kernel regression techniques generally require external model selection procedures \citep[e.g.,][]{BartlettEtAl2002} such as sample splitting or cross-validation \citep[e.g.,][Chapters~7 and 8]{GyorfiEtAl2002} to adapt to the intrinsic dimension and attain near-optimal guarantees.
Although a comprehensive empirical evaluation is beyond the scope of this paper, we include experimental comparisons with these methods in \cref{sec:experiments}.

\section{The Proposed Algorithm}
\label{sec:dcf-alg}

Define the feature vector $\phin(\vx,\vxhat) \defeq \big[(\vx-\vxhat)^\T\,\,\norm{\vx-\vxhat}_\somenorm\big]^\T \in \setR^{d+1}$ for all $\somenorm \in \{1,2,\infty\}$ and $\vx,\vxhat\in\setR^d$. Then, for any nonempty, finite set $\setXhat \defeq \{\vxhat_1,\ldots,\vxhat_{k_0}\} \subset \setR^d$, we can rewrite \eqref{eq:dcf-funcs1} in the compact form:
\begin{equation*} \begin{split}
  \setFn(\setXhat) = \big\{f : \setR^d \to \setR \,\big|\,
  & f(\vx) = \max_{k\in[k_0]}b_{f,k} + \vw_{f,k}^\T\phin(\vx,\vxhat_k), \\
  & \vx\in\setR^d,\, b_{f,k}\in\setR ,\, \vw_{f,k} \in \setR^{d_\somenorm} ,\, k \in [k_0]
  \big\}
\end{split} \end{equation*}
where $d_\somenorm \defeq d+1$.
This formulation allows us to analogously introduce the set $\setF_+(\setXhat)$ by defining $\phi_+(\vx,\vxhat) \defeq \big[(\vx-\vxhat)_+^\T\,\,(\vxhat-\vx)_+^\T\big]^\T \in \setR^{d_+}$ and $d_+ \defeq 2d$, where the ReLU operation $(\vz)_+ \defeq \max\{\vzero_d,\vz\}$ is applied elementwise to any vector $\vz \in \setR^d$, with $\vzero_d$ denoting the zero vector of size $d$.

The functions of $\setF_+(\setXhat)$ can be computed by the so-called maxout neural networks \citep{GoodfellowEtAl2013}.
Further, $\setF_1(\setXhat) \subset \setF_+(\setXhat)$ holds because $\vone_{2d}^\T\,\phi_+(\vx,\vxhat) = \norm{\vx-\vxhat}_1$ and $[\vw^\T\,-\vw^\T] \phi_+(\vx,\vxhat) = \vw^\T(\vx - \vxhat)$ for all $\vx,\vxhat,\vw \in \setR^d$, where $\vone_s$ denotes the all-ones vector of size~$s$.
In the paper, we consider $\setFn(\setXhat)$ for all $\somenorm \in \{1, 2, \infty, +\}$, and discuss their approximation guarantees for Lipschitz functions in \cref{sec:approxL}.

Let $k_0 \in \setN$, and $\{\setC_1(\setXhat),\ldots,\setC_{k_0}(\setXhat)\}$ denote the Voronoi partition of the entire space $\setR^d$ induced by the center points $\setXhat \defeq \{\vxhat_1,\ldots,\vxhat_{k_0}\} \subset \setR^d$ \wrt~the Euclidean norm $\norm{\cdot}$.
Formally, define $\setC_k(\setXhat) \defeq \big\{\vx\in\setR^d : \norm{\vx-\vxhat_k} = \min_{\vxhat\in\setXhat}\norm{\vx - \vxhat}\big\}$ for all $k\in[k_0]$, with ties broken arbitrarily but consistently (e.g., by selecting the center with the smaller index).

\subsection{Delta-convex Fitting (DCF)}
\label{sec:dcf}

Fix $p \in \{1,2,\infty,+\}$ and suppose we are given the training data $\data$ from \eqref{eq:data-model}.
First, we use the adaptive farthest-point clustering (AFPC; see \cref{alg:AFPC} below) method of \citet{Balazs2022} to compute a finite set of distinct center points $\setXhat_K \defeq \{\vXhat_1,\ldots,\vXhat_K\}\subseteq \setX_n$ for some $K \in [n]$.
Then, the core of our algorithm is the following convex optimization problem:
\begin{equation} \label{eq:erm} \begin{split}
  &\min_{\substack{\reg \in \setR, \\ b_1,\ldots,b_K \in \setR, \\ \vw_1,\ldots,\vw_K \in \setR^{d_\somenorm}}} \hspace{-3mm} \theta_1 \reg^2 + \hspace{-2mm}\sum_{k\in[K]} \hspace{-1mm}\theta_2\norm{\vw_k}^2 + \frac1n \sum_{i\in[n]}\ind\big\{\vX_i \in \setC_k(\setXhat_K)\big\}\Big(b_k + \vw_k^\T\phin(\vX_i,\vXhat_k) - Y_i\Big)^2 \\
  &\qquad \textrm{such that }
  b_k \ge b_l + \vw_l^\T\phin(\vXhat_k,\vXhat_l) ,\quad \norm{\vw_k} \le \reg + \theta_0 ,\quad k,l\in[K] .
\end{split} \end{equation}
where $\ind\{\cdot\}$ is the $\{0,1\}$-valued indicator function, and $\theta_0, \theta_1, \theta_2 \ge 0$ are fixed regularization parameters.
In particular, the role of $\theta_0$ is to mitigate the effect of conservative regularization on the uniform Lipschitz constant of the estimator (i.e., the $\theta_1\reg^2$ term).
Let $(\reg_n,\langle(b_{n,k},\vw_{n,k}) : k\in[K]\rangle)$ be a solution to \eqref{eq:erm}, and define the (initial) estimator as $f_n(\vx) \defeq \max_{k\in[K]}b_{n,k} + \vw_{n,k}^\T\phin(\vx,\vXhat_k)$ for all $\vx\in\setR^d$.
Clearly, $f_n \in \setFn(\setXhat_K)$.

Denote the empirical risk of any function $f : \setR^d \to \setR$ by $\risk_n(f) \defeq \frac1n\sum_{i\in[n]}\big(f(\vX_i)-Y_i\big)^2$.
Additionally, for all $f \in \setFn(\setXhat_K)$ and $c_0,c_1,c_2 \ge 0$, define the regularization term
\[
  \regz_{c_0,c_1,c_2}(f) \defeq c_1\max_{k\in[K]}\big(\norm{\vw_{f,k}} - c_0\big)_+^2 + c_2\sum_{k\in[K]}\norm{\vw_{f,k}}^2 ,
\]
and the largest slope parameter magnitude by $\lipf_f \defeq \max_{k\in[K]}\norm{\vw_{f,k}}$.
We then define $\fnih$ to be a refinement of $f_n$ in the sense that
\begin{equation} \label{eq:erm-local} \begin{split}
  \fnih \in \big\{f \in \setFn(\setXhat_K) : \risk_n(f) + \regz_n(f) \le \risk_n(f_n) + \regz_n(f_n)\big\}
  ,\quad \regz_n(\cdot) \defeq \regz_{\theta_3\lipf_{f_n},\theta_{f_n},\theta_2}(\cdot) ,
\end{split} \end{equation}
where $\theta_{f_n} \defeq \lipf_{f_n}^{-2}\big(\risk_n(f_n) + \regz_{0,0,\theta_2}(f_n)\big)$ if $\lipf_{f_n} > 0$, and $\theta_3 \ge 1$ is a fixed regularization parameter.
Note that $\lipf_{f_n} = \max_{k\in[K]}\norm{\vw_{n,k}}$ by definition.
In the degenerate cases, when $\lipf_{f_n} = 0$ or $\theta_{f_n} = 0$, we set $\fnih = f_n$ and $\theta_{f_n} = 0$.

To define the final estimator, we prune all parameters of $\fnih$ which do not affect its empirical risk, and center its average response over $\setX_n$.
Formally, we define the final estimator for all $\vx\in\setR^d$ as
\begin{equation} \label{eq:dcf-estimator} \begin{split}
  \fni(\vx) &\defeq C_n^+ + \max_{k\in\inds} b_{\fnih,k} + \vw_{\fnih,k}^\T\phin(\vx,\vXhat_k) , \qquad \inds \defeq \setI_n(\fnih) ,
  \\
  \setI_n(f) &\defeq \big\{k\in[K] : f(\vX_i) = b_{f,k} + \vw_{f,k}^\T\phin(\vX_i,\vXhat_k) \textrm{ for some } i\in[n]\big\} ,
\end{split} \end{equation}
where the index set $\setI_n(\cdot)$ is defined for all $f \in \setFn(\setXhat_K)$, and the centering constant $C_n^+$ is given by $C_n^+ \defeq \Ybar - \frac1n\sum_{i=1}^n\fnih(\vX_i)$ with $\Ybar \defeq \frac1n\sum_{i=1}^n Y_i$.

We call our algorithm delta-convex fitting (DCF) and summarize it in \cref{alg:DCF}.
For completeness, \cref{alg:AFPC} also presents the AFPC method of \citet{Balazs2022}.
To describe it, let the covariate data radius be defined as $R_{\setX_n} \defeq \max_{i\in[n]}\norm{\vX_i-\vXbar}$ with $\vXbar \defeq \frac1n\sum_{i\in[n]}\vX_i$.
Define the clustering objective as $\epsilon_n(\setXhat) \defeq \max_{\vX\in\setX_n}\min_{\vXhat\in\setXhat}\norm{\vX-\vXhat}$, and the partition size limit as $\Khat(\setXhat) \defeq \min\big\{n (\epsilon_n(\setXhat)/R_{\setX_n})^2, n^{d/(2+d)}\big\}$ for any set of center points $\setXhat \subseteq \setX_n$.

\begin{center}
\begin{minipage}{0.44\textwidth}
\begin{algorithm}[H]
  \caption{$\setXhat \defeq \textbf{AFPC}(\setX_n)$}
  \label{alg:AFPC}
  \begin{algorithmic}[1]
    \STATE $\setXhat \ASSIGN \{\vXhat\}$ with arbitrary $\vXhat \in \setX_n$
    \WHILE{$|\setXhat| < \Khat(\setXhat)$}
    \STATE \mbox{$\vXtilde \hspace{-1mm}\in\hspace{-0.5mm} \argmax_{\vX\in\setX_n}\hspace{-0.5mm}\min_{\vXhat\in\setXhat}\norm{\hspace{-0.25mm}\vX\hspace{-0.75mm}-\hspace{-0.75mm}\vXhat\hspace{-0.25mm}}$}\\
    \STATE $\setXhat \ASSIGN \setXhat \cup \{\vXtilde\}$
    \ENDWHILE
    \RETURN{$\setXhat$}
  \end{algorithmic}
\end{algorithm}
\end{minipage}
\hfill
\begin{minipage}{0.53\textwidth}
\begin{algorithm}[H]
  \caption{$\fni \defeq \textbf{DCF}(\data, \phin)$}
  \label{alg:DCF}
  \begin{algorithmic}[1]
    \STATE $\setXhat_K \defeq \textbf{AFPC}(\setX_n)$, $K \defeq |\setXhat_K|$
    \STATE $(\reg_n, \hspace{-0.5mm}\{\hspace{-0.5mm}(b_{n,k},\vw_{n,k}) \hspace{-0.5mm}:\hspace{-0.5mm} k\in[K]\}\hspace{-0.5mm}) \hspace{-0.25mm}\ASSIGN\hspace{-0.25mm}$ solution to \eqref{eq:erm}, using $\data$, $\phin$, $\setXhat_K$, and $\theta_0, \theta_1, \theta_2$
    \STATE $f_n(\cdot) \defeq \max_{k\in[K]} b_{n,k}+\vw_{n,k}^\T\phin(\,\cdot\,,\vXhat_k)$
    \STATE $\fni \ASSIGN $ refinement of $f_n$ via \eqref{eq:erm-local} and \eqref{eq:dcf-estimator}, \\ using $\data$, $\phin$, $\setXhat_K$, $f_n$, and $\theta_0, \theta_1, \theta_2, \theta_3$
    \RETURN $\fni$
  \end{algorithmic} 
\end{algorithm}
\end{minipage}
\end{center}

We consider feature maps $\phin$ for all $\somenorm \in \{1,2,\infty,+\}$, and analyze the DCF algorithm under the following choice of regularization parameters:
\begin{equation} \label{eq:dcf-params} \begin{aligned}
  0 \le \theta_0 &= \Ordo\big((R_{\setY_n}/\max\{1,R_{\setX_n}\})\ln(n)\big) , &
  \theta_1 &= \Theta\big(\max\big\{1, R_{\setX_n}^2\big\} \, (d K / n)\big) , \\
  0 \le \theta_2 &\le \theta_1 / K , &
  1 \le \theta_3 &= \Ordo\big(\ln(n)\big),
\end{aligned} \end{equation}
where the response data radius is defined as $R_{\setY_n} \defeq \max_{i\in[n]}|Y_i - \Ybar|$ with $\setY_n \defeq \{Y_1,\ldots,Y_n\}$, and $K$ is the size of the AFPC-computed partition as defined in \cref{alg:DCF}.

\subsection{Theoretical Guarantees of DCF}
\label{sec:near-minimax-rate}

Let $\setB(\vx_0,r) \defeq \{\vx\in\setR^d : \norm{\vx-\vx_0} \le r\}$ denote the closed ball in $\setR^d$ centered at $\vx_0 \in \setR^d$ with radius $r > 0$.
We write $Z \sim P$ to indicate that the random variable $Z$ is sampled from the distribution~$P$.
We consider the statistical model \eqref{eq:data-model}, where the regression function $f_* : \setX_* \to \setR$ is $\lip_*$-Lipschitz on its domain $\setX_* \subseteq \setR^d$.

Let $d_\circ$ denote the \emph{doubling dimension} of $\setX_*$ \citep[e.g.,][]{Gupta2003}.
That is, $d_\circ$ is the smallest number such that
for any $\vx \in \setR^d$ and $r > 0$, the set $\setB(\vx,r) \cap \setX_*$ can be covered by the union of at most $2^{d_\circ}$ balls of radius $r/2$.
Since $d_\circ$ can exceed $d$ by a constant factor,\footnote{It is known that exactly 7 discs of radius 1/2 are needed to cover the unit disc \citep[e.g.,][Section~I.2]{Zahn1962}, so the doubling dimension for any set of positive area in $\setR^2$ is at least $\log_2(7) > 2$.} we define the intrinsic dimension as $d_* \defeq \min\{d_\circ, d\}$ to ensure that the convergence rate is bounded by $n^{-2/(2+d)}$ in the worst case when $d < d_\circ$.
On the other hand, the doubling dimension $d_\circ$ (and thus $d_*$) can be significantly smaller than $d$, helping to mitigate the curse of dimensionality.
Practical examples where this occurs include affine subspaces, Riemannian manifolds \citep[Theorem~22]{DasguptaFreund2008}, sparse data, and unions of these \citep[Lemmas~3 and 4]{KpotufeDasgupta2012}.

\cref{thm:near-minimax-rate} presents the main result of the paper, providing an adaptive near-minimax rate for DCF estimators with respect to (\wrt)~the intrinsic dimension~$d_*$.
\begin{theorem} \label{thm:near-minimax-rate}
  For $n \ge 2$, consider the estimation problem \eqref{eq:data-model}, where the $n$ \iid~samples $\data$ are drawn from an unknown distribution $P_*$ over $\setX_* \times \setR$, and the regression function $f_*$ is $\lip_*$-Lipschitz on $\setX_*$ \wrt\ $\norm{\cdot}$.
  Suppose the covariate and noise distributions are subgaussian with parameters $\rho, \sigma > 0$ such that
  \begin{equation} \label{eq:subgaussian} \begin{split}
    \E\big[e^{\norm{\vX-\E[\vX]}^2/\rho^2}\big] \le 2 \,,\qquad
    \E\big[e^{(f_*(\vX)-Y)^2/\sigma^2}\big|\vX\big] \le 2 \,\,\, \textrm{\as} \,,\qquad
    (\vX, Y) \sim P_* .
  \end{split} \end{equation}
Let $\somenorm \in\{1,2,\infty,+\}$, and $\fni$ be the DCF estimator computed by \cref{alg:DCF} using regularization parameters satisfying \eqref{eq:dcf-params}.
Then, for all $\gamma \in (0,1)$, it holds with probability at least $1-\gamma$ \wrt\ the randomness of the sample $\data$ and the estimator (i.e., choosing the initial point of AFPC) that
\begin{equation*}
  \E_{(\vX,\cdot) \sim P_*}\big[\big(\fni(\vX) - f_*(\vX)\big)^2\big] = \Ordo\Big( d\big(1+d\,\ind\{\somenorm \ne 2\}\big) \, n^{-2/(2+d_*)} \bnd \Big) ,
\end{equation*}
where $\bnd \defeq \theta_3^2 \big(1+\rho^2\ln(n/\gamma)\big) \big(\lip_*^2 + \sigma^2\ln(\bnd_{\ln}/\gamma)\big) \ln^3(n) \ln(dn/\gamma)$, and $\bnd_{\ln} \defeq n (1+\lip_*\rho/\sigma)$.
\end{theorem}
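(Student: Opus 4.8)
The plan is to bound the population excess risk, which under \eqref{eq:data-model} satisfies $\E_{(\vX,\cdot)\sim P_*}[(\fni(\vX)-f_*(\vX))^2] = \risk(\fni)-\risk(f_*)$ for the population risk $\risk(f) \defeq \E[(f(\vX)-Y)^2]$, via an approximation--estimation decomposition that reduces everything to the empirical behaviour of the convex program \eqref{eq:erm} and a single uniform concentration step. First I would carry out the \emph{empirical} analysis. Let $f^\circ \in \setF_\somenorm(\setXhat_K)$ be the near-optimal approximant supplied by \cref{thm:approxL}, so that $\sup_{\vX\in\setX_n}|f^\circ(\vX)-f_*(\vX)| = \Ordo(\epsilon_n(\setXhat_K))$ while its slope magnitudes are $\Ordo(\lip_*)$ \emph{independently of the accuracy}. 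Building from $f^\circ$ a feasible point of \eqref{eq:erm} (taking $\reg = \Ordo(\lip_*)$ and pinning the center biases so the constraints $b_k \ge b_l + \vw_l^\T\phi_\somenorm(\vXhat_k,\vXhat_l)$ hold), expanding the partitioned squared residuals around $f_*(\vX_i)+\xi_i$, and invoking the AFPC accuracy $\epsilon_n^2(\setXhat_K)=\Ordo(K/n)$ together with the parameter choices \eqref{eq:dcf-params} (in particular $\theta_1\lip_*^2 = \Ordo(d\lip_*^2 K/n)$ up to the covariate-radius factor, and $\theta_2 \le \theta_1/K$), I would obtain that the optimal objective value, and hence $\risk_n(f_n)+\regz_n(f_n)$ after transferring the partitioned objective to the max function $f_n$ in the APCNLS style of \cref{sec:dcf-derivation}, satisfies $\risk_n(f_n)+\regz_n(f_n) \le \risk_n(f_*) + \theta_1\lip_*^2 + \Ordo(K/n)$ up to a lower-order noise cross term. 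By the refinement chain \eqref{eq:estimator-improvement} the same empirical bound is inherited by $\fni$, and by \cref{thm:fniL} its slope magnitude obeys $\lipf_{\fni} = \Ordo(\theta_3\Ln) = \Ordo(\theta_3\lip_*)$.

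Second I would run the \emph{statistical} analysis. The previous step confines $\fni$ to a low-complexity class: $K = \Ordo(n^{d_*/(2+d_*)})$ pieces anchored at $\setXhat_K$, each with slope magnitude $\Ordo(\theta_3\lip_*)$ and---thanks to the centering by $C_n^+$ and the pruning to $\inds$ in \eqref{eq:dcf-estimator}---bias parameters controlled in terms of $R_{\setY_n}$, $R_{\setX_n}$, and the slope bound. A covering-number estimate for this class scales like its $Kd_\somenorm$ parameters, and the $d(1+d\,\ind\{\somenorm\ne2\})$ prefactor enters through the norm-equivalence constants appearing in \cref{thm:approxL} when $\somenorm\ne2$. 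I would then convert the empirical inequality into a population one through a one-sided, localized (relative-deviation) uniform bound over this class, valid under the subgaussian tails \eqref{eq:subgaussian}: after a clipping argument to tame the unbounded response, a Bernstein-type concentration yields $\risk(\fni)-\risk(f_*) = \Ordo\big(\risk_n(\fni)-\risk_n(f_*) + (Kd/n)\,\beta'\big)$ for a polylogarithmic factor $\beta'$, and combining this with the empirical bound $\risk_n(\fni)-\risk_n(f_*) \le \theta_1\lip_*^2 + \Ordo(K/n) = \Ordo(d\lip_*^2\, n^{-2/(2+d_*)})$ collapses to the stated rate, with the remaining logarithmic and $\rho,\sigma$-dependent factors absorbed into $\beta$. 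The union over the polynomially many admissible AFPC outcomes (contributing the $\ln(\beta_{\ln}/\gamma)$ term) and the conditioning on the random center set are handled by taking the concentration uniformly over realizations.

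The hard part will be this concentration step, for two coupled reasons. First, the class over which $\fni$ lives is \emph{data-dependent}---it depends on the random partition $\setXhat_K$ and, through $\Ln$, on the random slope scale---so the deviation bound must hold uniformly over realizations rather than for a fixed class; this is precisely what forces the $\ln(\beta_{\ln}/\gamma)$ and $\ln(dn/\gamma)$ terms and the truncation level tied to $\beta_{\ln} = n(1+\lip_*\rho/\sigma)$. Second, the response is only subgaussian, not bounded, so to obtain a PAC bound valid for \emph{all} $n$ (rather than a rate merely in probability as in \citet{BlanchetEtAl2019,Lim2025}) I must clip both the estimator and the targets, control the clipping bias via the tails \eqref{eq:subgaussian}, and feed the clipped class into a localized Bernstein/Talagrand-type inequality in order to recover the fast $1/n$-type denominator that produces the minimax exponent $2/(2+d_*)$ instead of a slow $1/\sqrt n$ rate. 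The remaining ingredients---the approximation bound, the AFPC complexity/accuracy tradeoff, the empirical-optimality transfer, and the bias control from centering---are comparatively routine given \cref{thm:approxL}, \cref{thm:fniL}, and \eqref{eq:estimator-improvement}.
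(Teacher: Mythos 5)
Your overall strategy matches the paper's: approximation via \cref{thm:approxL} combined with the AFPC complexity/accuracy tradeoff, empirical near-optimality of the convex program transferred to $\fni$ through \eqref{eq:estimator-improvement} and \cref{thm:fniL}, and an offset (relative-deviation) concentration step to pass from empirical to population risk. However, there are two genuine gaps. First, the ``empirical analysis'' is not a deterministic computation plus ``a lower-order noise cross term'': the cross terms $\langle f_n-g_n, f_*-y\rangle_n$ and $\langle g_*-g_n, f_*-y\rangle_n$ require their own uniform concentration argument over the class containing the random solution $g_n$, whose parameters can a priori only be bounded at scale $\sqrt{n}$ (\cref{thm:loose-param-bounds}); the paper handles this with \cref{thm:conc-ineq-inner,thm:inner-product-terms} and the basic inequality \eqref{eq:erm-ineq-rearranged}. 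This first concentration step is what produces the bound $\Ln \le \lip_0 = \Theta\big(\theta_0+\clip\lip_*+\sigma\sqrt{\ln(\bnd_2/\gamma)}\big)$ in \cref{thm:Eapprox}. Your claim $\lipf_{\fni}=\Ordo(\theta_3\Ln)=\Ordo(\theta_3\lip_*)$ is therefore unjustified and false in general (take $\lip_*\to 0$ with $\sigma$ fixed: the fitted slopes do not vanish); the slope scale necessarily carries the noise-dependent term, which is why $\beta$ in the theorem contains $\sigma^2\ln(\beta_{\ln}/\gamma)$. Since this slope bound feeds the covering number and the subgaussian parameters of your second stage, the gap propagates.

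Second, ``taking the concentration uniformly over realizations'' of the data-dependent class is the crux, and ``the union over the polynomially many admissible AFPC outcomes'' is not a workable description of it: the centers $\setXhat_K$ are functions of the continuous random covariates, so there is no finite family of realizations to union over at the population level. The paper's mechanism is to replace the random centers by elements of a deterministic $\eta$-net of the high-probability support $\setX_\rho$, exploiting the Lipschitz continuity of $\phi_\somenorm(\vx,\cdot)$ in its \emph{second} argument (\cref{thm:phi-props,thm:fni-approx}), and only then to build a finite parameter cover (\cref{thm:fni-approx-cover}) fed into the offset Bernstein inequality of \cref{thm:conc-ineq-Bernstein} with $Z_f=f-f_*$, $W_f=f+f_*-2y$, and $\E[Z_{f,0}^2]=\E[Z_{f,0}W_{f,0}]$. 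Relatedly, the $\ln(\beta_{\ln}/\gamma)$ factor does not come from a union over AFPC outcomes; it enters through the high-probability slope bound $\lip_0$. Your clipping-plus-localized-Talagrand route for the unbounded response is a legitimate alternative to the paper's direct subgaussian-product Bernstein bound, but the two issues above must be repaired for the argument to close.
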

\begin{proof}
  See \cref{sec:proof-near-minimax-rate}.
\end{proof}

\citet[Theorem~1]{Stone1982} showed that the minimax rate for the estimation problem~\eqref{eq:data-model} under the squared loss is $\Theta(n^{-2/(2+d_*)})$ whenever $[0,1]^{d_*}\times\{0\}^{d-d_*} \subseteq \setX_*$.
Therefore, the convergence rate established in \cref{thm:near-minimax-rate} is near-minimax, since $\bnd = \Ordo\big(\ln^8(n)\big)$ with $\theta_3 = \Ordo\big(\ln(n)\big)$.
Furthermore, our result provides a PAC bound, which can be converted into an expectation bound via integration \citep[e.g.,][Eq.~2]{BalazsGyorgySzepesvari2016}.

The DCF algorithm runs in polynomial time with respect to both $d$ and $n$.
Since the AFPC algorithm differs from farthest-point clustering only in its stopping rule (using $|\setXhat| < \Khat(\setXhat)$ in \cref{alg:AFPC} instead of a fixed cardinality threshold), it can be computed in $\Ordo(dKn)$ time \citep{Gonzalez1985,HochbaumShmoys1985}.
Constructing the second-order cone program (SOCP) of \eqref{eq:erm} takes $\Ordo(d^2 n + K^2)$ time and yields a problem with $\Ordo(dK)$ variables and $\Ordo(K^2)$ constraints.
Its solution can be approximated to accuracy $\delta > 0$ using interior-point methods \citep[e.g.,][Section~6.2]{NesterovNemirovskii1994}, yielding a conservative worst-case runtime $\Ordo\big(d^2n + d^2 K^5 \ln(K/\delta)\big)$, ignoring sparsity.
Lastly, the refinement step \eqref{eq:erm-local} is optional with respect to theoretical guarantees (i.e., one can always use $\fnih = f_n$) and can be performed to a desired accuracy using smoothing techniques \citep{Nesterov2005} in a tractable manner.
Evaluating the DCF estimators $f_n$ and $\fni$ at a new point for inference takes $\Ordo(dK)$ and $\Ordo(d|\setI_n^+|)$ time, respectively, where $|\setI_n^+| \le K$.

\subsection{Discussion of DCF and Its Guarantees}
\label{sec:dcf-discuss}

The minimization problem in \eqref{eq:erm} is similar to the APCNLS algorithm of \citet{Balazs2022}, which is designed for training max-affine estimators.
Like APCNLS, our approach trains a partitioning estimator and maps it into a target function class, $\setFn(\setXhat_K)$ in our case, instead of the class of max-affine functions.
A key distinction, however, is that we impose the constraints only at the center points, resulting in just $K^2$ constraints, which is considerably fewer than the $nK$ constraints used in APCNLS.
As shown in \cref{sec:dcf-derivation}, despite this reduced constraint set, our algorithm preserves theoretical properties analogous to those of APCNLS.

The stopping condition of AFPC (\cref{alg:AFPC}) ensures that $K-1 < \Khat(\setXhat_{K-1})$ and $\Khat(\setXhat_K) \le K$.
Using this, \citet[Lemma~4.2]{Balazs2022} showed that AFPC guarantees both a complexity bound of $K = \Ordo\big(n^{d_*/(2+d_*)}\big)$ and an accuracy bound of $\epsilon_n^2(\setXhat_K) = \Ordo(K/n) = \Ordo\big(n^{-2/(2+d_*)}\big)$.
These bounds balance the tradeoff between complexity and accuracy in our setting, as discussed under \eqref{eq:dcf-funcs1}.
The term $n^{d/(2+d)}$ inside $\Khat(\setXhat)$ serves as a straightforward upper bound for the ``worst-case'' scenario when $d_* \approx d$.
Furthermore, AFPC stops immediately if $\epsilon_n(\setXhat) = 0$, ensuring that it always produces distinct center points, justifying the representation of $\setXhat$ as a set.

Clearly, the choice $\fnih = f_n$ always works for \eqref{eq:erm-local}.
However, based on the experimental results in \cref{sec:experiments}, we strongly recommend choosing $\fnih$ as an approximate local solution to the non-convex optimization problem $\min_{f\in\setFn(\setXhat_K)} \risk_n(f) + \regz_n(f)$, initialized at $f_n$.
Our near-minimax convergence rate guarantee in \cref{thm:near-minimax-rate} holds in both cases.

Denote the parameters of the final estimator $\fni$ by $b_{\fni,k} \defeq b_{\fnih,k} + C_n^+$ and $\vw_{\fni,k} \defeq \vw_{\fnih,k}$ for all $k \in \inds$.
Besides potentially reducing inference-time computational costs, the final step \eqref{eq:dcf-estimator} also facilitates bounding the magnitude of the unregularized bias parameters $\{b_{\fni,k} : k \in \inds\}$, which is required for applying the concentration inequality used to prove the near-minimax rate of the estimator.
We also naturally extend $\lipf_f$ and $\regz_n(f)$ to $f = \fni$ by replacing $[K]$ with $\inds$ in their definitions.
For the constant function, $f_c^{\textrm{const}}(\vx) \defeq c$ for all $\vx\in\setR^d$ and $c\in\setR$, the empirical risk $\risk_n(f_c^{\textrm{const}} + \fnih) = c^2 - 2c C_n^+ + \risk_n(\fnih)$ is minimized at $c = C_n^+$ with minimum value $\risk_n(\fni)$, hence the centering in \eqref{eq:dcf-estimator} ensures $\risk_n(\fni) \le \risk_n(\fnih)$.
Moreover, $\regz_n(\fni) \le \regz_n(\fnih)$ since the slope parameters of $\fni$ are a subset of those of $\fnih$.
Therefore, the estimator $\fni$ is also a refinement of $f_n$ that satisfies
\begin{equation} \label{eq:estimator-improvement}
  \risk_n(\fni) + \regz_n(\fni) \le \risk_n(\fnih) + \regz_n(\fnih) \le \risk_n(f_n) + \regz_n(f_n) .
\end{equation}
In \cref{sec:analysis}, we show that the initial solution $f_n$ already approximates $\risk_n(f_*) + \theta_1 \lip_*^2$ with accuracy $\Ordo(K/n)$, which is upper bounded by the near-minimax rate.
We rely on \eqref{eq:estimator-improvement} to show that $\fni$ inherits the near-minimax rate guarantee of $f_n$.
To this end, we also use the regularizer $\regz_n(\cdot)$ in \eqref{eq:erm-local}, which prevents the largest slope magnitude of $\fnih$ (and hence of $\fni$) from exceeding that of $f_n$ by more than a constant factor of $\theta_3$. That is, $\lipf_{\fni} \le \lipf_{\fnih} = \Ordo\big(\theta_3\lipf_{f_n}\big)$, as explained in \cref{thm:fniL}.

The DCF algorithm can be adapted to use alternative function classes.
As discussed in \cref{sec:variants}, it can be applied with the ``complementary'' set $\setF^-_\somenorm(\setXhat_K) \defeq \{f : -f \in\setFn(\setXhat_K)\}$ or the ``symmetric'' set $\setF^\Delta_\somenorm(\setXhat_K) \defeq \{f_1 - f_2 : f_1, f_2 \in \setFn(\setXhat_K)\}$.
The convergence rate established in \cref{thm:near-minimax-rate} extends to both cases.
In \cref{sec:max-min-affine}, we further describe how the DCF algorithm yields max-min-affine estimators and extend its theoretical guarantees to this setting.
Finally, by restricting $\setFn(\setXhat_K)$ to convex functions (including max-affine functions), the DCF algorithm specializes to convex regression, as discussed in \cref{sec:convex-regression}.
This generalization subsumes the APCNLS algorithm of \citet{Balazs2022}, matching its convergence-rate bound while eliminating the need to know the Lipschitz constant $\lip_*$ of the regression function.

Scaling all the elements of $\setX_n$ and $\setY_n$ by the same positive constant can alter the slope variables $\vw_1,\ldots,\vw_K$ returned by the DCF algorithm.
This is due to the $\max\{1,R_{\setX_n}\}$ terms in the definitions of the parameters $\theta_0$ and $\theta_1$ in \eqref{eq:dcf-params}.
The positive lower bound on these parameters is necessary to keep the regularization active, thereby preserving the guarantee of \cref{thm:near-minimax-rate} in degenerate cases where $\E\big[\norm{\vX-\E[\vX]}^2\big]$ converges to zero as $n$ grows while the noise level $\sigma > 0$ remains fixed.
The choice of $1$ as the lower bound can be relaxed, for example to $1/\ln(n)$ at the cost of introducing additional $\ln(n)$ factors in the bound of \cref{thm:near-minimax-rate}.

\section{Experiments}
\label{sec:experiments}

We compared DCF (\cref{alg:DCF}) with other theoretically justified estimators that achieve near-minimax rates, namely the $k$-nearest neighbors estimator ($k$-NN; \citealp[e.g.,][Chapter~6]{GyorfiEtAl2002}) and the Nadaraya-Watson kernel regressor (NW; \citealp{Nadaraya1964,Watson1964}).
As baselines, we also evaluated ordinary least squares regression (OLS) and state-of-the-art tree-based estimators, namely random forests (RF; \citealp{Breiman2001}) and the XGBoost gradient boosting algorithm (XGB; \citealp{XGBoost2016}).

We present experimental results on three public datasets from the Delve Project (University of Toronto).\footnote{\url{https://www.cs.toronto.edu/~delve/data/datasets.html}}
The \texttt{cpusmall} (comp-active/cpuSmall) dataset consists of 8192 samples, where the task is to predict the portion of time that CPUs run in user mode based on 12 system activity measures.
The \texttt{pumadyn} datasets also contain 8192 samples each and involve predicting the angular acceleration of one of the links of a simulated Puma 560 robot arm.
We selected the versions designed for highly nonlinear estimation with 8 input dimensions and varying noise levels: \texttt{pumadyn-8nm} (medium noise) and \texttt{pumadyn-8nh} (high noise).
Despite their small size, these problems effectively illustrate the strengths and weaknesses of DCF estimators.

For each experiment, we drew $n \in \{1024, 2048, 4096\}$ training samples from the datasets and used the remaining data for evaluation, measuring the mean squared error (MSE) of the estimators on the test set.
Each experiment was repeated 20 times, and we report the average results along with standard deviation error bars.
All algorithms except the tree-based ones are sensitive to feature scaling, so we tested both min-max scaling (MM) and Z-score normalization (STD).
The features of the \texttt{pumadyn} datasets are already reasonably scaled, so we also conducted experiments on these without applying any additional scaling (noFS).
To ensure comparability across problems, we also standardized the response variables in each dataset by centering and scaling them to have unit variance.

\cref{fig:prbKs} shows the partition size and the average cell size distribution of the Voronoi partitions computed by AFPC.
\begin{figure}[b]
  \begin{center}
    \includegraphics[height=0.169\textwidth]{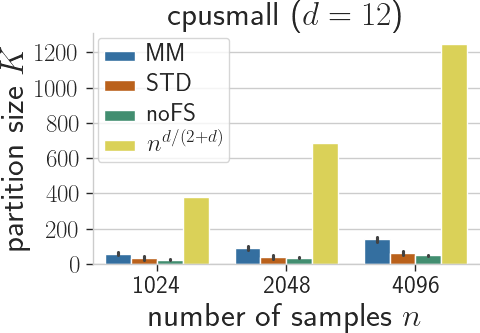}
    \includegraphics[height=0.169\textwidth]{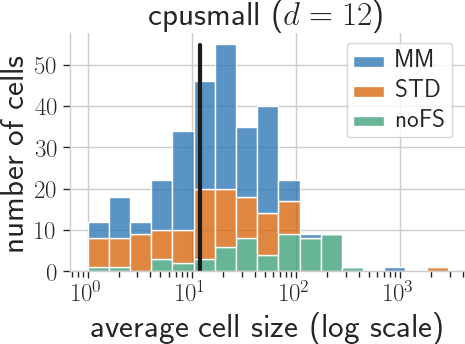}
    \includegraphics[height=0.169\textwidth]{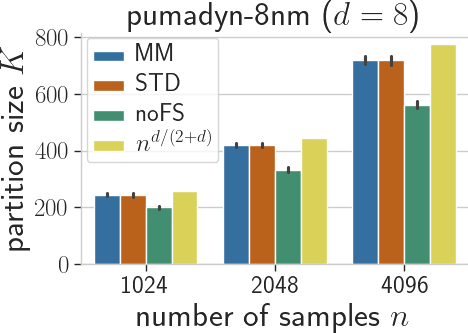}
    \includegraphics[height=0.169\textwidth]{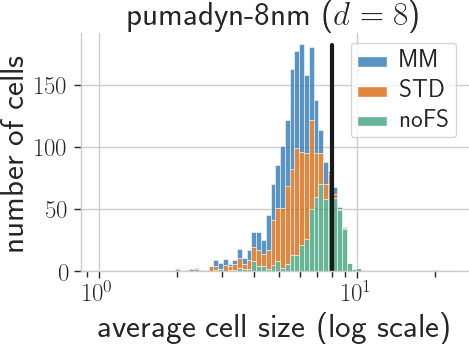}
  \end{center}
  \vspace{-5mm}
  \caption{AFPC partition size ($K$) for sample sizes $n \in \{1024, 2048, 4096\}$, and average cell size distribution for $n = 4096$.
    The upper bound of $K$ is $n^{d/(2+d)}$.
    The black vertical lines on the average cell size axes mark the value of $d$.
  The plots for \texttt{pumadyn-8nh} are similar to those of \texttt{pumadyn-8nm} and are omitted for brevity.}
  \label{fig:prbKs}
  \vspace{-5mm}
\end{figure}
On the \texttt{cpusmall} dataset, AFPC terminates relatively early under both scaling methods, with roughly half as many cells in the STD case compared with MM.
The \texttt{pumadyn} data includes noise in both the covariates and response variables, causing AFPC to return partition sizes close to the upper bound, even at the medium noise level.
In all cases, AFPC produces many cells with fewer points than the domain dimension $d$, making regression underdetermined within those cells.
Minimizing the slope of the estimator along unconstrained directions within such cells is an effective safeguard against overfitting.
In DCF, this is enforced by the regularizer $\theta_2\sum_{k\in[K]}\norm{\vw_k}^2$ in \eqref{eq:erm}, and analogously in \eqref{eq:erm-local}, as has been done in convex regression practice \citep{AybatWang2016,ChenMazumder2024}.

For the regression experiments, we used the default parameter settings for XGB and RF.
The implementations of RF and $k$-NN were taken from scikit-learn \citep{ScikitLearn}, while NW was implemented using scikit-fda \citep{ScikitFDA}.
The number of neighbors for $k$-NN was selected via $5$-fold cross-validation (CV) from the range $1$ to $\ln(n) n^{2/(2+d)}$ as motivated by \citet[Theorem~6.2]{GyorfiEtAl2002}.
For NW, we used the Gaussian and triweight kernels (referred to as NW-G and NW-T, respectively), selecting the bandwidth via $5$-fold CV among 100 equidistant values up to $(2R_{\setX_n})^{d/(2+d)}\big(R_{\setY_n}^2/n\big)^{1/(2+d)}$ as motivated by \citet[Theorem~21]{Kpotufe2010}.

DCF was implemented in Python, using local optimization for its refinement step as recommended in \cref{sec:dcf}.
We employed a quadratic penalty method \citep[e.g.,][Section~17.1]{NocedalWright2006} with a penalty parameter of $10^6$ to transform the SOCP initialization problems \eqref{eq:erm} and \eqref{eq:erm-symm}, as well as the refinement steps \eqref{eq:erm-local} and \eqref{eq:erm-local-symm}, into unconstrained minimization problems, which were then solved using L-BFGS \citep[e.g.,][Section~7.2]{NocedalWright2006}.%
\footnote{Our implementation is available at \url{https://github.com/gabalz/dcfit/releases/tag/v0.1.0}.}%
\footnote{We also implemented the SOCP problems \eqref{eq:erm} and \eqref{eq:erm-symm} using the Clarabel interior-point solver \citep{Clarabel2024}, which yielded similar results but with significantly longer computational time.}
The objective functions in the refinement steps were smoothed using the soft maximum approximation $\max_{k\in[K]}\alpha_k \approx \mu\ln\big(\sum_{k\in[K]} e^{\alpha_k/\mu}\big)$ for all $\alpha_1,\ldots,\alpha_K\in\setR$, with smoothing parameter $\mu \defeq 10^{-6}$.\footnote{More precisely, smoothing was applied only to the gradient computations, which perturbs the objective by at most $\mu\ln(K)$, a quantity we ignored.}
Since the gradients of these objective functions are either not Lipschitz or have very large Lipschitz constants (on the order of $1/\mu$), we stabilized the L-BFGS algorithm by reverting to a gradient step and resetting the L-BFGS memory whenever the backtracking line search failed.
To further improve numerical stability, the training data was centered and scaled to unit variance.

We evaluated DCF using the sets $\setFn(\cdot)$, $\setFn^-(\cdot)$, and $\setFn^\Delta(\cdot)$ for $\somenorm \in \{1,2,\infty,+\}$.
We denote the corresponding estimators by $\textrm{DCF}_\somenorm$, $\textrm{DCF}_\somenorm^-$, and $\textrm{DCF}_\somenorm^\Delta$, respectively.
As the results for $\somenorm \in \{1,2\}$ were similar to those for $\somenorm = \infty$, we only report the latter in the plots.
For reference, we denote by $\textrm{i-DCF}_+^\Delta$ the initial estimator based on $\setF_+^\Delta(\cdot)$, i.e., before applying the refinement step \eqref{eq:erm-local-symm}.
We also compare against the max-min-affine estimator based on $\setF_\infty(\cdot)$, denoted $\textrm{MMA}$, and its symmetrized variant, denoted $\textrm{MMA}^\Delta$, which are described in \cref{sec:max-min-affine}.
We use regularization parameters $\theta_0 = (R_{\setY_n}/\max\{1,R_{\setX_n}\})\ln(n)$, $\theta_1 = \max\{1,R_{\setX_n}^2\}(dK/n)$, $\theta_2 \in \big\{(R_{\setX_n}/n)^2, R_{\setX_n}^2/n\big\}$, and $\theta_3 = \ln(n)$, which satisfy \eqref{eq:dcf-params}.
Unless otherwise indicated, we use the stronger regularizer $\theta_2 = R_{\setX_n}^2/n$.

The results are summarized in \cref{fig:results}.
Across all problems, DCF (and MMA) using the symmetric sets ($\textrm{DCF}_\somenorm^\Delta$) achieved lower and more stable MSEs than their other variants with less expressive function representations.
\begin{figure}[t]
  \begin{center}
    \includegraphics[width=0.32\textwidth]{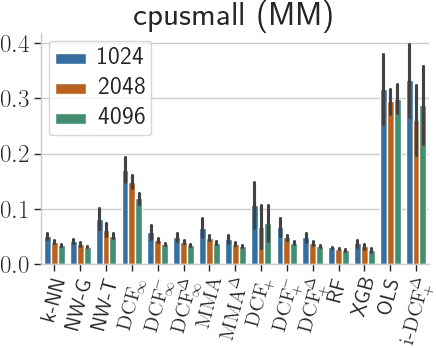}
    \includegraphics[width=0.32\textwidth]{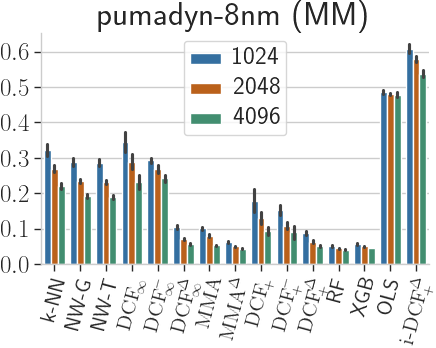}
    \includegraphics[width=0.32\textwidth]{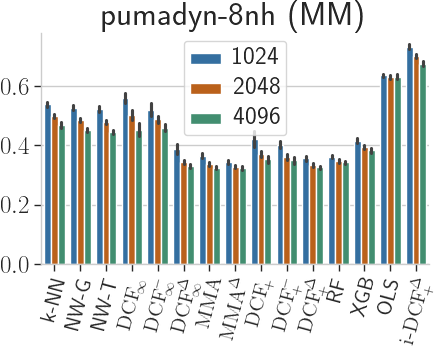}
    \includegraphics[width=0.32\textwidth]{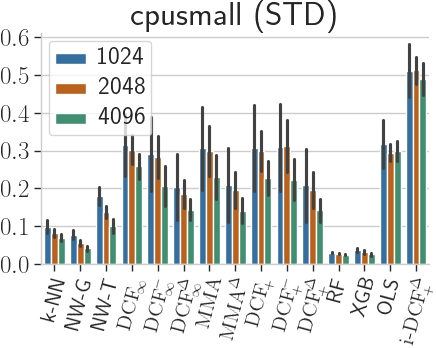}
    \includegraphics[width=0.32\textwidth]{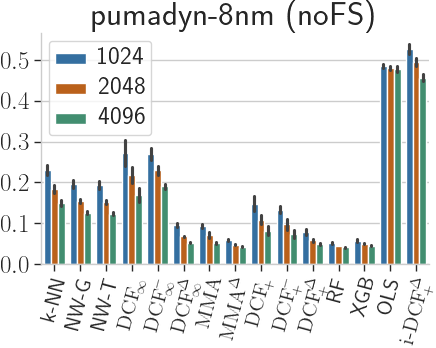}
    \includegraphics[width=0.32\textwidth]{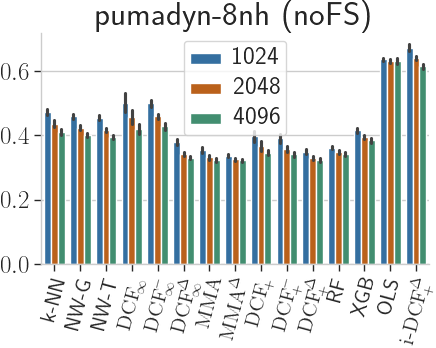}
  \end{center}
  \vspace{-5mm}
  \caption{Test MSEs of the estimators trained on sample sizes $n \in \{1024, 2048, 4096\}$. The performance is very similar across all estimators for both MM and STD scalings of the \texttt{pumadyn} datasets; therefore, the plots for the latter are omitted for brevity.}
  \label{fig:results}
  \vspace{-5mm}
\end{figure}
These symmetrized DCF variants performed at least as well as the other theoretically justified methods ($k$-NN and NW) on all problems, except for the \texttt{cpusmall} dataset with STD scaling.
In that case, \cref{fig:prbKs} shows that AFPC stopped earlier, producing only half as many cells as in the MM case, which led DCF to underfit using the stronger regularizer $\theta_2 = R_{\setX_n}^2/n$.
The tree-based methods (RF and XGB) performed quite well on these datasets, and DCF nearly matched their performance in many cases, unlike $k$-NN and NW.
Since the initial estimator $\textrm{i-DCF}_+^\Delta$ is forced to fit a continuous function only over the center points in \eqref{eq:erm} and \eqref{eq:erm-symm}, it performed worse than the OLS baseline on these sample sizes, making the refinement step strongly recommended.

The training times of the estimators are shown in the left panel of \cref{fig:dcf-stats} for the \texttt{pumadyn-8nm} dataset,\footnote{We used an Intel Core i5-2400S CPU with 4 cores at 2.50 GHz and ran two experiments in parallel.} which represents the least favorable case for DCF due to the large AFPC-computed partition size $K$ (see \cref{fig:prbKs}).
On this dataset, the training time of the most expensive DCF variants ($\textrm{DCF}_\infty^\Delta$ and $\textrm{DCF}_+^\Delta$) is close to the training time of the NW algorithms (around 5 minutes for $n = 4096$).
The refinement step in training MMA estimators takes significantly longer, as they use $d$ times more parameters than the corresponding DCF variants.
On the \texttt{cpusmall} dataset with MM scaling, where AFPC yields relatively few cells compared to the upper bound, DCF trains faster than NW (under 2 minutes for DCF, versus about 6 minutes for NW).
\begin{figure}[t]
  \begin{center}
    \includegraphics[width=0.32\textwidth]{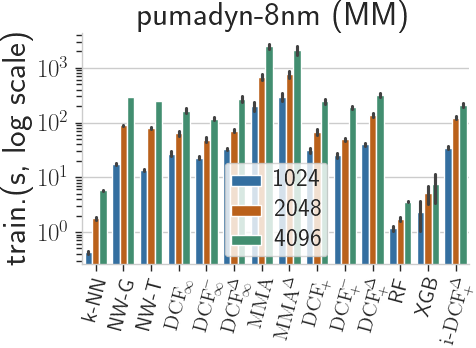}
    \includegraphics[width=0.32\textwidth]{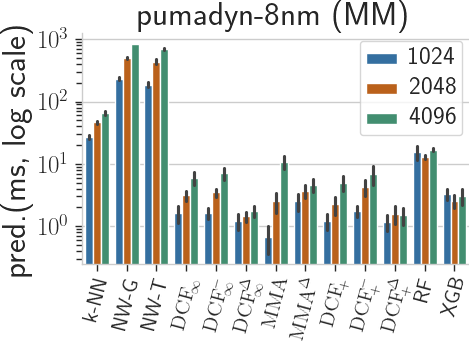}
    \includegraphics[width=0.32\textwidth]{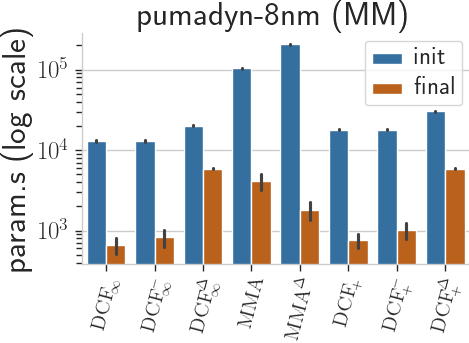}
  \end{center}
  \vspace{-5mm}
  \caption{Training and prediction times (in seconds and milliseconds, respectively) are shown for the \texttt{pumadyn-8nm} dataset with MM scaling in the left and center panels.
    Prediction times are measured on the entire test set (whose size varies with $n$) and normalized to $1000$ samples.
    The right panel shows the number of parameters used by the initial ($f_n$) and final ($\fni$) DCF estimators. For MMA, we show the parameter count of $m_n$ and of its final refinement (\cref{sec:max-min-affine}).}
  \label{fig:dcf-stats}
\end{figure}
The center panel of \cref{fig:dcf-stats} shows that the inference time of DCF on the \texttt{pumadyn-8nm} dataset is quite fast.
The right panel explains why:~a large portion of DCF's original parameters remain unused and are pruned in the final step of constructing the estimator, as described in \eqref{eq:dcf-estimator} and \eqref{eq:dcf-estimator-symm}.

Although \cref{thm:near-minimax-rate} holds for all $\theta_2 \in \big[0,\max\{1,R_{\setX_n}^2\}(d/n)\big]$, as defined in \eqref{eq:dcf-params}, \cref{fig:results-small} shows that the practical performance of DCF is sensitive to this parameter.
\begin{figure}[t!]
  \begin{center}
    \includegraphics[width=0.32\textwidth]{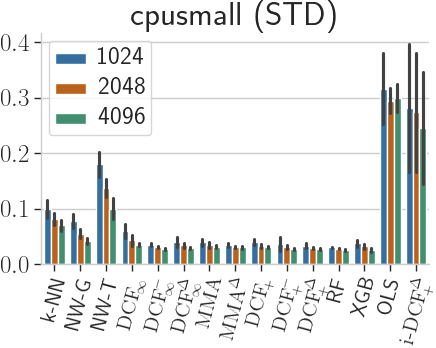}
    \includegraphics[width=0.32\textwidth]{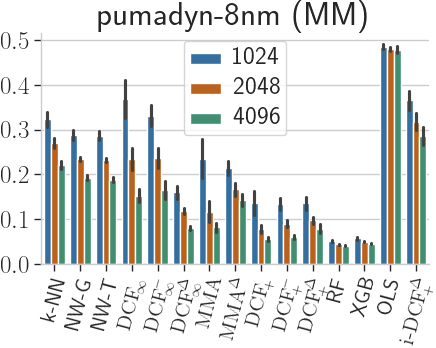}
    \includegraphics[width=0.32\textwidth]{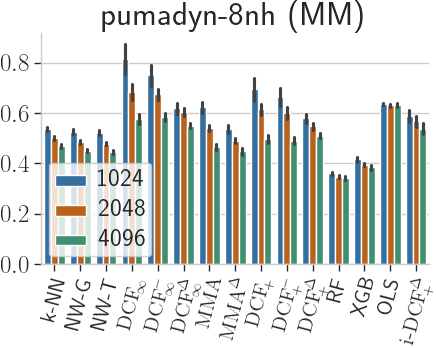}
  \end{center}
  \vspace{-5mm}
  \caption{Test MSEs, using the same notations as above, where all DCF models are trained with the weaker regularizer $\theta_2 = (R_{\setX_n}/n)^2$.}
  \label{fig:results-small}
\end{figure}
Using the weaker regularizer $\theta_2 = (R_{\setX_n}/n)^2$, the left panel shows that all DCF methods perform well on the \texttt{cpusmall} dataset with STD scaling (unlike using the stronger regularizer in \cref{fig:results}).
However, as seen in the center and right panels, this choice allows the DCF algorithms to overfit on the \texttt{pumadyn} datasets, most notably on the noisier \texttt{pumadyn-8nh} variant shown in the right panel.

\section{Analysis of DCF}
\label{sec:analysis}

This section is dedicated to the proof of \cref{thm:near-minimax-rate}.
First, \cref{sec:approxL} presents our main approximation result in \cref{thm:approxL}, which underpins the analysis of DCF (\cref{alg:DCF}).
Then, in \cref{sec:dcf-derivation}, we examine the properties of the DCF algorithm by relating it to empirical risk minimization.
Finally, after briefly reviewing the guarantees of AFPC (\cref{alg:AFPC}) in \cref{sec:afpc-guarantees}, we apply techniques from empirical process theory in \cref{sec:proof-near-minimax-rate} to establish the near-minimax rate.

\subsection{Approximation of Lipschitz Functions}
\label{sec:approxL}

For a metric space $(\setZ,\psi)$, some $\epsilon > 0$, and some $\setZ_0 \subseteq \setZ$, the finite set $\{\vz_k \in \setZ_0 : k\in[K]\}$ is called an (internal) \emph{$\epsilon$-cover} of $\setZ_0$ \wrt\ the metric $\psi$ if the union of the $\epsilon$-balls centered at $\vz_k$ covers $\setZ_0$, that is $\setZ_0 \subseteq \cup_{k\in[K]}\{\vz\in\setZ : \psi(\vz_k,\vz) \le \epsilon\}$.
The cardinality of the smallest such cover is called the \emph{$\epsilon$-covering number} of $\setZ_0$ \wrt\ $\psi$ and denoted by $N_\psi(\setZ_0,\epsilon)$.

Let $\setF_{\lip,\setX}$ denote the class of $\lip$-Lipschitz functions on a set $\setX \subseteq \setR^d$ \wrt\ $\norm{\cdot}$, defined~as
\begin{equation*}
  \setF_{\lip,\setX} \defeq \Big\{ f : \setX \to \setR \,\Big|\, \sup_{\vx,\vxhat\in\setX,\vx\ne\vxhat}\norm{\vx-\vxhat}^{-1}\big(f(\vx)-f(\vxhat)\big) \le \lip\Big\} .
\end{equation*}
\citet[Theorem~1]{McShane1934} showed that every $\lip$-Lipschitz function $f : \setX \to \setR$ on some $\setX \subset \setR^d$ can be extended to $\setR^d$ via the function $\tilde{f}(\cdot) \defeq \sup_{\vxhat\in\setX}f(\vxhat)-\lip\norm{\,\cdot\,-\vxhat}$, satisfying $\tilde{f}(\vx) = f(\vx)$ for all $\vx \in \setX$.
The key observation in this paper is the uniform $\Ordo(\lip\epsilon)$ approximation bound in \cref{thm:approxL}, which replaces the supremum in $\tilde{f}$ with a maximum over a finite $\epsilon$-cover of $\setX$.
\begin{theorem} \label{thm:approxL}
  Let $\setX_\epsilon \subseteq \setX$ be an $\epsilon$-cover of $\setX \subset \setR^d$ \wrt\ $\norm{\cdot}$, and $\norm{\cdot}_{\somenorm}$ be a norm on $\setR^d$ such that $t_0\norm{\cdot}_{\somenorm} \le \norm{\cdot} \le t_1\norm{\cdot}_{\somenorm}$ with some constants $t_0, t_1 > 0$.
  Suppose $f \in \setF_{\lip,\setX}$ for some Lipschitz constant $\lip > 0$, and define $\hat{f}(\vx) \defeq \max_{\vxhat\in\setX_\epsilon}f(\vxhat) - t_1\lip\norm{\vx-\vxhat}_\somenorm$ for all $\vx \in \setR^d$.
  Then, for all $\vx\in\setX$ and all $\vxtilde\in\setX_\epsilon$, the following hold:
  \begin{equation*}
    0 \le f(\vx)-\hat{f}(\vx) \le (1 + t_0^{-1}t_1) \lip \epsilon
    ,\qquad
    \hat{f}(\vxtilde) = f(\vxtilde)
    ,\qquad
    \hat{f} \in \setF_{(t_1/t_0)\lip,\setR^d} .
  \end{equation*}
\end{theorem}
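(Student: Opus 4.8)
The plan is to follow McShane's extension argument, but with two modifications that the setup forces on us: the supremum over $\setX$ is replaced by a maximum over the finite cover $\setX_\epsilon$, and the Euclidean Lipschitz slope $\lip$ is replaced by the inflated $\somenorm$-slope $t_1\lip$. Choosing the coefficient $t_1\lip$ (rather than plain $\lip$) is the key design decision and is what makes the whole argument close: since $\norm{\cdot} \le t_1\norm{\cdot}_\somenorm$, this inflated slope dominates the Euclidean Lipschitz rate of $f$, which is precisely what yields the one-sided inequality $\hat{f} \le f$. Before starting, I would note that although the definition of $\setF_{\lip,\setX}$ bounds only the signed difference quotient, swapping the roles of $\vx$ and $\vxhat$ shows it is equivalent to the two-sided bound $|f(\vx)-f(\vxhat)| \le \lip\norm{\vx-\vxhat}$, which I use freely.

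For the lower bound $0 \le f(\vx)-\hat{f}(\vx)$, I would fix any $\vx\in\setX$ and any term $\vxhat\in\setX_\epsilon\subseteq\setX$ in the maximum. The Lipschitz property gives $f(\vxhat)-f(\vx)\le\lip\norm{\vx-\vxhat}$, and the comparison $\norm{\vx-\vxhat}\le t_1\norm{\vx-\vxhat}_\somenorm$ turns this into $f(\vxhat)-t_1\lip\norm{\vx-\vxhat}_\somenorm \le f(\vx)$. Taking the maximum over $\vxhat\in\setX_\epsilon$ yields $\hat{f}(\vx)\le f(\vx)$. The interpolation identity $\hat{f}(\vxtilde)=f(\vxtilde)$ then follows at once: the lower bound (at $\vx=\vxtilde\in\setX$) gives $\hat{f}(\vxtilde)\le f(\vxtilde)$, while keeping only the term $\vxhat=\vxtilde$ in the maximum gives $\hat{f}(\vxtilde)\ge f(\vxtilde)-t_1\lip\norm{\vxtilde-\vxtilde}_\somenorm = f(\vxtilde)$.

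For the upper bound, I would use the covering property: given $\vx\in\setX$, choose a cover point $\vxhat\in\setX_\epsilon$ with $\norm{\vx-\vxhat}\le\epsilon$. Discarding the other terms in the maximum gives $\hat{f}(\vx)\ge f(\vxhat)-t_1\lip\norm{\vx-\vxhat}_\somenorm$. Bounding $f(\vxhat)\ge f(\vx)-\lip\epsilon$ via the Lipschitz property and $\norm{\vx-\vxhat}_\somenorm\le t_0^{-1}\norm{\vx-\vxhat}\le t_0^{-1}\epsilon$ via the other half of the norm comparison yields $\hat{f}(\vx)\ge f(\vx)-(1+t_0^{-1}t_1)\lip\epsilon$, which is the claim.

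Finally, for the global Lipschitz claim $\hat{f}\in\setF_{(t_1/t_0)\lip,\setR^d}$, I would observe that each map $\vx\mapsto f(\vxhat)-t_1\lip\norm{\vx-\vxhat}_\somenorm$ is $t_1\lip$-Lipschitz \wrt\ $\norm{\cdot}_\somenorm$ by the reverse triangle inequality, and a pointwise maximum of functions sharing a common Lipschitz constant retains that constant. Converting back to the Euclidean norm via $\norm{\cdot}_\somenorm\le t_0^{-1}\norm{\cdot}$ replaces $t_1\lip$ by $(t_1/t_0)\lip$, giving the stated bound on all of $\setR^d$. The only real obstacle here is bookkeeping: each of the three conclusions needs a different one of the two norm inequalities ($\norm{\cdot}\le t_1\norm{\cdot}_\somenorm$ for the lower bound, $\norm{\cdot}_\somenorm\le t_0^{-1}\norm{\cdot}$ for the upper and Lipschitz bounds), so I would be careful to invoke the correct direction at each step rather than a single blanket equivalence.
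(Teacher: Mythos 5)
Your proof is correct and follows essentially the same route as the paper's: the one-sided bound $\hat{f}\le f$ from $\norm{\cdot}\le t_1\norm{\cdot}_\somenorm$ and the Lipschitz property, the upper bound from the covering property together with $\norm{\cdot}_\somenorm\le t_0^{-1}\norm{\cdot}$, the interpolation identity by keeping the term $\vxhat=\vxtilde$, and the global Lipschitz claim via the $1$-Lipschitzness of the max and the norm comparison. The only difference is presentational (your explicit remark that the signed difference-quotient definition of $\setF_{\lip,\setX}$ is equivalent to the two-sided bound), which the paper uses implicitly.
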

\begin{proof}
  Take any $\vx \in \setX$ and $\vxtilde \in \setX_\epsilon$ arbitrarily.
  Since $f \in \setF_{\lip,\setX}$ and $\norm{\cdot} \le t_1 \norm{\cdot}_{\somenorm}$, we get
  $\hat{f}(\vx) - f(\vx) = \max_{\vxhat\in\setX_\epsilon}f(\vxhat) - f(\vx) - t_1\lip\norm{\vx - \vxhat}_{\somenorm} \le 0$.
  Similarly, $\hat{f}(\vxtilde) - f(\vxtilde) = \max_{\vxhat\in\setX_\epsilon}f(\vxhat) - f(\vxtilde) - t_1\lip\norm{\vxtilde - \vxhat}_{\somenorm} \ge 0$, hence $\setX_\epsilon \subseteq \setX$ implies $\hat{f}(\vxtilde) = f(\vxtilde)$.
  By the $\epsilon$-covering condition, we have $\min_{\vxhat\in\setX_\epsilon}\norm{\vx-\vxhat} \le \epsilon$.
  Combining this with $\norm{\cdot}_{\somenorm} \le t_0^{-1} \norm{\cdot}$, we obtain
  $f(\vx) - \hat{f}(\vx) = \min_{\vxhat\in\setX_\epsilon}f(\vx) - f(\vxhat) + t_1\lip\norm{\vx-\vxhat}_{\somenorm} \le (1 + t_0^{-1}t_1) \lip \epsilon$.
  Moreover, $\hat{f}$ is $((t_1/t_0)\lip)$-Lipschitz on $\setR^d$ \wrt\ $\norm{\cdot}$ because the $\max$ function is 1-Lipschitz and $\norm{\cdot}_{\somenorm}$ is $(t_0^{-1})$-Lipschitz on $\setR^d$ \wrt\ $\norm{\cdot}$.
  That is, $\hat{f} \in \setF_{(t_1/t_0)\lip,\setR^d}$.
\end{proof}

\cref{thm:approxL} provides a lower approximation of $f \in \setF_{\lip,\setX}$ by a ``max-concave'' function $\hat{f}$ that satisfies $\hat{f} \le f$ on $\setX$.
Similarly, one can construct an upper approximation of $f$ using a ``min-convex'' function \citep{HiriartUrruty1980}, defined by $\check{f}(\vx) \defeq \min_{\vxhat\in\setX_\epsilon}f(\vxhat) + t_1\lip\norm{\vx-\vxhat}_\somenorm$ for all $\vx\in\setR^d$.
We also present a variant of \cref{thm:approxL} with an improved approximation rate for smooth functions in \cref{sec:approxS}.

The approximation error of $\hat{f}$ and $\check{f}$ \wrt\ $f$ is zero at the points in $\setX_\epsilon$, and increases proportionally with the distance from those points.
\cref{fig:approxL} illustrates several examples of these approximations.
\begin{figure}[b!]
  \begin{center}
    \fbox{\includegraphics[width=0.24\textwidth]{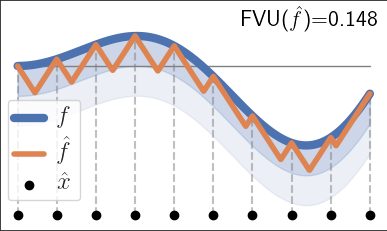}}
    \fbox{\includegraphics[width=0.24\textwidth]{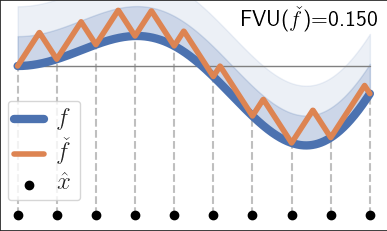}}
    \fbox{\includegraphics[width=0.24\textwidth]{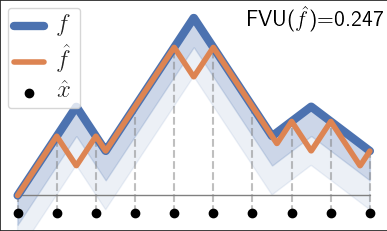}}
    \fbox{\includegraphics[width=0.24\textwidth]{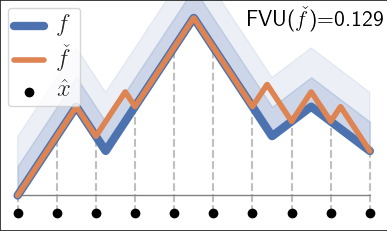}}
  \end{center}
  \vspace{-5mm}
  \caption{
    Approximation of a function $f \in \setF_{\lip,\setX}$ by the max-concave $\hat{f}$ and the min-convex $\check{f}$ as defined above.
    The left two plots use $f(x) = x\sin(x)$, while the right two plots use $f(x) = \max\{1-|x-1|,2-|x-3|,1-|x-5|/2\}$, both on $\setX = [0,6]$.
    The shaded regions represent $\lip\epsilon$ and $2\lip\epsilon$ bounds around $f$.
    Black circles mark the locations of the $10$ equidistant centers $\setX_\epsilon$, forming an $\epsilon$-cover of $\setX$ with $\epsilon = 1/3$.
    The horizontal line is shown at the height of zero.
    FVU (fraction of variance unexplained) is calculated over $n = 1000$ equidistant points \mbox{$\vx_1,\ldots,\vx_n \in \setX$} with $y_i = f(\vx_i)$ as: $\textrm{FVU}(\hat{f}) \defeq \sum_{i\in[n]}(y_i-\hat{f}(\vx_i))^2/\sum_{i\in[n]}(y_i-\bar{y})^2$, where $\bar{y} \defeq (1/n)\sum_{i\in[n]}y_i$.}
  \label{fig:approxL}%
\end{figure}
The functions $\hat{f}$ and $\check{f}$ provide an $\Ordo(\lip\epsilon)$ approximation rate of the $\lip$-Lipschitz function $f$ while maintaining an $\Ordo(\lip)$ Lipschitz constant, as guaranteed by \cref{thm:approxL}.
Their appropriate variants (using the norm $\norm{\cdot}_\somenorm$) are included in the function sets $\setFn(\setXhat)$ and $\setF^-_\somenorm(\setXhat)$ for $\somenorm \in \{1,2,\infty\}$.
Hence, these classes, as well as their superclasses such as $\setF_+(\setXhat)$ and $\setFn^\Delta(\setXhat)$, inherit the same guarantees.

\subsection{Properties of DCF Estimators}
\label{sec:dcf-derivation}

Let $\somenorm \in \{1,2,\infty,+\}$, $k_0 \in \setN$, and for any nonempty, finite set $\setXhat \defeq \{\vxhat_1,\ldots,\vxhat_{k_0}\}$, define the function class
\begin{equation*} \begin{aligned}
  \setG_{\somenorm}(\setXhat)
  \defeq \Big\{ g : \setR^d \to \setR \,\Big|\, g(\vx) &\defeq \hspace{-1mm} \sum_{k\in[k_0]}\ind\big\{\vx\in\setC_k(\setXhat)\big\} \, h_{g,k}(\vx) ,\, \vx\in\setR^d ,\\
  h_{g,k}(\vx) &\defeq b_{g,k} + \vw_{g,k}^\T\,\phin(\vx,\vxhat_k) ,\,
  b_{g,k} \in \setR ,\, \vw_{g,k} \in \setR^{d_\somenorm} ,\, k\in[k_0] \Big\} .
\end{aligned} \end{equation*}
Note that $\setG_{\somenorm}(\setXhat)$ is a vector space over $\setR$,
since for any $g_1,g_2\in\setG_\somenorm(\setXhat)$ and $c_1,c_2 \in \setR$,
the function $c_1 g_1 + c_2 g_2$ also belongs to $\setG_\somenorm(\setXhat)$, with $b_{c_1 g_1 + c_2 g_2, k} = c_1b_{g_1,k}+c_2b_{g_2,k}$ and $\vw_{c_1 g_1 + c_2 g_2,k} = c_1\vw_{g_1,k}+c_2\vw_{g_2,k}$ for all $k\in[k_0]$.

Since the parameter spaces of $\setFn(\setXhat)$ and $\setG_\somenorm(\setXhat)$ coincide, we can extend $\regz_{\theta_0,\theta_1,\theta_2}(f)$ and $\lipf_f$ from \cref{sec:dcf}, originally defined on $\setFn(\setXhat)$, to any function $f \in \setFn(\setXhat) \cup \setG_\somenorm(\setXhat)$.
We also define the map $\proj : \setG_\somenorm(\setXhat) \to \setFn(\setXhat)$ by $\proj(g)(\vx) \defeq \max_{k\in[k_0]}h_{g,k}(\vx)$ for all $\vx\in\setR^d$ and $g \in \setG_\somenorm(\setXhat)$.
Clearly, for all $g \in \setG_\somenorm(\setXhat)$, we have $b_{\proj(g),k} = b_{g,k}$ and $\vw_{\proj(g),k} = \vw_{g,k}$ for all $k\in[k_0]$, and $\proj(g) \ge g$ on $\setR^d$ holds since $\setC_1(\setXhat), \ldots, \setC_{k_0}(\setXhat)$ are pairwise disjoint.

Consider the setting of \cref{thm:near-minimax-rate}, and let $\setXhat_K = \{\vXhat_1,\ldots,\vXhat_K\} \subseteq \setX_n$ be the set of center points computed by AFPC (\cref{alg:AFPC}).
Since $\phin(\vxhat,\vxhat) = \vzero_{d_\somenorm}$ for any $\vxhat \in \setR^d$ and $\vXhat_k \in \setC_k(\setXhat_K)$ for all $k\in[K]$, we obtain $g(\vXhat_k) = h_{g,k}(\vXhat_k) = b_{g,k}$ for all $g \in \setG_\somenorm(\setXhat_K)$ and $k\in[K]$.
Thereby, the constraints $b_{g,k} \ge h_{g,l}(\vXhat_k)$, $k, l \in [K]$, are equivalent to \mbox{$g(\vXhat_k) \ge \proj(g)(\vXhat_k)$}, $k\in[K]$, which together with $\proj(g) \ge g$ implies that the functions $g$ and $\proj(g)$ coincide on the set $\setXhat_K$.
Moreover, since the index sets $\{i \in [n] : \vX_i \in \setC_k(\setXhat_K)\}$, $k\in[K]$, partition $[n]$, the data-fitting term of \eqref{eq:erm} equals $\risk_n(g)$, and $\regz_{\theta_0,\theta_1,\theta_2}(g)$ coincides with the regularization of \eqref{eq:erm} with $z = (\lipf_g - \theta_0)_+$.
Therefore, the convex optimization problem \eqref{eq:erm} in DCF (\cref{alg:DCF}) is equivalent to the following regularized empirical risk minimization task:
{
  \setlength{\abovedisplayskip}{\dimexpr\abovedisplayskip-2mm\relax}
  \setlength{\belowdisplayskip}{\dimexpr\belowdisplayskip-2mm\relax}
  \setlength{\abovedisplayshortskip}{\dimexpr\abovedisplayshortskip-2mm\relax}
  \setlength{\belowdisplayshortskip}{\dimexpr\belowdisplayshortskip-2mm\relax}
  \begin{equation} \label{eq:erm-G}
    \min_{g\in\setG_\somenorm(\setXhat_K)} \risk_n(g) + \regz_{\vtheta}(g) \textrm{ such that $g = \proj(g)$ on $\setXhat_K$} ,
  \end{equation}
}%
where $\regz_{\vtheta}(\cdot) \defeq \regz_{\theta_0,\theta_1,\theta_2}(\cdot)$.
Let $g_n$ be a solution to \eqref{eq:erm-G} that matches the solution of \eqref{eq:erm} in parameters, i.e., $b_{g_n,k} = b_{n,k}$ and $\vw_{g_n,k} = \vw_{n,k}$ for all $k \in [K]$.
Then, by the definition of the initial DCF estimator $f_n$ in \cref{sec:dcf}, we have $f_n = \proj(g_n)$.

We will also need the following properties of the feature vector $\phin$: its output norm is bounded by a constant multiple of the Euclidean distance between its arguments, and $\phin$ is Lipschitz in each of its two arguments, as shown in the next result:

\begin{lemma} \label{thm:phi-props}
  Let $\somenorm \in \{1,2,\infty,+\}$. Then for all $\vx, \vxhat, \vxtilde \in \setR^d$, the following hold:
  {
    \setlength{\abovedisplayskip}{\dimexpr\abovedisplayskip-1mm\relax}
    \setlength{\belowdisplayskip}{\dimexpr\belowdisplayskip-1mm\relax}
    \setlength{\abovedisplayshortskip}{\dimexpr\abovedisplayshortskip-1mm\relax}
    \setlength{\belowdisplayshortskip}{\dimexpr\belowdisplayshortskip-1mm\relax}
    \begin{equation*}
      \norm{\phin(\vx,\vxhat)} \le \cphi\norm{\vx-\vxhat}
      ,\quad
      \max\hspace{-0.5mm} \big\{
      \norm{\phin(\vx,\vxhat) - \phin(\vx,\vxtilde)},
      \norm{\phin(\vxhat,\vx) - \phin(\vxtilde,\vx)}
      \big\}
      \le \lipphi \norm{\vxhat - \vxtilde}
      ,
    \end{equation*}
  }%
  where $\cphi \defeq \sqrt{1 + \ind\{\somenorm \in \{2,\infty\}\} + d\,\ind\{\somenorm = 1\}}$, and $\lipphi \defeq 1 + \ind\{\somenorm \ne 1\} + \sqrt{d}\,\ind\{\somenorm = 1\}$.
\end{lemma}
\begin{proof}
  For $\somenorm \in \{1,2,\infty\}$, the first claim follows from the inequalities $\norm{\cdot}_\infty \le \norm{\cdot}$ and $\norm{\cdot}_1 \le \sqrt{d}\norm{\cdot}$, since $\norm{\phin(\vx,\vxhat)}^2 = \norm{\vx-\vxhat}^2 + \norm{\vx-\vxhat}^2_\somenorm \le \cphi^2 \norm{\vx-\vxhat}^2$.

  For $\somenorm = +$, we have $\norm{\phi_+(\vx,\vxhat)}^2 = \norm{(\vx-\vxhat)_+}^2 + \norm{(\vxhat-\vx)_+}^2 = \norm{\vx-\vxhat}^2$, implying the first claim with $\tau_{\phi_+} = 1$.

  Next we prove that $\phin$ is $\lipphi$-Lipschitz in its second argument.
  The claim that it is also $\lipphi$-Lipschitz in its first argument follows analogously.

  For $\somenorm \in \{1,2,\infty\}$, we have $\phin(\vx,\vxhat)-\phin(\vx,\vxtilde) = \big[(\vxtilde-\vxhat)^\T\,\,\,(\norm{\vx-\vxhat}_\somenorm-\norm{\vx-\vxtilde}_\somenorm)\big]^\T$.
  From this, we prove the second claim using $\norm{[\vu^\T\,s]^\T} \le \norm{\vu} + |s|$ and the reverse triangle inequality $|\norm{\vu}_\somenorm - \norm{\vv}_\somenorm| \le \norm{\vu - \vv}_\somenorm \le (\lipphi-1)\norm{\vu-\vv}$, which hold for all $\vu,\vv \in \setR^d$, $s \in \setR$.

  For $\somenorm = +$, $\phi_+(\vx,\vxhat)-\phi_+(\vx,\vxtilde) = \big[\big((\vx-\vxhat)_+-(\vx-\vxtilde)_+\big)^\T\,\,\,\big((\vxhat-\vx)_+-(\vxtilde-\vx)_+\big)^\T\big]^\T$.
  Then, using $\norm{[\vu^\T\,\vv^\T]^\T} \le \norm{\vu} + \norm{\vv}$ and $\norm{(\vu)_+ - (\vv)_+} \le \norm{\vu - \vv}$ for all $\vu, \vv\in\setR^d$, the second claim follows with $\lip_{\phi_+} = 2$.
\end{proof}

The functions in $\setFn(\setXhat_K)$ depend nonlinearly on the random center points $\setXhat_K$.
In order to apply concentration inequalities, we exploit the Lipschitz property of $\phin$ from \cref{thm:phi-props}, which enables us to ``approximate'' these random centers by fixed (non-random) ones in \cref{sec:conc-ineq}.

Recall that $\setXhat_K$ is an $\epsilon_n(\setXhat_K)$-cover of the covariate data $\setX_n$ by definition.
Then, the following result provides a uniform bound on the distance between $f_n$ and $g_n$ on $\setX_n$:
\begin{lemma} \label{thm:distance-fngn}
  For all $i \in [n]$, it holds that $0 \le f_n(\vX_i) - g_n(\vX_i) \le 2 \lipf_{f_n} \lipphi \epsilon_n(\setXhat_K)$.
\end{lemma}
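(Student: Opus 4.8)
The plan is to fix an index $i\in[n]$, let $j\in[K]$ be the index of the Voronoi cell containing $\vX_i$ (so that $\vX_i\in\setC_j(\setXhat_K)$ and hence $g_n(\vX_i)=h_{g_n,j}(\vX_i)$), and let $m\in\argmax_{k\in[K]}h_{g_n,k}(\vX_i)$ so that $f_n(\vX_i)=\proj(g_n)(\vX_i)=h_{g_n,m}(\vX_i)$. The lower bound $f_n(\vX_i)-g_n(\vX_i)\ge 0$ is then immediate from $\proj(g_n)\ge g_n$, which holds because the cells are pairwise disjoint, as already noted above. Since $\vXhat_j$ is a nearest center to $\vX_i$, the covering property gives $\norm{\vX_i-\vXhat_j}=\min_{\vXhat\in\setXhat_K}\norm{\vX_i-\vXhat}\le\epsilon_n(\setXhat_K)$, and this is the only geometric fact I will need.

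For the upper bound I would insert the center $\vXhat_j$ and telescope:
\[
  f_n(\vX_i)-g_n(\vX_i)
  = \big[h_{g_n,m}(\vX_i)-h_{g_n,m}(\vXhat_j)\big]
  + \big[h_{g_n,m}(\vXhat_j)-h_{g_n,j}(\vXhat_j)\big]
  + \big[h_{g_n,j}(\vXhat_j)-h_{g_n,j}(\vX_i)\big] .
\]
The crucial observation is that the middle bracket is nonpositive. Because $g_n$ is feasible for \eqref{eq:erm-G}, it satisfies $g_n=\proj(g_n)$ on $\setXhat_K$, which (as established above) is equivalent to $b_{g_n,k}\ge h_{g_n,l}(\vXhat_k)$ for all $k,l\in[K]$. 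Taking $k=j$, $l=m$ and using $\phi_\somenorm(\vXhat_j,\vXhat_j)=\vzero$ so that $h_{g_n,j}(\vXhat_j)=b_{g_n,j}$, we obtain $h_{g_n,m}(\vXhat_j)-h_{g_n,j}(\vXhat_j)\le 0$. This is the step that actually exploits feasibility of $g_n$ and cancels the gap between $f_n$ and $g_n$ at the center.

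Each of the two remaining brackets is the increment of a single piece $\vx\mapsto h_{g_n,k}(\vx)=b_{g_n,k}+\vw_{g_n,k}^\T\phi_\somenorm(\vx,\vXhat_k)$ between $\vX_i$ and $\vXhat_j$. I would bound these through the Lipschitz continuity of the feature map: by Cauchy--Schwarz, $|h_{g_n,k}(\vX_i)-h_{g_n,k}(\vXhat_j)|\le\norm{\vw_{g_n,k}}\,\norm{\phi_\somenorm(\vX_i,\vXhat_k)-\phi_\somenorm(\vXhat_j,\vXhat_k)}$, where the affine block of $\phi_\somenorm$ is nonexpansive and the norm block is controlled by the reverse triangle inequality (for $\somenorm\in\{1,2,\infty\}$) or by the coordinatewise nonexpansiveness of the ReLU (for $\somenorm=+$). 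Combined with $\norm{\vw_{g_n,k}}\le\Ln$ and $\norm{\vX_i-\vXhat_j}\le\epsilon_n(\setXhat_K)$, each bracket is controlled by $\Ln\,\epsilon_n(\setXhat_K)$, and summing the first and third brackets yields the claimed bound $2\Ln\epsilon_n(\setXhat_K)$.

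The delicate point is precisely this per-piece Lipschitz estimate, which must be verified uniformly across the four feature maps $\somenorm\in\{1,2,\infty,+\}$, since the constant multiplying $\Ln\epsilon_n(\setXhat_K)$ is pinned down by the Lipschitz factor of $\phi_\somenorm$ (with the $\somenorm=+$ case being the cleanest, as $\phi_+$ is coordinatewise nonexpansive). This is the part I would write out in full, whereas the telescoping identity, the nonpositivity of the center term via feasibility, and the covering bound on $\norm{\vX_i-\vXhat_j}$ are short and structural.
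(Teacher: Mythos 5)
Your proof is correct and follows essentially the same route as the paper's: the same telescoping through the Voronoi center $\vXhat_j$, the same use of the feasibility constraint $g_n=\proj(g_n)$ on $\setXhat_K$ to discard the middle (nonpositive) term, and the same Cauchy--Schwarz plus feature-map-Lipschitzness bound on the two remaining increments. The only caveat --- shared with the paper's own write-up --- is that the Lipschitz factor of $\vx\mapsto\phi_\somenorm(\vx,\vxhat)$ equals $1$ only for $\somenorm=+$ (it is $\sqrt{2}$ for $\somenorm\in\{2,\infty\}$ and $\sqrt{1+d}$ for $\somenorm=1$), so the literal constant $2$ should carry that factor; this is immaterial downstream, where only the $\Ordo\big(\Ln\,\epsilon_n(\setXhat_K)\big)$ order is used.
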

\begin{proof}
  The lower bound follows directly from $f_n = \proj(g_n) \ge g_n$.
  Fix $i \in [n]$ arbitrarily, and let $k \in [K]$ be such that $\vX_i \in \setC_k(\setXhat_K)$.
  From \eqref{eq:erm-G}, we have $g_n = \proj(g_n)$ on $\setXhat_K$, which implies $0 \le h_{g_n,k}(\vXhat_k) - h_{g_n,l}(\vXhat_k)$ for all $l\in[K]$.
  By the definition of $\proj$, we also have $h_{f_n,l}(\vx) = h_{g_n,l}(\vx)$ for all $l \in [K]$ and $\vx \in \setR^d$.
  Therefore:
  {
    \setlength{\abovedisplayskip}{\dimexpr\abovedisplayskip-2mm\relax}
    \setlength{\belowdisplayskip}{\dimexpr\belowdisplayskip-3mm\relax}
    \setlength{\abovedisplayshortskip}{\dimexpr\abovedisplayshortskip-2mm\relax}
    \setlength{\belowdisplayshortskip}{\dimexpr\belowdisplayshortskip-3mm\relax}
    \begin{equation*} \begin{split}
      f_n(\vX_i) - g_n(\vX_i)
      &= \max_{l\in[K]} h_{f_n,l}(\vX_i) - h_{g_n,k}(\vX_i)
      \\
      &\le \max_{l\in[K]} h_{g_n,l}(\vX_i) - h_{g_n,l}(\vXhat_k) + h_{g_n,k}(\vXhat_k) - h_{g_n,k}(\vX_i)
      \\
      &\le \max_{l\in[K]} \big(\norm{\vw_{n,l}} + \norm{\vw_{n,k}}\big) \lipphi \norm{\vX_i - \vXhat_k} ,
    \end{split} \end{equation*}
}%
where we used the Cauchy-Schwarz inequality and \cref{thm:phi-props} in the last step.
The claim then follows from the definitions of $\lipf_{f_n}$ and $\epsilon_n(\setXhat_K)$ using $\vX_i \in \setC_k(\setXhat_K)$.
\end{proof}

\vspace{-4mm}
\cref{thm:approxL} bounds the uniform approximation error of $\setFn(\setXhat_K)$ to the $\lip_*$-Lipschitz regression function $f_*$ by $\Ordo\big(\lip_* \epsilon_n(\setXhat_K)\big)$.
The next result transfers this bound to $\setG_\somenorm(\setXhat_K)$, thereby justifying its use in the ERM task~\eqref{eq:erm-G}, and allowing us to reformulate the problem as the tractable convex optimization task in \eqref{eq:erm}.

\begin{lemma} \label{thm:gstar}
  For $f_* \in \setF_{\lip_*,\setX_*}$, there exists a function $g_* \in \setG_\somenorm(\setXhat_K)$ such that $f_* \ge \proj(g_*) \ge g_*$ on $\setX_n$, and
  {
    \setlength{\abovedisplayskip}{\dimexpr\abovedisplayskip-2mm\relax}
    \setlength{\belowdisplayskip}{\dimexpr\belowdisplayskip-2mm\relax}
    \setlength{\abovedisplayshortskip}{\dimexpr\abovedisplayshortskip-2mm\relax}
    \setlength{\belowdisplayshortskip}{\dimexpr\belowdisplayshortskip-2mm\relax}
    \begin{equation*}
      \max_{i\in[n]} f_*(\vX_i) - g_*(\vX_i) \le \capx \, \lip_* \, \epsilon_n(\setXhat_K)
      ,\quad\qquad
      \textrm{$g_* = \proj(g_*)$ on $\setXhat_K$}
      ,\quad\qquad
      \lipf_{g_*} \le \clip \lip_*
      ,
    \end{equation*}
  }%
  where $\capx \defeq 1 + \ind\{\somenorm = 2\} + \ind\{\somenorm \ne 2\}\sqrt{d}$, and $\clip \defeq \ind\{\somenorm \in \{1,2\}\} + \ind\{\somenorm = \infty\}\sqrt{d} + \ind\{\somenorm = +\}\sqrt{2d}$.
\end{lemma}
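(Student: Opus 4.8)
The plan is to invoke \cref{thm:approxL} with $\setX = \setX_n \subseteq \setX_*$, cover $\setX_\epsilon = \setXhat_K$, radius $\epsilon = \epsilon_n(\setXhat_K)$, Lipschitz constant $\lip = \lip_*$, and the norm $\norm{\cdot}_\somenorm$, where for the $\somenorm = +$ case I apply the theorem with $\norm{\cdot}_1$ and later realize the result inside $\setF_+(\setXhat_K)$. Since $\vX_i \in \setX_*$ almost surely and $f_*$ is $\lip_*$-Lipschitz on $\setX_*$, its restriction to $\setX_n$ lies in $\setF_{\lip_*,\setX_n}$, and $\setXhat_K$ is an $\epsilon_n(\setXhat_K)$-cover of $\setX_n$ by the definition of $\epsilon_n$. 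First I would fix the Euclidean norm-equivalence constants for each choice of $\somenorm$: $(t_0,t_1) = (1,1)$ for $\somenorm = 2$, $(1,\sqrt d)$ for $\somenorm = \infty$, and $(1/\sqrt d,\,1)$ for the $\norm{\cdot}_1$ norm used when $\somenorm \in \{1,+\}$. These give $1 + t_1/t_0 = \capx$ and supply the building blocks for both the approximation bound and the slope magnitude.

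Next I would define $g_* \in \setG_\somenorm(\setXhat_K)$ cell-wise by setting $h_{g_*,k}(\vx) = f_*(\vXhat_k) - t_1\lip_*\norm{\vx-\vXhat_k}_\somenorm$, i.e. $b_{g_*,k} = f_*(\vXhat_k)$ and $\vw_{g_*,k}$ chosen so that $\vw_{g_*,k}^\T\phi_\somenorm(\vx,\vXhat_k) = -t_1\lip_*\norm{\vx-\vXhat_k}_\somenorm$; for $\somenorm \in \{1,2,\infty\}$ this is the single slope entry $v_{g_*,k} = -t_1\lip_*$ with zero linear part, and for $\somenorm = +$ I would take $\vw_{g_*,k} = -\lip_*\vone_{2d}$, exploiting $\vone_{2d}^\T\phi_+(\vx,\vXhat_k) = \norm{\vx-\vXhat_k}_1$. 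By construction $\proj(g_*)$ equals the McShane-type function $\hat{f}_*$ of \cref{thm:approxL}, so that theorem immediately yields $\proj(g_*) \le f_*$ on $\setX_n$ and $\proj(g_*)(\vXhat_k) = f_*(\vXhat_k)$; combined with $\proj(g_*) \ge g_*$ (which holds because the cells are disjoint), this gives the sandwich $f_* \ge \proj(g_*) \ge g_*$ over $\setX_n$. The coincidence $g_* = \proj(g_*)$ over $\setXhat_K$ then follows from the argument preceding the lemma, since the Lipschitz estimate $f_*(\vXhat_l) - f_*(\vXhat_k) \le \lip_*\norm{\vXhat_k-\vXhat_l} \le t_1\lip_*\norm{\vXhat_k-\vXhat_l}_\somenorm$ shows $b_{g_*,k} \ge h_{g_*,l}(\vXhat_k)$ for all $k,l\in[K]$.

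For the approximation bound I would argue directly at the Voronoi cell rather than through $\proj(g_*)$, because $g_* \le \proj(g_*)$ makes the \cref{thm:approxL} estimate point the wrong way for an upper bound on $f_* - g_*$. Fixing $\vX_i \in \setC_k(\setXhat_K)$, so that $\vXhat_k$ is its nearest center with $\norm{\vX_i-\vXhat_k} \le \epsilon_n(\setXhat_K)$, I would split
\[
  f_*(\vX_i) - g_*(\vX_i) = \big(f_*(\vX_i) - f_*(\vXhat_k)\big) + t_1\lip_*\norm{\vX_i-\vXhat_k}_\somenorm ,
\]
bound the first term by Euclidean Lipschitzness ($\le \lip_*\epsilon_n(\setXhat_K)$) and the second by $\norm{\cdot}_\somenorm \le t_0^{-1}\norm{\cdot}$ ($\le (t_1/t_0)\lip_*\epsilon_n(\setXhat_K)$), whose sum is exactly $\capx\lip_*\epsilon_n(\setXhat_K)$. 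Finally, the slope magnitude $\lip_{g_*} = \max_k\norm{\vw_{g_*,k}}$ is read off the explicit weights: $\norm{\vw_{g_*,k}} = t_1\lip_*$ equals $\lip_*$, $\sqrt d\,\lip_*$, $\lip_*$ for $\somenorm = 2,\infty,1$ respectively, and $\lip_*\norm{\vone_{2d}} = \sqrt{2d}\,\lip_*$ for $\somenorm = +$, matching $\clip\lip_*$ in every case.

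The main obstacle, which is really the only delicate bookkeeping, is the $\somenorm = +$ case together with the sharpness of $\capx$: one must realize the $\norm{\cdot}_1$ McShane term through the ReLU feature map $\phi_+$ and track that this inflates the Euclidean slope magnitude to $\sqrt{2d}\,\lip_*$, and one must extract the tight coefficient $1 + t_1/t_0$ from the cell-wise split above rather than the looser $2t_1/t_0$ that would arise from naively bounding $f_*(\vX_i) - f_*(\vXhat_k)$ in the $\norm{\cdot}_\somenorm$ metric.
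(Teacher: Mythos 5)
Your proposal is correct and follows essentially the same route as the paper's proof: the same choice of norm-equivalence constants $(t_0,t_1)$, the same cell-wise construction $h_{g_*,k}(\vx) = f_*(\vXhat_k) - t_1\lip_*\norm{\vx-\vXhat_k}_\somenorm$ with $\vw_{*,k} = -\lip_*\vone_{2d}$ for $\somenorm = +$, the same invocation of \cref{thm:approxL} for the sandwich, and the same direct cell-wise split yielding the tight coefficient $1 + t_1/t_0 = \capx$. The only cosmetic difference is that you verify $g_* = \proj(g_*)$ over $\setXhat_K$ via the Lipschitz inequality $f_*(\vXhat_l) - f_*(\vXhat_k) \le t_1\lip_*\norm{\vXhat_k-\vXhat_l}_\somenorm$, whereas the paper deduces it by setting $\vX_i = \vXhat_k$ in the approximation bound and using $\hat{f}_*(\vXhat_k) = f_*(\vXhat_k)$; both are valid one-line arguments.
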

\vspace{-1mm}
\begin{proof}
  Fix the norm $\somenorm \in \{1,2,\infty\}$, and define $t_0 \defeq \ind\{\somenorm \in \{2,\infty\}\} + \ind\{\somenorm = 1\}/\sqrt{d}$ and $t_1 \defeq \ind\{\somenorm \ne \infty\} + \ind\{\somenorm = \infty\}\sqrt{d}$.
  Then, $t_0\norm{\cdot}_\somenorm \le \norm{\cdot} \le t_1\norm{\cdot}_\somenorm$.

  Let $b_{*,k} \defeq f_*(\vXhat_k)$ and $\vw_{*,k} \defeq [\vzero_d^\T\,-t_1\lip_*]^\T$ for all $k\in[K]$.
  Define $\hat{f}_* \in \setFn(\setXhat_K)$ and $g_* \in \setG_\somenorm(\setXhat_K)$ such that $b_{\hat{f}_*,k} = b_{g_*,k} = b_{*,k}$ and $\vw_{\hat{f}_*,k} = \vw_{g_*,k} = \vw_{*,k}$ for all $k\in[K]$.
  Then, we have $\hat{f}_* = \proj(g_*) \ge g_*$.

  Let $\epsilon \defeq \epsilon_n(\setXhat_K)$.
  Since $\setXhat_K$ is an $\epsilon$-cover of $\setX_n$, we can apply \cref{thm:approxL} with $f = f_*$, $\hat{f} = \hat{f}_*$, $\setX = \setX_n$, and $\setX_\epsilon = \setXhat_K$.
  This yields $\hat{f}_*(\vX_i) \le f_*(\vX_i)$ for all $i\in[n]$, proving $f_* \ge \proj(g_*) \ge g_*$ on $\setX_n$.
  Now fix $i \in [n]$, and let $k \in [K]$ be such that $\vX_i \in \setC_k(\setXhat_K)$.
  By the definitions of $g_*$ and $\setC_k(\setXhat_K)$, we have $g_*(\vX_i) = f_*(\vXhat_k) - t_1 \lip_* \norm{\vX_i - \vXhat_k}_\somenorm$.
  Then, using $f_* \in \setF_{\lip_*,\setX_*}$, $\setX_n \subseteq \setX_*$, and $\norm{\cdot}_\somenorm \le t_0^{-1}\norm{\cdot}$, we obtain
  {
    \setlength{\abovedisplayskip}{\dimexpr\abovedisplayskip-1mm\relax}
    \setlength{\belowdisplayskip}{\dimexpr\belowdisplayskip-1mm\relax}
    \setlength{\abovedisplayshortskip}{\dimexpr\abovedisplayshortskip-1mm\relax}
    \setlength{\belowdisplayshortskip}{\dimexpr\belowdisplayshortskip-1mm\relax}
    \begin{equation} \label{eq:ghat-proof}
      f_*(\vX_i) - g_*(\vX_i)
      = f_*(\vX_i) - f_*(\vXhat_k) + t_1 \lip_* \norm{\vX_i - \vXhat_k}_\somenorm
      \le (1 + t_0^{-1}t_1) \lip_* \norm{\vX_i - \vXhat_k} ,
    \end{equation}
  }%
  which implies $f_*(\vX_i) - g_*(\vX_i) \le \capx \lip_* \epsilon$ for all $i\in[n]$ as $(1 + t_0^{-1}t_1) = \capx$ and $\norm{\vX_i - \vXhat_k} \le \epsilon$.

  \noindent
  Additionally, setting $\vX_i = \vXhat_k$ in \eqref{eq:ghat-proof} and using $f_* \ge g_*$ on $\setX_n$ yields $g_*(\vXhat_k) = f_*(\vXhat_k)$ for all $k\in[K]$.
  Then, combining this with $f_*(\vXhat_k) = \hat{f}_*(\vXhat_k)$ from \cref{thm:approxL}, we obtain $g_*(\vXhat_k) = f_*(\vXhat_k) = \hat{f}_*(\vXhat_k) = \proj(g_*)(\vXhat_k)$ for all $k\in[K]$, and thus $g_* = \proj(g_*)$ on $\setXhat_K$.
  Finally, $\lipf_{g_*} = \max_{k\in[K]}\norm{\vw_{*,k}} = t_1 \lip_* = \clip \lip_*$, which proves the claim for all $\somenorm \in \{1,2,\infty\}$.

  The case $\somenorm = +$ follows from the case $\somenorm = 1$ by using $\vw_{*,k}^\T\,\phi_+(\vx,\vxhat) = -\lip_*\norm{\vx-\vxhat}_1$ for all $\vx,\vxhat\in\setR^d$ with $\vw_{*,k} \defeq -\lip_*\vone_{2d}$, which satisfies $\norm{\vw_{*,k}} = \clip \lip_*$ for all $k\in[K]$.
\end{proof}

Note that $f_n$, $\fnih$, and $\fni$ are Lipschitz continuous with Lipschitz constants bounded by $(\lipf_{f_n}\lipphi)$, $(\lipf_{\fnih}\lipphi)$, and $(\lipf_{\fni}\lipphi)$, respectively.
In the refinement step \eqref{eq:erm-local}, it is important to ensure that the Lipschitz constant of $\fnih$ does not scale polynomially in the sample size $n$, as this would deteriorate the convergence rate of the DCF estimator.
In \cref{sec:Eapprox-bound}, we prove that the Lipschitz constant of $f_n$ grows logarithmically with $n$.
Therefore, we use $\lipf_{f_n}$ as reference to regularize in \eqref{eq:erm-local}, and the following result shows that the Lipschitz constant of the refined estimators $\fnih$ and $\fni$ can exceed $\lipf_{f_n}\lipphi$ by at most an $\Ordo(\theta_3)$ factor.

\begin{lemma} \label{thm:fniL}
  Let $\theta_3 \ge 1$. Then, $\lipf_{\fni} \le \lipf_{\fnih} \le (1+\theta_3) \lipf_{f_n}$.
\end{lemma}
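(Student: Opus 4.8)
The first inequality is essentially bookkeeping, so I would dispatch it immediately. By the construction of $\fni$ in \eqref{eq:dcf-estimator}, its slope vectors satisfy $\vw_{\fni,k} = \vw_{\fnih,k}$ for $k\in\inds$, and $\lipf_{\fni}$ is obtained by maximizing $\norm{\vw_{\fnih,k}}$ over the index set $\inds \subseteq [K]$ rather than over all of $[K]$. Since a maximum over a subset cannot exceed the maximum over the full set, this yields $\lipf_{\fni} \le \lipf_{\fnih}$ with no further work; all the content lies in the second inequality.

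For the bound $\lipf_{\fnih} \le (1+\theta_3)\Ln$, the plan is to combine the refinement inequality \eqref{eq:erm-local} with the special structure of the regularizer $\regz_n = \regz_{\theta_3\Ln,\theta_{f_n},\theta_2}$. First I would evaluate $\regz_n$ at $f_n$ itself: because $\theta_3 \ge 1$ and $\norm{\vw_{n,k}} \le \Ln$ for every $k$, each clipped term $(\norm{\vw_{n,k}}-\theta_3\Ln)_+$ vanishes, so the max-clip part of $\regz_n(f_n)$ is zero and $\regz_n(f_n) = \regz_{0,0,\theta_2}(f_n)$. Substituting the definition $\theta_{f_n} = \Ln^{-2}(\risk_n(f_n) + \regz_{0,0,\theta_2}(f_n))$ then collapses the right-hand side of \eqref{eq:erm-local} into the clean identity $\risk_n(f_n) + \regz_n(f_n) = \Ln^2\theta_{f_n}$ (valid for $\Ln > 0$). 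On the left-hand side I would discard the nonnegative terms $\risk_n(\fnih)$ and $\theta_2\sum_k\norm{\vw_{\fnih,k}}^2$, retaining only the max-clip contribution, to get $\theta_{f_n}\max_{k\in[K]}(\norm{\vw_{\fnih,k}}-\theta_3\Ln)_+^2 \le \Ln^2\theta_{f_n}$. Cancelling $\theta_{f_n}$, taking square roots (using that the max of squared nonnegative ReLU terms equals the square of their max), and unwinding the $(\cdot)_+$ gives $\norm{\vw_{\fnih,k}} \le (1+\theta_3)\Ln$ for every $k$, hence $\lipf_{\fnih} \le (1+\theta_3)\Ln$.

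The hard part will be the degenerate regimes where the cancellation of $\theta_{f_n}$ is not licensed, which I would isolate at the outset so the main argument runs over $\Ln > 0$, $\theta_{f_n} > 0$ cleanly. When $\Ln = 0$ the convention after \eqref{eq:erm-local} fixes $\fnih = f_n$, so $\lipf_{\fnih} = \Ln = 0 = (1+\theta_3)\Ln$ holds trivially. The remaining subtlety is that $\theta_{f_n}$ can still vanish while $\Ln > 0$, which forces $\risk_n(f_n) = 0$ and $\regz_{0,0,\theta_2}(f_n) = 0$ simultaneously; there the regularizer $\regz_n$ degenerates and any optimization initialized at $f_n$ simply returns $\fnih = f_n$, so I would again invoke that choice to close the bound. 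Flagging these boundary cases explicitly is the one genuine care point; the rest is the two-line cancellation above.
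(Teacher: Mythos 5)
Your argument is essentially the paper's proof: the paper runs the same computation as a one-line contradiction --- assuming $\lipf_{\fnih} > (1+\theta_3)\Ln$ forces $\regz_n(\fnih) \ge \theta_{f_n}\big(\lipf_{\fnih}-\theta_3\Ln\big)_+^2 > \Ln^2\theta_{f_n} = \risk_n(f_n)+\regz_n(f_n)$, contradicting \eqref{eq:erm-local} --- whereas you unwind the same cancellation directly; the content is identical. One caveat on the corner case you flag: resolving $\Ln>0$, $\theta_{f_n}=0$ by asserting that the optimization ``simply returns $\fnih=f_n$'' is not licensed by the definitions, since $\fnih$ is \emph{any} function satisfying \eqref{eq:erm-local}, and when $\theta_{f_n}=0$, $\theta_2=0$, and $\risk_n(f_n)=0$ that inequality places no constraint on the slopes, so closing this case requires an explicit convention analogous to the one the paper adopts for $\Ln=0$; you were right to isolate it, though, because the paper's own strict inequality also silently fails there.
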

\begin{proof}
  By definition $\lipf_{\fni} \le \lipf_{\fnih}$, and $\theta_3 \ge 1$ implies $\regz_n(f_n) = \regz_{0,0,\theta_2}(f_n)$.
  In the degenerate cases when $\lipf_{f_n} = 0$ or $\theta_{f_n} = 0$, we have $\fnih = f_n$ by definition, and the claim is trivial.

  Let $\lipf_{f_n} > 0$ and $\theta_{f_n} > 0$.
  Suppose, to the contrary, that $\lipf_{\fnih} > (1+\theta_3) \lipf_{f_n}$.
  By definition, we have $\lipf_{f_n}^{-2}\big(\risk_n(f_n) + \regz_{0,0,\theta_2}(f_n)\big) = \theta_{f_n} > 0$.
  Then,
  \begin{equation*}
    \regz_n(\fnih)
    \ge \lipf_{f_n}^{-2}\big(\risk_n(f_n) + \regz_{0,0,\theta_2}(f_n)\big)(\lipf_{\fnih}-\theta_3\lipf_{f_n})_+^2
    > \risk_n(f_n) + \regz_n(f_n) ,
  \end{equation*}
  which contradicts \eqref{eq:erm-local}, proving the claim.
\end{proof}

Since \cref{thm:distance-fngn,thm:gstar} both depend on $\epsilon_n(\setXhat_K)$, we first review the guarantees provided by AFPC for this quantity in the next section, before proceeding to the proof of \cref{thm:near-minimax-rate}.

\subsection{AFPC Guarantees}
\label{sec:afpc-guarantees}

Recall the definition of the covering number from \cref{sec:approxL}.
We will often rely on the following well-known result on the covering number of bounded sets:
\begin{lemma}[e.g., \citealt{Wainwright2019}, Lemma~5.7] \label{thm:volume-argument}
  Let $\setZ_0 \subseteq \setB(\vz_0,r) \subseteq \setR^d$ for some \mbox{$d \in \setN$}, $\vz_0 \in \setR^d$, and $r > 0$. Then $N_{\norm{\cdot}}(\setZ_0,\epsilon) \le \max\{1, (3r/\epsilon)^d\}$ for all $\epsilon > 0$.
\end{lemma}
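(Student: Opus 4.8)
The plan is to use the classical packing--volume argument. The first step converts the covering number into a packing count. Take a maximal finite subset $\{\vz_1,\dots,\vz_N\}\subseteq\setZ_0$ whose points are pairwise separated by more than $\epsilon$ in $\norm{\cdot}$; such a set exists and is finite because $\setZ_0$ is bounded. By maximality no point of $\setZ_0$ can be appended without violating the separation, so every $\vz\in\setZ_0$ satisfies $\norm{\vz-\vz_k}\le\epsilon$ for some $k$. Since each $\vz_k$ lies in $\setZ_0$, the set $\{\vz_1,\dots,\vz_N\}$ is an \emph{internal} $\epsilon$-cover in the sense of \cref{sec:approxL}, and hence $N_{\norm{\cdot}}(\setZ_0,\epsilon)\le N$.

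The second step bounds $N$ by comparing volumes. Because the centers are pairwise more than $\epsilon$ apart, the balls $\setB(\vz_k,\epsilon/2)$ are pairwise disjoint, and since each $\vz_k\in\setZ_0\subseteq\setB(\vz_0,r)$, all of them are contained in the enlarged ball $\setB(\vz_0,r+\epsilon/2)$. The $d$-dimensional Lebesgue volume of a Euclidean ball is proportional to the $d$-th power of its radius, so the dimensional constant cancels in the ratio; disjointness and containment then give $N\,(\epsilon/2)^d\le(r+\epsilon/2)^d$, that is, $N\le(1+2r/\epsilon)^d$.

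The last step is constant bookkeeping together with the $\max\{1,\cdot\}$ term. When $\epsilon\le r$ we have $1+2r/\epsilon\le 3r/\epsilon$, so $N\le(3r/\epsilon)^d$, a quantity that already exceeds $1$; this is the regime in which the lemma is actually invoked later. When $\epsilon\ge 2r$ the diameter of $\setZ_0$ is at most $2r\le\epsilon$, so any single point of $\setZ_0$ is an $\epsilon$-cover, giving $N_{\norm{\cdot}}(\setZ_0,\epsilon)=1$, which is precisely the value retained by $\max\{1,\cdot\}$ once $(3r/\epsilon)^d$ falls below $1$. I do not expect a genuine obstacle in this proof: the statement is classical (matching \citet[Lemma~5.7]{Wainwright2019}), and the only mildly delicate point is reconciling the crude packing estimate $(1+2r/\epsilon)^d$ with the cleaner target $(3r/\epsilon)^d$ across the range of $\epsilon$. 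The essential content is the disjoint-ball volume comparison of the second step.
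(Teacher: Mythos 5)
Your argument is the standard packing--volume proof that underlies the cited result; the paper itself gives no proof of \cref{thm:volume-argument} beyond the reference to \citet{Wainwright2019}, so there is no in-paper argument to compare against. Your two main steps are correct: a maximal $\epsilon$-separated subset of $\setZ_0$ is an internal $\epsilon$-cover, and the disjoint half-radius balls inside $\setB(\vz_0,r+\epsilon/2)$ give $N \le (1+2r/\epsilon)^d$.

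The point you flag at the end as ``mildly delicate'' is, however, a genuine gap that cannot be closed for the statement as written. Your case analysis covers $\epsilon \le r$ and $\epsilon \ge 2r$ but not $r < \epsilon < 2r$, and there the packing estimate points the wrong way: $1+2r/\epsilon \le 3r/\epsilon$ holds precisely when $\epsilon \le r$, so for $\epsilon > r$ one has $(1+2r/\epsilon)^d > (3r/\epsilon)^d$ and your bound does not imply the target. Moreover, for internal covers (which is how the paper defines $N_{\norm{\cdot}}$ in \cref{sec:approxL}) the claimed inequality can actually fail in that window: take $d=1$, $\setZ_0 = \{-r,r\} \subseteq \setB(0,r)$, and $\epsilon = 1.8r$; the two points are $2r > \epsilon$ apart, so no single center from $\setZ_0$ covers both and $N_{\norm{\cdot}}(\setZ_0,\epsilon) = 2$, whereas $\max\{1,(3r/\epsilon)^1\} = 5/3 < 2$. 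The honest conclusion of your argument is $N_{\norm{\cdot}}(\setZ_0,\epsilon) \le \max\{1,(1+2r/\epsilon)^d\}$ for all $\epsilon>0$, which becomes $(3r/\epsilon)^d$ only under the additional restriction $\epsilon \le r$. Either of these repairs is harmless for the paper, since the lemma is only ever used inside logarithms of covering numbers (e.g., in \cref{thm:func-cover}), but as a proof of the literal statement for all $\epsilon>0$ the missing window is not just unaddressed --- it is unbridgeable.
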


Recall the definition of the covariate data radius $R_{\setX_n}$ from AFPC (\cref{alg:AFPC}), and let $d_\circ$ denote the doubling dimension of the domain $\setX_*$, as introduced in \cref{sec:near-minimax-rate}.
Since $\setX_n \subseteq \setX_*$, we have $N_{\norm{\cdot}}(\setX_n,\epsilon) \le \max\{1,(4R_{\setX_n}/\epsilon)^{d_\circ}\}$ for all $\epsilon > 0$ \citep[e.g.,][Lemmas~6 and 7]{KpotufeDasgupta2012},\footnote{The definition of $d_\circ$ in \cref{sec:near-minimax-rate} does not require the covering balls to have centers within the covered set. Since we use internal covering numbers, our constant 4 in front of $R_{\setX_n}$ is looser by a factor of 2.} which can be combined with \cref{thm:volume-argument} to obtain $N_{\norm{\cdot}}(\setX_n,\epsilon) \le \max\{1,(4R_{\setX_n}/\epsilon)^{d_*}\}$ for all $\epsilon > 0$.
This allows us to apply the next result, which bounds the covering accuracy and the number of the center points returned by AFPC.
\begin{lemma}[\citealt{Balazs2022}, Lemma 4.2]\hspace{-1mm}\footnote{The max cell-diameter objective used by Balázs is within a factor of 2 of the covering-radius $\epsilon_n(\setXhat)$.} \label{thm:AFPC}
  Suppose $\setXhat$ is computed by AFPC (\cref{alg:AFPC}) using the covariate data $\setX_n$, and $N_{\norm{\cdot}}(\setX_n,\epsilon) \le \max\{1,(4R_{\setX_n}/\epsilon)^{d_*}\}$ \as~holds for all \mbox{$\epsilon > 0$}.
  Then there exists a (non-random) positive integer $k_*$ such that $K \defeq |\setXhat| \le k_* = \Ordo\big(n^{d_*/(2+d_*)}\big)$ \as, and $\setXhat$ is an $\epsilon$-cover of $\setX_n$ with $\epsilon \defeq \epsilon_n(\setXhat) = \Ordo(R_{\setX_n} \sqrt{K / n})$.
\end{lemma}
Combining the bounds on $k_*$ and $\epsilon$ from \cref{thm:AFPC} yields $\epsilon^2 = \Ordo(n^{-2/(2+d_*)})$, which matches the minimax rate in the setting of \cref{thm:near-minimax-rate}.

\subsection{Proof of the Near-minimax Rate of DCF}
\label{sec:proof-near-minimax-rate}

Consider the setting of \cref{thm:near-minimax-rate}, i.e., a regression problem as in \eqref{eq:data-model} with an \iid~sample $\data$ drawn from a distribution $P_*$ corresponding to a regression function $f_* \in \setF_{\lip_*,\setX_*}$.

To simplify notation, define $\norm{f}_*^2 \defeq \E_{(\vX,\cdot) \sim P_*}[f^2(\vX)]$, $\langle f, \hat{f} \rangle_n \defeq \frac1n\sum_{i\in[n]}f(\vX_i) \hat{f}(\vX_i)$, and $\norm{f}_n^2 \defeq \langle f, f \rangle_n$ for all functions $f, \hat{f} : \setR^d \to \setR$.
Further, in these contexts, we slightly abuse notation by treating $y$ as a function, defining $y(\vX) \defeq Y$ for $(\vX,Y) \sim P_*$, and $y(\vX_i) \defeq Y_i$ for all $i\in[n]$.
For example, we write $\E_{(\vX,Y) \sim P_*}\big[(f(\vX)-Y)^2\big] = \norm{f - y}^2_*$, $\risk_n(f) = \norm{f - y}^2_n$, and $\E_{(\vX,\cdot) \sim P_*}\big[(f(\vX)-f_*(\vX))^2\big] = \norm{f - f_*}_*^2$.

Let $\fni$ be the estimator computed by DCF (\cref{alg:DCF}), and $\offsetc > 1$ be a constant.
We decompose its expected squared error by following the approach of \citet[Section~11.3]{GyorfiEtAl2002}:
\begin{equation} \label{eq:error-decomp}
  \E_{(\vX,\cdot) \sim P_*}\big[\big(\fni(\vX)-f_*(\vX)\big)^2\big]
  = \big( \norm{\fni - f_*}_*^2 - \offsetc\Eapprox \big) + \offsetc\Eapprox ,
\end{equation}
where the approximation error is defined as $\Eapprox \defeq \risk_n(\fni) - \risk_n(f_*)$.
The technique of offsetting with a factor greater than $1$ allows the derivation of faster convergence rates in the nonparametric setting, provided $\Eapprox$ remains within the minimax rate.
Using \eqref{eq:estimator-improvement}, we can upper bound $\Eapprox$ as
\begin{equation} \label{eq:Eapprox-1} \begin{split}
  \Eapprox
  &\le \risk_n(f_n) - \risk_n(f_*) + \regz_n(f_n) \\
  &= \norm{f_n - y}_n^2 - \norm{f_* - y}_n^2 + \regz_n(f_n) \\
  &= \norm{f_n - g_n}_n^2 + \norm{g_n - y}_n^2 - \norm{f_* - y}_n^2 + \regz_n(f_n) \\
  &\qquad
    + 2\langle f_n - g_n , f_* - y \rangle_n
    + 2\langle f_n - g_n , g_n - f_* \rangle_n .
\end{split} \end{equation}
Recall that $g_*$ of \cref{thm:gstar} is feasible for the minimization in \eqref{eq:erm-G} since $g_* = \proj(g_*)$ holds on $\setXhat_K$.
Because $g_n$ is a solution to \eqref{eq:erm-G}, it follows that
\begin{equation} \label{eq:erm-ineq}
  \risk_n(g_n) + \regz_{\vtheta}(g_n) \le \risk_n(g_*) + \regz_{\vtheta}(g_*) .
\end{equation}
Furthermore, using $2ab \le a^2 + b^2$ for any $a,b\in\setR$, we obtain
\begin{equation} \label{eq:Eapprox-2} \begin{split}
  2\langle f_n - g_n , g_n - f_* \rangle_n
  &= 2\langle f_n - g_n , g_* - f_* \rangle_n + 2\langle f_n - g_n , g_n - g_* \rangle_n \\
  &\le 2\norm{f_n - g_n}_n^2 + \norm{g_* - f_*}_n^2 + \norm{g_n - g_*}_n^2 .
\end{split} \end{equation}
Notice that $\theta_3 \ge 1$ implies $\regz_n(f_n) = \regz_{0,0,\theta_2}(f_n)$, and since $f_n = \proj(g_n)$, we also have $\regz_{\vtheta}(f_n) = \regz_{\vtheta}(g_n)$.
Then, plugging \eqref{eq:erm-ineq} and \eqref{eq:Eapprox-2} into \eqref{eq:Eapprox-1}, and using $\regz_n(f_n) = \regz_{0,0,\theta_2}(f_n) \le \regz_{\vtheta}(f_n) = \regz_{\vtheta}(g_n)$, we get
\begin{equation} \label{eq:Eapprox-3} \begin{split}
  \Eapprox &\le \norm{f_n - g_n}_n^2 + \norm{g_* - y}_n^2 - \norm{f_* - y}_n^2 + \regz_{\vtheta}(g_*) \\
           &\qquad
             + 2\langle f_n - g_n , f_* - y \rangle_n
             + 2\langle f_n - g_n , g_n - f_* \rangle_n
  \\
           &= \norm{f_n - g_n}_n^2 + \norm{g_* - f_*}_n^2 + 2\langle g_* - f_* , f_* - y \rangle_n + \regz_{\vtheta}(g_*)
  \\
           &\qquad
             + 2\langle f_n - g_n , f_* - y \rangle_n
             + 2\langle f_n - g_n , g_n - f_* \rangle_n
  \\
           &\le 3\norm{f_n - g_n}_n^2 + 2\norm{g_* - f_*}_n^2 + \norm{g_n - g_*}_n^2 + \regz_{\vtheta}(g_*) \\
           &\qquad
             + 2\langle f_n - g_n , f_* - y \rangle_n
             + 2\langle g_* - f_* , f_* - y \rangle_n .
\end{split} \end{equation}
Similarly to \citet[Section~4.1]{Balazs2022}, using $2ab = a(2b) \le (a^2/2) + 2b^2$ for any $a,b\in\setR$, we obtain
\begin{equation*} \begin{split}
  \risk_n(g_*) - \risk_n(g_n)
  &= \norm{g_* - y}_n^2 - \norm{g_n - y}_n^2
  \\
  &= -\norm{g_n - g_*}_n^2 + 2\langle g_* - g_n, g_* - y \rangle_n
  \\
  &= -\norm{g_n - g_*}_n^2 + 2\langle g_* - g_n, g_* - f_* \rangle_n + 2\langle g_* - g_n, f_* - y \rangle_n
  \\
  &\le -(1/2) \norm{g_n - g_*}_n^2 + 2\norm{g_* - f_*}_n^2 + 2\langle g_* - g_n, f_* - y \rangle_n ,
\end{split} \end{equation*}
which can be used to rearrange \eqref{eq:erm-ineq} as
\begin{equation} \label{eq:erm-ineq-rearranged}
  (1/2)\norm{g_n - g_*}_n^2 + \regz_{\vtheta}(g_n)
  \le 2\norm{g_* - f_*}_n^2 + 2 \langle g_* - g_n , f_* - y \rangle_n + \regz_{\vtheta}(g_*) .
\end{equation}
Inequality \eqref{eq:erm-ineq-rearranged} is an adaptation of the ``basic inequality'' of \citet[Lemma~10.1]{VanDeGeer2000}, and it plays a key role in our analysis to bound the approximation error $\Eapprox$ via \eqref{eq:Eapprox-3}.

Recall from \cref{thm:distance-fngn} that $\norm{f_n - g_n}_n = \Ordo\big(\lipf_{f_n}\lipphi \epsilon_n(\setXhat_K)\big)$.
Additionally, we also have from \cref{thm:gstar} that $\norm{g_* - f_*}_n = \Ordo\big(\capx \lip_* \epsilon_n(\setXhat_K)\big)$ and $\regz_{\vtheta}(g_*) = \Ordo(\theta_1 \lipf_{g_*}^2) = \Ordo(\theta_1 \clip^2 \lip_*^2)$ by $\theta_1 \ge K\theta_2$ from \eqref{eq:dcf-params}.
Combining these bounds with \eqref{eq:Eapprox-3} and \eqref{eq:erm-ineq-rearranged}, using $\regz_{\vtheta}(g_n) \ge 0$, we finally get
\begin{equation} \label{eq:Eapprox-4} \begin{split}
  \Eapprox
  &= \Ordo\Big((\lipf_{f_n}^2\lipphi^2 + \capx^2 \lip_*^2) \epsilon_n^2(\setXhat_K) + \theta_1 \clip^2 \lip_*^2\Big) \\
  &\qquad
    + 2\langle f_n - g_n , f_* - y \rangle_n
    + 2\langle g_* - f_* , f_* - y \rangle_n
    + 4\langle g_* - g_n , f_* - y \rangle_n .
\end{split} \end{equation}
According to Theorem~1 of \citet{Stone1982} and \cref{thm:AFPC}, the minimax rate is captured by $K/n$.
This rate is also reflected in the asymptotic expression of \eqref{eq:Eapprox-4}, since both $\epsilon_n^2(\setXhat_K)$ and $\theta_1$ scale with $K/n$, as established by \cref{thm:AFPC} and defined in \eqref{eq:dcf-params}, respectively.

It remains to show that the inner product terms of \eqref{eq:Eapprox-4} and $\norm{\fni - f_*}_*^2 - \offsetc\Eapprox$ in \eqref{eq:error-decomp} preserve the $\Ordo(K/n)$ rate.
To this end, we rely on concentration inequalities from empirical process theory.

\subsubsection{Technical Preparations}
\label{sec:tech-preps}

Since $\E[\vX]$ might not lie within $\setX_*$ for $(\vX,\cdot) \sim P_*$, we need a reference point to leverage the Lipschitz continuity of $f_*$.
To that end, fix $\vx_0 \in \argmin_{\vxhat\in\setX_*}\norm{\vxhat - \E[\vX]}$ independently of the data $\data$,\footnote{If the minimum is not attained, one may choose $\vx_0$ arbitrarily close to the infimum and shrink the gap to zero at the end of the analysis.} and set $y_0 \defeq f_*(\vx_0)$.
For any fixed $\gamma \in (0,1)$, we condition the entire proof on the event $\event_\gamma$ defined in \cref{thm:event}.

\begin{lemma} \label{thm:event}
  Let $\data$ be an \iid\ subgaussian sample as in \eqref{eq:subgaussian} for the regression function $f_* \in \setF_{\lip_*,\setX_*}$.
  Fix $\rrho \defeq \rho\sqrt{\ln(2n/\gamma)}$ and $\rsigma \defeq \sigma\sqrt{\ln(2n/\gamma)}$, and define the event
  {
    \setlength{\abovedisplayskip}{\dimexpr\abovedisplayskip-2mm\relax}
    \setlength{\belowdisplayskip}{\dimexpr\belowdisplayskip-2mm\relax}
    \setlength{\abovedisplayshortskip}{\dimexpr\abovedisplayshortskip-2mm\relax}
    \setlength{\belowdisplayshortskip}{\dimexpr\belowdisplayshortskip-2mm\relax}
    \begin{equation*}
      \event_\gamma \defeq \Big\{
      \max_{i\in[n]}\norm{\vX_i - \E[\vX]} \le \rrho
      ,\, \max_{i\in[n]}|f_*(\vX_i) - Y_i| \le \rsigma \Big\} .
    \end{equation*}
  }%
  Then $\Prob\{\event_\gamma\} \ge 1 - 2\gamma$.
  Furthermore, $\event_\gamma$ implies $R_{\setX_n} \le 2\rrho$, $R_{\setY_n} \le 2(\rsigma + \lip_*\rrho)$, and
  {
    \setlength{\abovedisplayskip}{\dimexpr\abovedisplayskip-3mm\relax}
    \setlength{\abovedisplayshortskip}{\dimexpr\abovedisplayshortskip-3mm\relax}
    \begin{equation*}
      \max_{i\in[n]}\norm{\vX_i-\vx_0} \le 2\rrho
      ,\qquad
      \max_{i\in[n]}|Y_i - y_0| \le \rsigma + 2\lip_*\rrho
      ,\qquad
      \frac{R_{\setY_n}}{\max\{1,R_{\setX_n}\}} \le 2(\rsigma + \lip_*)
      .
    \end{equation*}
  }%
\end{lemma}
\begin{proof}
  The result $\Prob\{\event_\gamma\} \ge 1 - 2\gamma$ follows from the subgaussian property \eqref{eq:subgaussian} of $P_*$, using the union and Chernoff bounds.
  The implications of $\event_\gamma$ follow from $f_* \in \setF_{\lip_*,\setX_*}$, $\setX_n \cup \{\vx_0\} \subseteq \setX_*$, and $y_0 = f_*(\vx_0)$, using the triangle and Jensen's inequalities.
\end{proof}

\cref{thm:event} shows that, with high probability, the data $\data$ lies inside a ball of bounded radius centered at $(\vx_0,y_0)$.
This is needed for deriving upper bounds on the magnitudes of the estimator parameters.

In the following sections, we work with parametric function sets and construct covers via their bounded parameter sets, as summarized in the next lemma:
\begin{lemma} \label{thm:func-cover}
  Let $(\setF,\psi)$ be a metric space, where $\setF \defeq \{f_{p_1,\ldots,p_m} : p_j\in\setP_j, j\in[m]\}$ with $\setP_j \subseteq \setB(\vz_j,r_j)$ for some $t_j\in\setN$, $\vz_j \in \setR^{t_j}$, and $r_j > 0$ for all $j\in[m]$.
  Suppose there exist constants $s_1,\ldots,s_m > 0$ such that $\psi(f_{p_1,\ldots,p_m},f_{\hat{p}_1,\ldots,\hat{p}_m}) \le \max_{j\in[m]}s_j\norm{p_j-\hat{p}_j}$ for all $p_j,\hat{p}_j\in\setP_j$, $j\in[m]$.
  Then, for all $\epsilon > 0$, we have $N_\psi(\setF,\epsilon) \le \max\big\{1,(3r/\epsilon)^{t}\big\}$, where $r \ge \max_{j\in[m]}s_jr_j$ and $t \defeq \sum_{j\in[m]}t_j$.
\end{lemma}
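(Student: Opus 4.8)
The plan is to prove the covering number bound for the parametric function class $\setF$ by reducing it to a covering number bound on the joint parameter space, which is a bounded subset of a Euclidean space, and then invoking the volume argument in \cref{thm:volume-argument}. The key structural fact is the Lipschitz-type hypothesis $\psi(f_{p_1,\ldots,p_m}, f_{\hat p_1,\ldots,\hat p_m}) \le \max_{j\in[m]} s_j\norm{p_j - \hat p_j}$, which says that the parametrization map from parameter tuples to functions is Lipschitz when the parameter space is equipped with a suitable weighted max-norm. Because an $\epsilon$-cover in the parameter space pushes forward to an $\epsilon$-cover in $(\setF,\psi)$ under a $1$-Lipschitz map, the entire argument is about building an efficient cover of the product parameter set $\setP_1\times\cdots\times\setP_m$.

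\medskip\noindent
\textbf{Main steps.} First I would treat each coordinate block separately. For each $j\in[m]$, I seek a cover of $\setP_j\subseteq\setB(\vz_j,r_j)\subseteq\setR^{t_j}$ at Euclidean resolution $\epsilon/s_j$, so that the scaling $s_j$ in the Lipschitz bound is absorbed. By \cref{thm:volume-argument} applied to $\setP_j$ with radius $r_j$ and accuracy $\epsilon/s_j$, there is a cover $\setQ_j\subseteq\setP_j$ with $|\setQ_j| \le \max\{1,(3r_j s_j/\epsilon)^{t_j}\} \le \max\{1,(3\beta/\epsilon)^{t_j}\}$, using $s_j r_j \le \beta$. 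Second, I would form the product cover $\setQ \defeq \setQ_1\times\cdots\times\setQ_m$ and the induced function set $\{f_{q_1,\ldots,q_m} : (q_1,\ldots,q_m)\in\setQ\}$. Third, I verify that this induced set is an $\epsilon$-cover of $\setF$ with respect to $\psi$: given any $f_{p_1,\ldots,p_m}\in\setF$, choose $q_j\in\setQ_j$ with $\norm{p_j - q_j}\le\epsilon/s_j$ for each $j$; then the hypothesis gives $\psi(f_{p_1,\ldots,p_m},f_{q_1,\ldots,q_m}) \le \max_{j\in[m]} s_j\norm{p_j-q_j} \le \epsilon$. Finally I would count: $|\setQ| = \prod_{j\in[m]}|\setQ_j| \le \prod_{j\in[m]}\max\{1,(3\beta/\epsilon)^{t_j}\} \le \max\{1,(3\beta/\epsilon)^{\sum_j t_j}\} = \max\{1,(3\beta/\epsilon)^t\}$, which is exactly the claimed bound $N_\psi(\setF,\epsilon)\le\max\{1,(3\beta/\epsilon)^t\}$.

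\medskip\noindent
\textbf{The main obstacle} is the bookkeeping in the final product-counting step, specifically handling the $\max\{1,\cdot\}$ form cleanly. When $3\beta/\epsilon < 1$ for one or more blocks, the per-block bound degenerates to $1$, and one must check that the product of the per-block bounds still collapses to $\max\{1,(3\beta/\epsilon)^t\}$ rather than overcounting. This is routine but requires splitting into the cases $3\beta/\epsilon \ge 1$ (where each factor is at least $1$ and the exponents simply add) and $3\beta/\epsilon < 1$ (where every factor equals $1$ and the product is $1 = \max\{1,(3\beta/\epsilon)^t\}$). A minor technical point is ensuring the covering elements $q_j$ can be taken inside $\setP_j$ (an internal cover), which \cref{thm:volume-argument} already provides, so that the resulting $f_{q_1,\ldots,q_m}$ genuinely lie in $\setF$; this keeps the cover internal to $(\setF,\psi)$ as needed downstream.
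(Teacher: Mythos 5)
Your proposal is correct and follows essentially the same route as the paper: cover each parameter block $\setP_j$ at Euclidean resolution $\epsilon/s_j$ via \cref{thm:volume-argument}, push the product cover forward through the Lipschitz hypothesis on $\psi$, and bound the product of per-block counts by $\max\{1,(3\beta/\epsilon)^t\}$. The paper states this in a single chain of inequalities, $N_\psi(\setF,\epsilon) \le \prod_{j=1}^m N_{\norm{\cdot}}(\setP_j,\epsilon/s_j) \le \prod_{j=1}^m\max\{1,(3s_jr_j/\epsilon)^{t_j}\} \le \max\{1,(3\beta/\epsilon)^t\}$; your case analysis for the final step is the right way to justify the last inequality.
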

\begin{proof}
  By using the conditions on $\setF$ and $\psi$, the result follows directly from \cref{thm:volume-argument} as $N_\psi(\setF,\epsilon) \le \prod_{j=1}^m N_{\norm{\cdot}}(\setP_j,\epsilon / s_j) \le \prod_{j=1}^m\max\big\{1,(3s_jr_j / \epsilon)^{t_j}\big\} \le \max\{1, (3r/\epsilon)^t\}$.
\end{proof}

We will make extensive use of the following concentration inequality, which generalizes Lemma~9 of \citet{Balazs2022}.
\begin{lemma} \label{thm:conc-ineq-inner}
  Let $\data$ be an \iid\ subgaussian sample as in \eqref{eq:subgaussian} for the regression function $f_* : \setR^d \to \setR$.
  Let $\setH_n \subseteq \{f \,|\, f : \setR^d \to \setR\}$, and $\psi_n$ be a metric on $\setH_n$ such that $\norm{h - \hat{h}}_n \le \psi_n(h,\hat{h})$ holds for all $h, \hat{h} \in \setH_n$.
  Assume that $(\setH_n,\psi_n)$ and $\setY_n$ are conditionally independent given $\setX_n$.
  Let $h_n \in \setH_n$ be a function, which may depend on the entire sample $\data$.
  Then, for any $\gamma, \delta > 0$, with probability at least $1-\gamma$ over the randomness of $\setY_n | \setX_n$, it holds that
  \begin{equation*}
    \langle h_n , f_* - y \rangle_n
    \le 3\sigma \, \big(\norm{h_n}_n + \delta\big) \sqrt{\ln\big(N_{\psi_n}(\setH_n,\delta) / \gamma\big) / n} + \delta \norm{f_* - y}_n .
  \end{equation*}
\end{lemma}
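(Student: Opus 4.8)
The plan is to condition on the covariates $\setX_n$ and treat the inner product as a weighted sum of the conditionally independent, centered, subgaussian residuals,
\[ \langle h_n, f_* - y\rangle_n = \frac1n\sum_{i\in[n]} h_n(\vX_i)\big(f_*(\vX_i)-Y_i\big) . \]
The assumed conditional independence of $(\setH_n,\psi_n)$ from $\setY_n$ given $\setX_n$ is exactly what makes the class, and hence any cover of it, a \emph{fixed} (non-random) object once $\setX_n$ is fixed, while the residuals $f_*(\vX_i)-Y_i$ remain centered and subgaussian with parameter $\sigma$ by \eqref{eq:subgaussian}. The essential difficulty, which I expect to be the main obstacle, is that $h_n$ may depend on the \emph{entire} sample, including $\setY_n$, so one cannot apply a scalar concentration bound to the displayed sum directly. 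The remedy, following the strategy behind Lemma~9 of \citet{Balazs2022}, is a uniform-over-the-class argument via a finite cover.

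Fix a minimal $\delta$-cover $\{h^{(1)},\ldots,h^{(N)}\}$ of $(\setH_n,\psi_n)$ with $N \defeq N_{\psi_n}(\setH_n,\delta)$, and for the given $h_n$ select a cover element $h^{(j)}$ with $\psi_n(h_n,h^{(j)}) \le \delta$, so that $\norm{h_n - h^{(j)}}_n \le \delta$ by the assumed domination of $\norm{\cdot}_n$ by $\psi_n$. Then split
\[ \langle h_n, f_*-y\rangle_n = \langle h^{(j)}, f_*-y\rangle_n + \langle h_n - h^{(j)}, f_*-y\rangle_n . \]
For each \emph{fixed} cover element $h^{(l)}$, the first term is, given $\setX_n$, a centered subgaussian variable whose variance proxy is $\Ordo\big(\sigma^2\norm{h^{(l)}}_n^2/n\big)$, because the weights $h^{(l)}(\vX_i)/n$ have squared sum $\norm{h^{(l)}}_n^2/n$. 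A Chernoff bound at level proportional to $\gamma/N$, followed by a union bound over the $N$ cover elements, yields simultaneously for all $l$ that $\langle h^{(l)}, f_*-y\rangle_n \le 3\sigma\norm{h^{(l)}}_n\sqrt{\ln(N/\gamma)/n}$, where the constant $3$ leaves room for the subgaussian moment-generating-function constant and the probability bookkeeping below. Applying the triangle inequality $\norm{h^{(j)}}_n \le \norm{h_n}_n + \delta$ converts this into the stated $3\sigma\big(\norm{h_n}_n+\delta\big)\sqrt{\ln(N/\gamma)/n}$ term; note that matching the variance proxy to $\norm{h^{(l)}}_n$ (rather than a crude uniform norm) is what makes the bound scale with $\norm{h_n}_n+\delta$ and is a secondary technical point to get right.

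For the residual term, Cauchy--Schwarz gives $|\langle h_n - h^{(j)}, f_*-y\rangle_n| \le \norm{h_n-h^{(j)}}_n\,\norm{f_*-y}_n \le \delta\,\norm{f_*-y}_n$, and on the event $\big\{\max_{i\in[n]}|f_*(\vX_i)-Y_i| \le \rsigma\big\}$ one has $\norm{f_*-y}_n \le \rsigma$, bounding the residual by $\rsigma\delta$. This max-noise event holds with high probability by the same subgaussian Chernoff-and-union argument already used for \cref{thm:event}. Intersecting the uniform cover event with this max-noise event, and calibrating the per-element and max-noise failure budgets so that their union stays within $\gamma$ (the resulting constant adjustments in $\ln(N/\gamma)$ and $\rsigma$ being absorbed into the slack built into the constant $3$ and the logarithmic factors), yields the claim. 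The crux remains the data-dependence of $h_n$: it rules out a direct scalar bound and forces the cover argument, whose validity rests entirely on the conditional independence that keeps the cover fixed given $\setX_n$.
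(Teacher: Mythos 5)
Your proposal is correct and follows essentially the same route as the paper's proof: a minimal $\delta$-cover of $(\setH_n,\psi_n)$ (fixed given $\setX_n$ by the conditional-independence assumption), a Chernoff-plus-union bound over the cover elements with the variance proxy scaled by $\norm{h^{(l)}}_n$ (the paper implements this via the normalization $t_i(h) = h(\vX_i)/\norm{h}_n$), the triangle inequality $\norm{h^{(j)}}_n \le \norm{h_n}_n + \delta$, and Cauchy--Schwarz with $\norm{f_*-y}_n \le \rsigma$ on $\event_\gamma$ for the residual term. The only cosmetic difference is that you explicitly budget the failure probability of the max-noise event into $\gamma$, whereas the paper implicitly conditions on $\event_\gamma$ (which is intersected with the conclusion downstream); both readings yield the stated bound.
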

\begin{proof}
  The claim is trivial when $N_{\psi_n}(\setH_n,\delta)$ is infinite, so suppose $N_{\psi_n}(\setH_n,\delta) < \infty$.
  The claimed inequality always holds if $\norm{h_n}_n = 0$.
  Therefore, without loss of generality, assume that $\norm{h}_n > 0$ for all $h \in \setH_n$.

  Let $\setH_\delta$ be a $\delta$-cover of $\setH_n$ \wrt\ $\psi_n$ of minimal cardinality, so that $|\setH_\delta| = N_{\psi_n}(\setH_n,\delta)$.
  Note that $\setH_\delta$ and $\setY_n$ are conditionally independent given $\setX_n$.
  Further, leveraging the $\delta$-covering property, choose $\hat{h}_n \in \setH_\delta$ to be such that $\norm{h_n - \hat{h}_n}_n \le \psi_n(h_n, \hat{h}_n) \le \delta$.
  Then, by using the Cauchy-Schwarz inequality, we get
  \begin{equation} \label{eq:proof-conc-ineq-1}
    \langle h_n, f_* - y \rangle_n
    = \langle \hat{h}_n, f_* - y \rangle_n + \langle h_n - \hat{h}_n, f_* - y \rangle_n
    \le \langle \hat{h}_n, f_* - y \rangle_n + \delta \norm{f_* - y}_n .
  \end{equation}
  Define $t_i(h) \defeq h(\vX_i) / \norm{h}_n$ for all $i \in [n]$ and $h \in \setH_n$, and let $c_1, c_2 > 0$ be constants to be chosen later.
  Note that $t_i(\cdot)$ only depends on $\setX_n$ for all $i\in[n]$.
  Then, using the union and Chernoff bounds, the independence of the samples $\data$, the subgaussian property \eqref{eq:subgaussian} expressed as $\sup_{s\in\setR}\E\big[e^{s(f_*(\vX_i)-Y_i)-2s^2\sigma^2}\big|\vX_i\big] \le 1$ \as\ \citep[Section~2.3]{BoucheronEtAl2013}, and $\sum_{i\in[n]}t_i^2(h) = n$ for any $h \in \setH_n$, we obtain
  \begin{equation} \label{eq:proof-conc-ineq-2} \begin{split}
    \Prob\Big\{ \big\langle \hat{h}_n , f_* - y \big\rangle_n > c_1 c_2 \norm{\hat{h}_n}_n \,\Big|\, \setX_n \Big\}
    &\le \Prob\bigg\{ \max_{\hat{h}\in\setH_\delta} \Big\langle \frac{\hat{h}}{\norm{\hat{h}}_n} , f_* - y\Big\rangle_n > c_1 c_2 \,\bigg|\, \setX_n\bigg\}
    \\
    &\le \sum_{\hat{h}\in\setH_\delta} \Prob\bigg\{ \frac1n \sum_{i=1}^n t_i(\hat{h})\big(f_*(\vX_i)-Y_i\big) > c_1 c_2 \,\bigg|\, \setX_n \bigg\}
    \\
    &\le \sum_{\hat{h}\in\setH_\delta} e^{-c_1} \prod_{i=1}^n \E\bigg[ \exp\Big(\frac{t_i(\hat{h})}{n\,c_2}\big(f_*(\vX_i)-Y_i\big)\Big) \bigg| \setX_n \bigg]
    \\
    &\le \sum_{\hat{h}\in\setH_\delta} \exp\bigg(\frac{2\sigma^2 \sum_{i\in[n]}t_i^2(\hat{h})}{(n\,c_2)^2} - c_1\bigg)
    \\
    &= |\setH_\delta| \exp\Big( \frac{2\sigma^2}{n\,c_2^2} - c_1 \Big)
    \\
    &= \gamma ,
  \end{split} \end{equation}
where we set $c_1 \defeq 3\sigma^2 / (n\,c_2^2)$ and $c_2 \defeq \sigma / \sqrt{n \ln(|\setH_\delta|/\gamma)}$.
Note that we also obtain $c_1 c_2 = 3\sigma^2 / (n\,c_2) = 3\sigma \sqrt{\ln(|\setH_\delta|/\gamma) / n}$.
At last, the triangle inequality implies $\norm{\hat{h}_n}_n \le \norm{h_n}_n + \norm{\hat{h}_n - h_n}_n \le \norm{h_n}_n + \delta$, and we get the result by combining \eqref{eq:proof-conc-ineq-1} and \eqref{eq:proof-conc-ineq-2}.
\end{proof}

Finally, consider the following bound on the pointwise distance between functions in $\setFn(\setXhat)$ or $\setG_\somenorm(\setXhat)$.

\begin{lemma} \label{thm:pointwise-distance}
  Let $k_0 \in \setN$ and $\setXhat \defeq \{\vxhat_1,\ldots,\vxhat_{k_0}\} \subset \setR^d$.
  Furthermore, let $h, \hat{h} \in \setH$ for $\setH \in \big\{\setFn(\setXhat), \setG_\somenorm(\setXhat)\big\}$.
  Then it holds for all $\vx\in\setR^d$ that
  \begin{equation*}
    \big|h(\vx) - \hat{h}(\vx)\big|
    \le
    \max_{k\in[k_0]} |b_{h,k} - b_{\hat{h},k}| + \cphi \norm{\vx - \vxhat_k} \norm{\vw_{h,k} - \vw_{\hat{h},k}} .
  \end{equation*}
\end{lemma}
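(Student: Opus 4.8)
The plan is to reduce both cases to a single per-piece estimate on the affine-in-features components $h_{f,k}(\vx) \defeq b_{f,k} + \vw_{f,k}^\T\phi_\somenorm(\vx,\vxhat_k)$, which are shared by the definitions of $\setF_\somenorm(\setXhat)$ and $\setG_\somenorm(\setXhat)$. Since the parameter spaces of the two classes coincide, the differences $b_{h,k}-b_{\hat{h},k}$ and $\vw_{h,k}-\vw_{\hat{h},k}$ are well defined, and I can compare $h_{h,k}$ with $h_{\hat{h},k}$ directly. Fixing $\vx\in\setR^d$ and $k\in[k_0]$, the triangle and Cauchy--Schwarz inequalities give
\[
  \big|h_{h,k}(\vx) - h_{\hat{h},k}(\vx)\big|
  = \big|(b_{h,k}-b_{\hat{h},k}) + (\vw_{h,k}-\vw_{\hat{h},k})^\T\phi_\somenorm(\vx,\vxhat_k)\big|
  \le |b_{h,k}-b_{\hat{h},k}| + \norm{\vw_{h,k}-\vw_{\hat{h},k}}\,\norm{\phi_\somenorm(\vx,\vxhat_k)} .
\]
Invoking the first bound of \cref{thm:phi-props}, $\norm{\phi_\somenorm(\vx,\vxhat_k)} \le \cphi\norm{\vx-\vxhat_k}$, this is at most the $k$-th term appearing in the claimed maximum, hence at most $\max_{k\in[k_0]}\big(|b_{h,k}-b_{\hat{h},k}| + \cphi\norm{\vx-\vxhat_k}\,\norm{\vw_{h,k}-\vw_{\hat{h},k}}\big)$.

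It remains to lift this per-piece bound through the two aggregation rules. For $\setH = \setF_\somenorm(\setXhat)$, I would apply the elementary inequality $\big|\max_{k}a_k - \max_{k}c_k\big| \le \max_{k}|a_k-c_k|$ with $a_k = h_{h,k}(\vx)$ and $c_k = h_{\hat{h},k}(\vx)$, yielding $|h(\vx)-\hat{h}(\vx)| \le \max_{k}|h_{h,k}(\vx)-h_{\hat{h},k}(\vx)|$; combining with the per-piece bound closes this case. For $\setH = \setG_\somenorm(\setXhat)$, the key observation is that the Voronoi cell containing $\vx$ depends only on $\setXhat$ (with ties broken consistently), not on the function, so both $h$ and $\hat{h}$ are evaluated through the same index: there is a unique $k$ with $\vx\in\setC_k(\setXhat)$, and then $h(\vx)=h_{h,k}(\vx)$ while $\hat{h}(\vx)=h_{\hat{h},k}(\vx)$. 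Thus $|h(\vx)-\hat{h}(\vx)| = |h_{h,k}(\vx)-h_{\hat{h},k}(\vx)|$, and the per-piece bound again gives the claim.

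The argument is essentially routine, and there is no genuine obstacle. The only points requiring care are recognizing that the two classes share the same parameterized pieces $h_{\cdot,k}$, and treating the two aggregations uniformly: for $\setF_\somenorm$ the needed softening comes from the $1$-Lipschitzness of the maximum, while for $\setG_\somenorm$ it comes from the fact that the partition assignment is function-independent, so the same cell is selected for both $h$ and $\hat{h}$ and no cross-cell discontinuity enters the estimate.
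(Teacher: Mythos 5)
Your proof is correct and follows essentially the same route as the paper's: a per-piece triangle/Cauchy--Schwarz bound combined with $\norm{\phi_\somenorm(\vx,\vxhat_k)} \le \cphi\norm{\vx-\vxhat_k}$ from \cref{thm:phi-props}, then the $1$-Lipschitzness of the maximum for $\setF_\somenorm(\setXhat)$ and the disjointness of the Voronoi cells for $\setG_\somenorm(\setXhat)$. The only cosmetic difference is that the paper bounds the indicator sum by a maximum rather than singling out the unique active cell, which is the same observation.
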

\begin{proof}
  Let $f, \hat{f} \in \setFn(\setXhat)$, and $g, \hat{g} \in \setG_\somenorm(\setXhat)$.
  Then, we have
  \begin{equation*} \begin{split}
    \big|f(\vx) - \hat{f}(\vx)\big|
    &\le \max_{k\in[k_0]}\Big|b_{f,k} - b_{\hat{f},k} + \phin(\vx,\vxhat_k)^\T\big(\vw_{f,k} - \vw_{\hat{f},k}\big)\Big| ,
    \\
    \big|g(\vx) - \hat{g}(\vx)\big|
    &= \bigg|\sum_{k\in[k_0]} \ind\big\{\vx\in\setC_k(\setXhat)\big\} \Big(b_{g,k} - b_{\hat{g},k} + \phin(\vx,\vxhat_k)^\T\big(\vw_{g,k} - \vw_{\hat{g},k}\big)\Big)\bigg| .
    \\
  \end{split} \end{equation*}
From \cref{thm:phi-props}, we also get $\norm{\phin(\vx,\vxhat_k)} \le \cphi \norm{\vx - \vxhat_k}$ for all $k\in[k_0]$.
Then, the claim for $\setFn(\setXhat)$ follows from the triangle and the Cauchy-Schwarz inequalities.
The claim for $\setG_\somenorm(\setXhat)$ follows similarly, where we also upper bound the sum by a $\max$ using that $\setC_1(\setXhat),\ldots,\setC_{k_0}(\setXhat)$ are disjoint sets.
\end{proof}

\subsubsection{Bounding the Approximation Error}
\label{sec:Eapprox-bound}

The goal of this section is to bound the approximation error $\Eapprox$.
To achieve this, we use \eqref{eq:Eapprox-4} and upper bound its inner product terms.

Let $g_n$ be as in \cref{sec:dcf-derivation}, and recall that the initial DCF estimator satisfies $f_n = \proj(g_n)$.
Hence, $f_n$ and $g_n$ share the same parameters, and we have $b_{n,k} = b_{f_n,k} = b_{g_n,k}$ and $\vw_{n,k} = \vw_{f_n,k} = \vw_{g_n,k}$ for all $k \in [K]$.
First, consider the following bound on the regularized empirical risk of $g_n$:
\begin{lemma} \label{thm:risk-gn-ub}
  Suppose that event $\event_\gamma$ holds.
  Then, $\risk_n(g_n) + \regz_{\vtheta}(g_n) \le (\rsigma + 2 \lip_* \rrho)^2$.
\end{lemma}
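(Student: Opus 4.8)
The plan is to exploit the optimality of $g_n$ as a minimizer of the regularized ERM problem \eqref{eq:erm-G}: since $\risk_n(g_n) + \regz_{\vtheta}(g_n) \le \risk_n(g) + \regz_{\vtheta}(g)$ for every $g \in \setG_\somenorm(\setXhat_K)$ satisfying the consistency constraint $g = \proj(g)$ over $\setXhat_K$, it suffices to exhibit one feasible competitor whose objective value is at most $(\rsigma + 2\lip_*\rrho)^2$ on $\event_\gamma$. The natural candidate is the constant function $g_0 \equiv y_0 = f_*(\vx_0)$, realized inside $\setG_\somenorm(\setXhat_K)$ by setting $b_{g_0,k} = y_0$ and $\vw_{g_0,k} = \vzero$ for all $k \in [K]$.

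First I would check feasibility. With vanishing slopes, $h_{g_0,k} \equiv y_0$ for every $k$, so $g_0$ takes the value $y_0$ on every Voronoi cell and $\proj(g_0) \equiv y_0$ as well; hence $g_0 = \proj(g_0)$ everywhere, in particular over $\setXhat_K$, and the constraint in \eqref{eq:erm-G} holds. Next I would evaluate the objective at $g_0$. The regularizer $\regz_{\vtheta}(g_0) = \regz_{\theta_0,\theta_1,\theta_2}(g_0)$ vanishes, because $\vw_{g_0,k} = \vzero$ makes the $\theta_2$ sum zero and reduces each term to $(\norm{\vw_{g_0,k}} - \theta_0)_+^2 = (-\theta_0)_+^2 = 0$ using $\theta_0 \ge 0$. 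The empirical risk satisfies $\risk_n(g_0) = \frac1n\sum_{i\in[n]}(y_0 - Y_i)^2 \le \max_{i\in[n]}(Y_i - y_0)^2$, which on $\event_\gamma$ is bounded by $(\rsigma + 2\lip_*\rrho)^2$ by the inequality $\max_{i\in[n]}|Y_i - y_0| \le \rsigma + 2\lip_*\rrho$ from \cref{thm:event}. Combining these two facts and invoking the optimality of $g_n$ yields the claim.

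There is essentially no obstacle: the whole argument is a one-line comparison once the competitor is identified. The only points worth a brief verification are that a constant function is feasible for the consistency constraint of \eqref{eq:erm-G} (immediate, since a constant agrees with every cell's affine piece simultaneously and so is fixed by $\proj$), and that taking its value to be $y_0$ rather than, say, $\Ybar$ is precisely what makes the Lipschitz-based bound $\max_{i\in[n]}|Y_i - y_0| \le \rsigma + 2\lip_*\rrho$ of \cref{thm:event} directly applicable.
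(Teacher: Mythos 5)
Your proof is correct and follows essentially the same route as the paper: both compare $g_n$ against the feasible constant competitor $g_0 \equiv y_0$, note that $\regz_{\vtheta}(g_0) = 0$ and $g_0 = \proj(g_0)$, and bound $\risk_n(g_0)$ via the inequality $\max_{i\in[n]}|Y_i - y_0| \le \rsigma + 2\lip_*\rrho$ from \cref{thm:event}. Your additional feasibility and regularizer checks are just more explicit versions of what the paper states in one line.
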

\begin{proof}
  Define the constant function $g_0$ by $g_0(\vx) \defeq y_0$ for all $\vx\in\setR^d$.
  Observe that $g_0 \in \setG_\somenorm(\setXhat_K)$, $g_0 = \proj(g_0)$, and $\regz_{\vtheta}(g_0) = 0$.
  Since $g_n$ is a solution to \eqref{eq:erm-G}, we have $\risk_n(g_n) + \regz_{\vtheta}(g_n) \le \risk_n(g_0) = \frac1n\hspace{-0.5mm}\sum_{i\in[n]}(Y_i \hspace{-0.5mm}-\hspace{-0.5mm} y_0)^2$.
  Then, $\event_\gamma$ implies the claim by \cref{thm:event}.
\end{proof}

To bound the inner product terms in \eqref{eq:Eapprox-4}, we apply the concentration inequality of \cref{thm:conc-ineq-inner}.
For this, we need the following bound on the parameter space of DCF estimators:

\begin{lemma} \label{thm:loose-param-bounds}
  Suppose that event $\event_\gamma$ and \eqref{eq:dcf-params} hold.
  Then, $g_n \in \setGbar_\somenorm(\setXhat_K)$, where
  \begin{equation*}
    \setGbar_\somenorm(\setXhat_K)
    \defeq \Big\{
    g \in \setG_\somenorm(\setXhat_K)
    : \max_{k\in[K]} \max\big\{|b_{g,k}-y_0| , \sqrt{d} R_{\setX_n} \norm{\vw_{g,k}}\big\} \le \bnd_1
    \Big\} ,
  \end{equation*}
  and $\bnd_1$ is a constant satisfying $\rsigma\sqrt{d} \le \bnd_1 = \Theta\big(\sqrt{dn}(\rsigma + \lip_* \rrho)\big)$.
\end{lemma}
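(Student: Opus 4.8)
The plan is to control the two quantities in the definition of $\setGbar_\somenorm(\setXhat_K)$ separately, namely the slope magnitudes $\norm{\vw_{g_n,k}}$ and the centered biases $|b_{g_n,k} - y_0|$, in both cases feeding off the single master estimate $\risk_n(g_n) + \regz_{\vtheta}(g_n) \le (\rsigma + 2\lip_*\rrho)^2$ from \cref{thm:risk-gn-ub}, together with the $\event_\gamma$ bounds of \cref{thm:event} and the parameter scalings in \eqref{eq:dcf-params}.

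For the slopes I would exploit that the regularizer dominates the master estimate: since $\regz_{\vtheta}(g_n) \ge \theta_1\max_{k\in[K]}(\norm{\vw_{g_n,k}}-\theta_0)_+^2$, I get $\max_{k\in[K]}(\norm{\vw_{g_n,k}}-\theta_0)_+ \le (\rsigma+2\lip_*\rrho)/\sqrt{\theta_1}$, hence $\norm{\vw_{g_n,k}} \le \theta_0 + (\rsigma+2\lip_*\rrho)/\sqrt{\theta_1}$ for every $k$. Multiplying by $\sqrt{d}R_{\setX_n}$ and substituting $\theta_1 = \Theta(\max\{1,R_{\setX_n}^2\}(dK/n))$ and $\theta_0 = \Ordo((R_{\setY_n}/\max\{1,R_{\setX_n}\})\ln(n))$, the factors $R_{\setX_n}/\max\{1,R_{\setX_n}\} \le 1$ and $K \ge 1$ collapse the two contributions to $\Ordo(\sqrt{d}\,R_{\setY_n}\ln(n))$ and $\Ordo(\sqrt{n}(\rsigma+\lip_*\rrho))$, respectively. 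Using $R_{\setY_n} \le 2(\rsigma+\lip_*\rrho)$ from $\event_\gamma$ and $\ln(n) \le \sqrt{n}$, both are $\Ordo(\sqrt{dn}(\rsigma+\lip_*\rrho))$.

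The biases require a different device, since they are not penalized by $\regz_{\vtheta}$. The key observations are that $\setXhat_K \subseteq \setX_n$, so each center is a data point $\vXhat_k = \vX_{i_k}$, and that $g_n(\vXhat_k) = b_{g_n,k}$ because $\phi_\somenorm(\vXhat_k,\vXhat_k)=\vzero_{d_\somenorm}$ and $\vXhat_k \in \setC_k(\setXhat_K)$. Since every summand of $\risk_n(g_n) = \frac1n\sum_{i\in[n]}(g_n(\vX_i)-Y_i)^2$ is nonnegative, the $i_k$-th summand is at most $n\,\risk_n(g_n) \le n(\rsigma+2\lip_*\rrho)^2$, giving $|b_{g_n,k}-Y_{i_k}| \le \sqrt{n}(\rsigma+2\lip_*\rrho)$. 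Combining this with the bound $|Y_{i_k}-y_0| \le \rsigma+2\lip_*\rrho$ from $\event_\gamma$ via the triangle inequality yields $|b_{g_n,k}-y_0| \le (\sqrt{n}+1)(\rsigma+2\lip_*\rrho) = \Ordo(\sqrt{dn}(\rsigma+\lip_*\rrho))$.

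Finally, I would take $\bnd_1$ to be a large enough constant multiple of $\sqrt{dn}(\rsigma+\lip_*\rrho)$ that upper bounds both displays; the lower bound $\rsigma\sqrt{d} \le \bnd_1$ is then automatic since $\sqrt{n} \ge 1$. The main obstacle is the bias estimate: because the biases are unregularized, the only available handle is the pointwise residual at the center points, and passing from the averaged empirical risk to a single residual costs a factor of $\sqrt{n}$. This is why the bound is deliberately ``loose'' (as the lemma's name suggests), but the looseness is harmless downstream, where $\bnd_1$ enters only through the logarithm of a covering number via \cref{thm:func-cover,thm:conc-ineq-inner}.
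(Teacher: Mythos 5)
Your proposal is correct and follows essentially the same route as the paper: both bound the slopes by extracting the single term $\theta_1(\norm{\vw_{n,k}}-\theta_0)_+^2 \ge \frac{dR_{\setX_n}^2}{n}(\norm{\vw_{n,k}}-\theta_0)_+^2$ from the regularizer, and bound the biases by isolating the $i_k$-th residual $(b_{n,k}-Y_{i_k})^2 \le n\,\risk_n(g_n)$ at the center points and applying the triangle inequality with the $\event_\gamma$ bound on $|Y_{i_k}-y_0|$, all fed by \cref{thm:risk-gn-ub} and the scalings in \eqref{eq:dcf-params}. No gaps.
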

\begin{proof}
  Since $\setXhat_K \subseteq \setX_n$, we can define $i_k \in [n]$ for each $k \in [K]$ such that $(\vXhat_k, Y_{i_k}) \in \data$.
  By the definition of $g_n$, we have $b_{n,k} = g_n(\vXhat_k)$.
  Hence, by \cref{thm:risk-gn-ub}, and $\theta_1 \ge R_{\setX_n}^2 (d/n)$ from \eqref{eq:dcf-params}, it follows that
  \begin{equation*}
    \frac1n \max_{k\in[K]}\Big\{(b_{n,k}-Y_{i_k})^2, d R_{\setX_n}^2 (\norm{\vw_{n,k}} - \theta_0)_+^2\Big\}
    \le \risk_n(g_n) + \regz_{\vtheta}(g_n)
    \le (\rsigma + 2 \lip_* \rrho)^2 .
  \end{equation*}
  Then, by the triangle inequality and \cref{thm:event}, $\event_\gamma$ implies $|b_{n,k}-y_0| \le (1+\sqrt{n})(\rsigma + 2 \lip_* \rrho)$.
  Further, we also get $\sqrt{d} R_{\setX_n} \norm{\vw_{n,k}} \le \sqrt{d} R_{\setX_n} \theta_0 + \sqrt{n}(\rsigma + 2 \lip_* \rrho)$.
  Therefore, $\bnd_1$ can be chosen as claimed as $R_{\setX_n} \theta_0 = \Ordo\big(R_{\setY_n}\ln(n)\big)$ by \eqref{eq:dcf-params}, $\ln(n) \le \sqrt{n}$, and $R_{\setY_n} = \Ordo(\rsigma + \lip_*\rrho)$ from \cref{thm:event}.
\end{proof}

The bound $\bnd_1$ in \cref{thm:loose-param-bounds} scales with $\sqrt{n}$, making it too loose to directly bound the Lipschitz constant of $f_n$ for establishing the near-minimax rate.
However, we can still use the bounded class $\setGbar_\somenorm(\setXhat_K)$ in the concentration inequality of \cref{thm:conc-ineq-inner} to upper bound the inner product terms of \eqref{eq:Eapprox-4} in the next result.
The key observation is that $\bnd_1$ appears only inside the logarithmic term.

\begin{lemma} \label{thm:inner-product-terms}
  Let $g_*$ be an approximation of $f_* \in \setF_{\lip_*,\setX_*}$ as in \cref{thm:gstar}.
  For any $\delta \in (0,\bnd_1]$, define event $\event_{\gamma,\delta}$ as
  \begin{equation*} \begin{split}
    \event_{\gamma,\delta} \defeq \event_\gamma \cap \bigg\{
    \langle f_n - g_n, f_* - y \rangle_n
    &= \Ordo\Big(\sigma \big(\norm{f_n-g_n}_n + \delta\big) \sqrt{d K \ln\big(\bnd_1/(\delta\gamma)\big) / n} + \rsigma \delta \Big) ,
    \\
    \langle g_* - f_* , f_* - y \rangle_n
    &= \Ordo\Big( \sigma \norm{g_* - f_*}_n \sqrt{\ln(1/\gamma)/n} \Big),
    \\
    \langle g_* - g_n , f_* - y \rangle_n
    &= \Ordo\Big(\sigma \big(\norm{g_* - g_n}_n + \delta\big) \sqrt{d K \ln\big(\bnd_1/(\delta\gamma)\big)/n} + \rsigma \delta \Big)
      \bigg\} .
  \end{split} \end{equation*}
Then, $\Prob\{\event_{\gamma,\delta}\} \ge 1 - 5\gamma$.
\end{lemma}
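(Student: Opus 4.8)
The plan is to control each of the three inner products by invoking the concentration inequality of \cref{thm:conc-ineq-inner} against a suitably chosen function class that (i) contains the relevant data-dependent function, (ii) is conditionally independent of $\setY_n$ given $\setX_n$, and (iii) admits a controllable covering number; a final union bound then assembles $\event_{\gamma,\delta}$. Throughout, I would condition on $\event_\gamma$, which holds with probability at least $1-2\gamma$ by \cref{thm:event} and, via \cref{thm:loose-param-bounds}, guarantees $g_n \in \setGbar_\somenorm(\setXhat_K)$ as well as $\norm{f_* - y}_n \le \rsigma$ (the latter being exactly the ingredient \cref{thm:conc-ineq-inner} needs for its $\rsigma\delta$ term).

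For the middle term, note that $g_*$ from \cref{thm:gstar} is determined by $f_*$ and the centers $\setXhat_K$ (which depend only on $\setX_n$ and the AFPC initialization), so $g_* - f_*$ is a single fixed function conditional on $\setX_n$. Applying \cref{thm:conc-ineq-inner} to the singleton class $\{g_* - f_*\}$, whose covering number is $1$, and letting $\delta \to 0$, yields the stated $\Ordo\big(\sigma\norm{g_* - f_*}_n\sqrt{\ln(1/\gamma)/n}\big)$ bound with no $dK$ factor.

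For the first term I would use $\setH_n \defeq \{\proj(g) - g : g \in \setGbar_\somenorm(\setXhat_K)\}$. On $\event_\gamma$ we have $f_n = \proj(g_n)$ with $g_n \in \setGbar_\somenorm(\setXhat_K)$, so $f_n - g_n \in \setH_n$, and the class depends only on $\setX_n$. I would equip it with the parameter-induced metric $\psi_n(h,\hat h) \defeq 2\max_{k\in[K]}\big(|b_{g,k} - b_{\hat g,k}| + 2\cphi R_{\setX_n}\norm{\vw_{g,k} - \vw_{\hat g,k}}\big)$; applying \cref{thm:pointwise-distance} to both $\proj(g) - \proj(\hat g) \in \setF_\somenorm(\setXhat_K)$ and $g - \hat g \in \setG_\somenorm(\setXhat_K)$, together with $\norm{\vX_i - \vXhat_k} \le 2R_{\setX_n}$, shows $\norm{h - \hat h}_n \le \psi_n(h,\hat h)$. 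Feeding the parameter balls of $\setGbar_\somenorm(\setXhat_K)$ into \cref{thm:func-cover}, the $\vw$-radius $\bnd_1/(\sqrt d\,R_{\setX_n})$ cancels the $R_{\setX_n}$ in the metric, giving $\beta = \Ordo(\bnd_1)$ and ambient dimension $t = K(1 + d_\somenorm) = \Ordo(dK)$, hence $\ln N_{\psi_n}(\setH_n,\delta) = \Ordo\big(dK\ln(\bnd_1/\delta)\big)$. \cref{thm:conc-ineq-inner} then gives the first bound, absorbing $\ln(1/\gamma)$ into $dK\ln\big(\bnd_1/(\delta\gamma)\big)$ since $dK \ge 1$ and $\delta \le \bnd_1$. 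The last term is handled identically with $\setH_n \defeq \{g_* - g : g \in \setGbar_\somenorm(\setXhat_K)\}$, which contains $g_* - g_n$ (as $\setG_\somenorm(\setXhat_K)$ is a vector space) and carries the same covering estimate.

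Finally, a union bound combines the $2\gamma$ failure probability of $\event_\gamma$ with the three $\gamma$-probability failures of the concentration applications, giving $\Prob\{\event_{\gamma,\delta}\} \ge 1-5\gamma$. I expect the covering-number step to be the main obstacle: one must check that the metric's $R_{\setX_n}$ dependence cancels against the parameter-ball radii from \cref{thm:loose-param-bounds}, and that $\cphi/\sqrt d = \Ordo(1)$ uniformly over $\somenorm \in \{1,2,\infty,+\}$ (which holds since $\cphi \le \sqrt{1+d}$), so that $\beta = \Ordo(\bnd_1)$ and the log-covering number scales as $\Ordo\big(dK\ln(\bnd_1/\delta)\big)$ in every case; the crucial structural point that makes the bound useful is that $\bnd_1$, despite scaling like $\sqrt n$, enters only inside the logarithm.
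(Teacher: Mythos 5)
Your proposal is correct and follows essentially the same route as the paper: the same three classes ($\{\proj(g)-g\}$ and $\{g_*-g\}$ over $\setGbar_\somenorm(\setXhat_K)$, plus the singleton $\{g_*-f_*\}$ with $\delta\to 0$), the same parameter-induced metric validated via \cref{thm:pointwise-distance}, the same $\Ordo(dK\ln(\bnd_1/\delta))$ covering bound from \cref{thm:func-cover}, and the same $2\gamma+3\gamma$ union bound. The cancellation of $R_{\setX_n}$ against the slope-radius $\bnd_1/(\sqrt d\,R_{\setX_n})$ and the observation that $\bnd_1$ enters only logarithmically are exactly the points the paper relies on.
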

\begin{proof}
  Define the metric $\psi$ by $\psi(h,\hat{h}) \defeq \max_{k\in[K]}|b_{h,k} - b_{\hat{h},k}| + 2 R_{\setX_n} \sqrt{1+d} \norm{\vw_{h,k} - \vw_{\hat{h},k}}$ for all $h, \hat{h} \in \setH$, where $\setH \in \{\setFn(\setXhat_K), \setG_\somenorm(\setXhat_K)\}$.
  Note that $\cphi^2 \le 1+d$ by definition, and $\norm{\vX_i - \vXhat_k} \le 2R_{\setX_n}$ holds for all $i\in[n]$ and $k\in[K]$ by the triangle inequality.
  Therefore, by \cref{thm:pointwise-distance}, we have $\norm{h - \hat{h}}_n \le \max_{i\in[n]}|h(\vX_i) - \hat{h}(\vX_i)| \le \psi(h,\hat{h})$.
  Additionally, note that $\psi(g,\hat{g}) = \Ordo(\bnd_1)$ and $\psi\big(\proj(g),\proj(\hat{g})\big) = \Ordo(\bnd_1)$ for all $g, \hat{g} \in \setGbar_\somenorm(\setXhat_K)$.
  Furthermore, event $\event_{\gamma}$ implies $\norm{f_* - y}_n \le \rsigma$.

Let $\setH_1 \defeq \{f-g : g \in \setGbar_\somenorm(\setXhat_K), f = \proj(g)\}$, where $f_n - g_n \in \setH_1$ by $f_n = \proj(g_n)$ and \cref{thm:loose-param-bounds}.
For all $h, \hat{h} \in \setH_1$ with $h = f - g$ and $\hat{h} = \hat{f} - \hat{g}$, define the metric $\psi_1(h,\hat{h}) \defeq \psi(f,\hat{f}) + \psi(g,\hat{g})$.
By using \cref{thm:func-cover}, the bound $\bnd_1$ on the parameter magnitudes within $\setGbar_\somenorm(\setXhat_K)$, and since the parameters of the functions $g$ and $\proj(g)$ are the same, we have $\ln N_{\psi_1}(\setH_1,\delta) = \Ordo\big(d K \ln(\bnd_1/\delta)\big)$ for all $\delta \in (0,\bnd_1]$.
Then, the first inequality in $\event_{\gamma,\delta}$ holds with probability at least $1-\gamma$ by \cref{thm:conc-ineq-inner}, using $\setH_n \leftarrow \setH_1$, and $\psi_n \leftarrow \psi_1$.

Define the singleton set $\setH_2 \defeq \{g_* - f_*\}$.
Then, the second inequality in $\event_{\gamma,\delta}$ holds with probability at least $1-\gamma$ by \cref{thm:conc-ineq-inner}, using $\setH_n \leftarrow \setH_2$, the zero constant function for $\psi_n$, and taking the limit $\delta \to 0$.

Let $\setH_3 \defeq \{g_* - g : g \in \setGbar_\somenorm(\setXhat_K)\}$, where $g_* - g_n \in \setH_3$ by \cref{thm:loose-param-bounds}.
For all $h, \hat{h} \in \setH_3$ with $h = g_* - g$ and $\hat{h} = g_* - \hat{g}$, define $\psi_3(h,\hat{h}) \defeq \psi(g,\hat{g})$.
By \cref{thm:func-cover}, and the bound $\bnd_1$ on the parameter magnitudes within $\setGbar_\somenorm(\setXhat_K)$, we have $\ln N_{\psi_3}(\setH_3,\delta) = \Ordo\big(d K \ln(\bnd_1/\delta)\big)$ for all $\delta \in (0,\bnd_1]$.
Then, the third inequality in $\event_{\gamma,\delta}$ holds with probability at least $1-\gamma$ by \cref{thm:conc-ineq-inner}, using $\setH_n \leftarrow \setH_3$, and $\psi_n \leftarrow \psi_3$.

Finally, the result follows by combining the three cases with \cref{thm:event}.
\end{proof}

We now bound $\norm{g_n - g_*}_n$ and $\lipf_{f_n}$ by combining the result of \cref{thm:inner-product-terms} with the ``basic inequality'' \eqref{eq:erm-ineq-rearranged}.
This, in turn, yields a bound on the approximation error $\Eapprox$ via \eqref{eq:Eapprox-4}.

\begin{lemma} \label{thm:Eapprox}
  Let $g_*$ be an approximation of $f_* \in \setF_{\lip_*,\setX_*}$ as in \cref{thm:gstar}.
  Set $\delta_n \defeq \rsigma \sqrt{d} K/n$, and suppose that $\event_{\gamma,\delta_n}$ and \eqref{eq:dcf-params} hold.
  Then, for some $\lipf_0 > 0$, the following bounds hold:
  \begin{equation*} \begin{split}
    \lipf_{f_n}^2 &\le \lipf_0^2 = \Theta\big(\theta_0^2 + \clip^2 \lip_*^2 + \sigma^2 \ln(\bnd_2/\gamma)\big) ,\\
    \norm{g_n - g_*}_n^2 &= \Ordo\bigg(\frac{d K}{n} \Big( \clip^2 (1+\rrho^2)\lip_*^2 + \sigma^2\ln(\bnd_2/\gamma) \Big)\bigg) , \\
    \Eapprox &= \Ordo\Big(\frac{d K}{n} (1 + \rrho^2) \lipf_0^2\Big) ,
  \end{split} \end{equation*}
  where $\bnd_2 \defeq n \bnd_1 / (\rsigma\sqrt{d})$ satisfies $n \le \bnd_2 = \Theta\big(n^{3/2}(1 + \lip_* \rho / \sigma)\big)$.
\end{lemma}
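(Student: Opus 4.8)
The plan is to extract both $\Ln$ and $\norm{g_n-g_*}_n$ from the ``basic inequality'' \eqref{eq:erm-ineq-rearranged}, and then substitute the resulting bounds into \eqref{eq:Eapprox-4} to control $\Eapprox$. Before doing so I would settle the bookkeeping around $\delta_n \defeq \rsigma\sqrt{d}K/n$: plugging it into the logarithmic factor from \cref{thm:inner-product-terms} gives $\bnd_1/(\delta_n\gamma) = \bnd_2/(K\gamma) \le \bnd_2/\gamma$ (using $K \ge 1$ and $\bnd_2 = n\bnd_1/(\rsigma\sqrt{d})$), so every such factor entering $\event_{\gamma,\delta_n}$ is $\Ordo(\ln(\bnd_2/\gamma))$. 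The stated scaling $n \le \bnd_2 = \Theta\big(n^{3/2}(1+\lip_*\rho/\sigma)\big)$ then follows from $\bnd_1 = \Theta\big(\sqrt{dn}(\rsigma+\lip_*\rrho)\big)$ in \cref{thm:loose-param-bounds} together with $\rrho/\rsigma = \rho/\sigma$. Throughout, I abbreviate $\Lambda \defeq \sqrt{dK\ln(\bnd_2/\gamma)/n}$ and $\epsilon \defeq \epsilon_n(\setXhat_K)$, recalling $\epsilon^2 = \Ordo(R_{\setX_n}^2 K/n)$ and $R_{\setX_n} \le 2\rrho$ on $\event_\gamma$.

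Next I would take \eqref{eq:erm-ineq-rearranged} and substitute the third bound of $\event_{\gamma,\delta_n}$ for $\langle g_* - g_n, f_* - y\rangle_n$. This couples $\norm{g_n-g_*}_n^2$ on the left with a term proportional to $\sigma\norm{g_*-g_n}_n\Lambda$ on the right, which I would decouple via Young's inequality $2ab \le \frac14 a^2 + 4b^2$ to absorb a copy of $\frac14\norm{g_n-g_*}_n^2$ into the left-hand side. The remaining right-hand side is then controlled termwise: $\norm{g_*-f_*}_n^2 = \Ordo(\capx^2\lip_*^2\epsilon^2)$ by \cref{thm:gstar}, the residual terms $\sigma\delta_n\Lambda$ and $\rsigma\delta_n$ are of strictly higher order in $K/n$, $\sigma^2\Lambda^2 = \Ordo\big(\frac{dK}{n}\sigma^2\ln(\bnd_2/\gamma)\big)$, and $\regz_{\vtheta}(g_*) = \Ordo(\theta_1\clip^2\lip_*^2) = \Ordo\big(\frac{dK}{n}(1+\rrho^2)\clip^2\lip_*^2\big)$ using \eqref{eq:dcf-params}. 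Since $\capx^2 = \Ordo(d)$, collecting these yields the claimed bound on $\norm{g_n-g_*}_n^2$.

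For $\Ln$ I would reuse the same decoupled right-hand side but bound the left via $\theta_1(\Ln-\theta_0)_+^2 \le \regz_{\vtheta}(g_n)$ and divide by $\theta_1 = \Omega(dK/n)$. The key cancellations are $\norm{g_*-f_*}_n^2/\theta_1 = \Ordo(\lip_*^2)$ (the factors $R_{\setX_n}^2/\max\{1,R_{\setX_n}^2\} \le 1$ and $\capx^2/d = \Ordo(1)$ remove the design dependence) and $\sigma^2\Lambda^2/\theta_1 = \Ordo\big(\sigma^2\ln(\bnd_2/\gamma)\big)$; absorbing $\lip_*^2$ into $\clip^2\lip_*^2$ via $\clip \ge 1$ then gives $(\Ln-\theta_0)_+^2 = \Ordo\big(\clip^2\lip_*^2 + \sigma^2\ln(\bnd_2/\gamma)\big)$, and hence $\Ln^2 = \Ordo\big(\theta_0^2 + \clip^2\lip_*^2 + \sigma^2\ln(\bnd_2/\gamma)\big) = \lip_0^2$.

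Finally I would bound $\Eapprox$ through \eqref{eq:Eapprox-4}. The explicit terms are immediate: $(\Ln^2 + \capx^2\lip_*^2)\epsilon^2 = \Ordo(d\lip_0^2\rrho^2 K/n)$ using $\Ln^2 \le \lip_0^2$, $\capx^2\lip_*^2 = \Ordo(d\lip_0^2)$, and $\epsilon^2 = \Ordo(\rrho^2 K/n)$, while $\theta_1\clip^2\lip_*^2 = \Ordo\big(\frac{dK}{n}(1+\rrho^2)\lip_0^2\big)$ since $\clip^2\lip_*^2 \le \lip_0^2$. The three inner-product terms come from $\event_{\gamma,\delta_n}$ combined with $\norm{f_n-g_n}_n = \Ordo(\Ln\epsilon)$ (\cref{thm:distance-fngn}), $\norm{g_*-f_*}_n = \Ordo(\capx\lip_*\epsilon)$ (\cref{thm:gstar}), and the $\norm{g_n-g_*}_n$ bound just obtained. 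The single observation that unlocks all of them is that $\lip_0^2$ dominates $\sigma^2\ln(\bnd_2/\gamma)$ by construction, so every cross term $\sigma\,(\text{norm})\,\Lambda$ can be converted, via $\sigma\sqrt{\ln(\bnd_2/\gamma)} = \Ordo(\lip_0)$, into a multiple of $\frac{dK}{n}(1+\rrho^2)\lip_0^2$. I expect the main obstacle to be exactly this bookkeeping: managing the mutual coupling of $\norm{g_n-g_*}_n$ and $\Ln$ in the basic inequality, and verifying that each of the many residual ($\delta_n$- and cross-) terms genuinely falls within the single target rate $\frac{dK}{n}(1+\rrho^2)\lip_0^2$, which rests on $\lip_0^2$ simultaneously dominating $\theta_0^2$, $\clip^2\lip_*^2$, and $\sigma^2\ln(\bnd_2/\gamma)$.
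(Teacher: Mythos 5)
Your proposal is correct and follows essentially the same route as the paper's proof: both feed the concentration bounds from $\event_{\gamma,\delta_n}$ into the basic inequality \eqref{eq:erm-ineq-rearranged}, decouple $\norm{g_n-g_*}_n^2$ via Young's inequality, read off $(\Ln-\theta_0)_+^2$ from $\regz_{\vtheta}(g_n) \ge \theta_1(\Ln-\theta_0)_+^2$ after dividing by $\theta_1$, and then substitute everything into \eqref{eq:Eapprox-4}. The only quibble is cosmetic: $\rsigma\delta_n = \rsigma^2\sqrt{d}K/n$ is of the same order in $K/n$ as the target (not strictly higher order), but it is still dominated by $\tfrac{dK}{n}\sigma^2\ln(\bnd_2/\gamma)$, so the conclusion stands.
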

\begin{proof}
  Notice that $\delta_n \in (0,\bnd_1]$ by definition since $K \le n$.
  Using $ab = (a/c)(cb) \le a^2/(2c^2) + b^2(c^2/2)$ for all $a, b \in \setR$ and $c > 0$, we obtain from \cref{thm:inner-product-terms} and $\bnd_1/\delta_n = \Ordo(\bnd_2)$ that
  \begin{equation} \label{eq:Eapprox-proof-1} \begin{split}
    \langle g_* - g_n , f_* - y \rangle_n
    &= \frac18\norm{g_* - g_n}_n^2 + \Ordo\Big(\delta_n^2 + \frac{d K \sigma^2}{n}\ln\big(\bnd_1/(\delta_n \gamma)\big) + \rsigma \delta_n\Big)
    \\
    &= \frac18\norm{g_* - g_n}_n^2 + \Ordo\Big(\frac{d K \sigma^2}{n} \ln(\bnd_2/\gamma)\Big) .
  \end{split}\end{equation}
From \cref{thm:gstar}, we have $\norm{g_* - f_*}_n^2 = \Ordo\big(d \lip_*^2 \epsilon_n^2(\setXhat_K)\big)$ by $\capx^2 = \Ordo(d)$, and $\regz_{\vtheta}(g_*) = \Ordo\big(\theta_1 \clip^2 \lip_*^2\big)$ by $\theta_1 \ge K\theta_2$ from \eqref{eq:dcf-params}.
Recall that $\theta_1 = \Theta\big(\max\{1,R_{\setX_n}^2\} d K / n\big)$ from \eqref{eq:dcf-params}.
Further, $\epsilon_n^2(\setXhat_K) = \Ordo(R_{\setX_n}^2 K / n)$ from \cref{thm:AFPC}.
Therefore, using $\regz_{\vtheta}(g_n) \ge \theta_1 (\lipf_{f_n} - \theta_0)_+^2$, $\lipf_{f_n} = \lipf_{g_n}$, $\clip \ge 1$, and combining \eqref{eq:Eapprox-proof-1} with the ``basic inequality'' \eqref{eq:erm-ineq-rearranged}, we obtain
\begin{equation} \label{eq:Eapprox-proof-2}
  \frac14 \norm{g_* - g_n}_n^2 + \max\{1,R_{\setX_n}^2\}\frac{d K}{n} (\lipf_{f_n} - \theta_0)_+^2
  = \Ordo\bigg(\hspace{-0.5mm}\frac{d K}{n}\Big(\clip^2 \max\{1,R_{\setX_n}^2\} \lip_*^2 + \sigma^2 \ln(\bnd_2/\gamma)\Big)\hspace{-1mm}\bigg) .
\end{equation}
Then, the claims for $\lipf_{f_n}^2$ and $\norm{g_n - g_*}_n^2$ follow from \eqref{eq:Eapprox-proof-2} after rearranging terms, and applying $R_{\setX_n} = \Ordo(\rrho)$ which follows from $\event_{\gamma,\delta_n} \subset \event_\gamma$ by \cref{thm:event}.

Similarly to the derivation of \eqref{eq:Eapprox-proof-1}, \cref{thm:inner-product-terms} yields for $\delta_n$ that
\begin{equation} \label{eq:Eapprox-proof-3}
  \langle f_n - g_n, f_* - y \rangle_n + \langle g_* - f_*, f_* - y \rangle_n
  = \Ordo\Big( \norm{f_n - g_n}_n^2 + \norm{g_* - f_*}_n^2 + \frac{d K }{n} \sigma^2 \ln(\bnd_2 / \gamma) \Big) .
\end{equation}
Since $\norm{f_n - g_n}_n^2 = \Ordo\big(\lipf_{f_n}^2 \lipphi^2 \epsilon_n^2(\setXhat_K)\big)$ by \cref{thm:distance-fngn}, and $\norm{g_* - f_*}_n^2 = \Ordo\big(\capx^2 \lip_*^2 \epsilon_n^2(\setXhat_K)\big)$ by \cref{thm:gstar}, we upper bound the approximation error $\Eapprox$ by combining \eqref{eq:Eapprox-4} with \eqref{eq:Eapprox-proof-1}, the bound on $\norm{g_*-g_n}_n^2$, and \eqref{eq:Eapprox-proof-3} as
\begin{equation} \begin{split}
  \Eapprox
  = \, &\Ordo\bigg(
         \big(\lipf_{f_n}^2\lipphi^2 + \capx^2 \lip_*^2\big) \epsilon_n^2(\setXhat_K) + \theta_1 \clip^2 \lip_*^2
         + \frac{d K}{n}\Big(\clip^2 (1 + \rrho^2) \lip_*^2 + \sigma^2 \ln(\bnd_2/\gamma)\Big)
         \bigg) ,
\end{split} \end{equation}
which proves the claim on $\Eapprox$, as $\capx^2 = \Ordo(d)$ by definition, $\max\{\lip_*, \clip\lip_*, \lipf_{f_n}\} \le \lipf_0$, $\epsilon_n^2(\setXhat_K) = \Ordo(R_{\setX_n}^2K/n)$ by \cref{thm:AFPC}, $R_{\setX_n} = \Ordo(\rrho)$, and $\lipf_{f_n}^2 \lipphi^2 \epsilon_n^2(\setXhat_K) = \Ordo\big(\frac{dK}{n} \rrho^2 \lipf_0^2\big)$ by $\lipphi^2 = \Ordo(d)$.
The bounds on $\bnd_2$ hold by definition.
\end{proof}

In the proof of \cref{thm:Eapprox}, for bounding $\lipf_{f_n}$, we used that $\theta_1$ in \eqref{eq:dcf-params} scales with $\max\{1,R_{\setX_n}^2\}$ rather than just $R_{\setX_n}^2$.
This choice is necessary because, in the latter case, we cannot ensure that $\sigma^2 / R_{\setX_n}^2$ remains upper bounded in the setting of \cref{thm:near-minimax-rate}.

Notice that the bound $\bnd_1$ on the slope parameters from \cref{thm:loose-param-bounds} is improved by the bound on $\lipf_{f_n}$ from \cref{thm:Eapprox}, replacing the earlier $\sqrt{n}$ dependence with $\theta_0 + \sqrt{\ln(n)}$.
This improvement will be important for applying our concentration inequality in the random design setting and obtaining the near-minimax rate.

\subsubsection{Applying the Concentration Inequality in the Random Design}
\label{sec:conc-ineq}

After bounding the approximation error $\Eapprox = \norm{\fni - y}_n^2 - \norm{f_* - y}_n^2$ of the DCF estimator $\fni$ to the regression function $f_*$, it remains to bound the first term in \eqref{eq:error-decomp}, namely \mbox{$\norm{\fni - f_*}_*^2 - \offsetc\Eapprox$}, in order to complete the proof of \cref{thm:near-minimax-rate}.
To this end, we use the following concentration inequality, which builds on the Bernstein inequality and leverages the results of \citet{BalazsGyorgySzepesvari2016}, as applied here to the product of subgaussian random variables.
\begin{lemma} \label{thm:conc-ineq-Bernstein}
  Let $\setF$ be a finite, nonempty set, and $n\in\setN$.
  For each $f\in\setF$ and \mbox{$i\in[n] \cup \{0\}$}, let $Z_{f,i}, W_{f,i}$ be real-valued, subgaussian random variables satisfying $\E\big[e^{Z_{f,i}^2/\omega^2}\big] \le 2$ and $\E\big[e^{W_{f,i}^2/\nu^2}\big] \le 2$ for some $\omega, \nu > 0$.
  Suppose that, for each fixed $f \in \setF$, the random pairs $(Z_{f,0},W_{f,0}), (Z_{f,1},W_{f,1}), \ldots, (Z_{f,n},W_{f,n})$ are \iid,
  and that there exist constants $\alpha, \mu > 0$ such that $\mu^2 \le \E[Z_{f,0}^2] \le \alpha \E[Z_{f,0}W_{f,0}]$ for all $f \in \setF$.
  Then, for all $\gamma \in (0,1)$, it holds with probability at least $1-\gamma$ that
  \begin{equation*}
    \max_{f\in\setF}\Big\{\E\big[Z_{f,0}W_{f,0}\big] - \frac{2}{n}\sum_{i=1}^n Z_{f,i}W_{f,i}\Big\}
    \le 16 (\omega + 2\alpha\nu) \nu \, \frac{\ln(3\omega/\mu)}{n} \, \ln(|\setF|/\gamma) .
  \end{equation*}
\end{lemma}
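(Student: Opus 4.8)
The plan is to recognize this as a one-sided, offset Bernstein inequality: the factor $2$ multiplying the empirical average is precisely the offset that converts the usual $\Ordo(\sqrt{1/n})$ deviation into the fast $\Ordo(1/n)$ rate, exactly as the constant $\offsetc > 1$ does in \eqref{eq:error-decomp}. First I would union-bound over the finite set $\setF$, so that it suffices to prove, for each fixed $f \in \setF$ and with failure probability $\gamma/|\setF|$, the single-index version of the bound; this produces the $\ln(|\setF|/\gamma)$ factor. Fixing $f$, write $U_i \defeq Z_{f,i}W_{f,i}$ for $i \in [n]\cup\{0\}$ and set $m \defeq \E[U_0]$. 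The hypothesis $\E[Z_{f,0}^2] \le \alpha\,\E[Z_{f,0}W_{f,0}]$ together with $\E[Z_{f,0}^2] \ge \mu^2$ gives $m \ge \mu^2/\alpha > 0$; this strict positivity of the mean is what makes the offset cancellation possible.

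Next I would establish the subexponential behaviour of the product. From $\E[e^{Z_{f,0}^2/\omega^2}] \le 2$ one reads off $\E[Z_{f,0}^{2k}] \le 2\,k!\,\omega^{2k}$, and analogously $\E[W_{f,0}^{2k}] \le 2\,k!\,\nu^{2k}$; Cauchy--Schwarz then yields $\E[|U_0|^k] \le 2\,k!\,(\omega\nu)^k$, i.e.\ a Bernstein moment condition with scale $\Theta(\omega\nu)$. Applying the offset Bernstein bound of \citet{BalazsGyorgySzepesvari2016} to the \iid\ sequence $U_1,\ldots,U_n$ gives, on a high-probability event, $\tfrac1n\sum_{i=1}^n U_i \ge m - \sqrt{2vL/n} - bL/n$, where $v$ is a variance proxy for $U_0$, $b = \Theta(\omega\nu)$ is its subexponential scale, and $L \defeq \ln(|\setF|/\gamma)$. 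Combining this with the AM--GM step $2\sqrt{2vL/n} \le m + 2vL/(nm)$,
\[
  m - \frac2n\sum_{i=1}^n U_i
  \;=\; -m - 2\big(\tfrac1n\sum_{i=1}^n U_i - m\big)
  \;\le\; \frac{2v\,L}{n\,m} + \frac{2b\,L}{n},
\]
where the $-m$ term is exactly absorbed by the square-root deviation.

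The crux is converting the ratio $v/m$ into the stated $\alpha\nu^2\ln(3\omega/\mu)$ while keeping the scale contribution at $\omega\nu\ln(3\omega/\mu)$, so that both collapse into the common factor $(\omega + 2\alpha\nu)\nu\,\ln(3\omega/\mu)$. Here I would use $v \le \E[U_0^2] = \E[Z_{f,0}^2 W_{f,0}^2]$ and $m \ge \E[Z_{f,0}^2]/\alpha$ to write $v/m \le \alpha\,\E[Z_{f,0}^2 W_{f,0}^2]/\E[Z_{f,0}^2]$, then bound this weighted second moment by truncating $W_{f,0}$ at a level $\tau \asymp \nu\sqrt{\ln(\omega/\mu)}$: the bulk contributes $\tau^2\E[Z_{f,0}^2]$, while the tail is controlled by Cauchy--Schwarz through $\E[Z_{f,0}^4]^{1/2} = \Ordo(\omega^2)$ and the subgaussian tail $\Prob\{|W_{f,0}| > \tau\} \le 2e^{-\tau^2/\nu^2}$, so that after dividing by $\E[Z_{f,0}^2] \ge \mu^2$ the tail is dominated by the bulk. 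This is exactly where the lower bound $\mu^2 \le \E[Z_{f,0}^2]$ enters and where a \emph{logarithmic} factor $\ln(3\omega/\mu)$ is produced, the range $3\omega/\mu$ reflecting that $\sqrt{\E[Z_{f,0}^2]}$ lives in $[\mu,\Ordo(\omega)]$.

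I expect the main obstacle to be exactly this last step: securing only a logarithmic dependence on $\mu$. A crude variance bound $v \le \Ordo(\omega^2\nu^2)$ combined with $m \ge \mu^2/\alpha$ would yield an unacceptable $1/\mu^2$ factor; avoiding it requires tying $\E[Z_{f,0}^2 W_{f,0}^2]$ back to $\E[Z_{f,0}^2]$ via the truncation above, and then calibrating the truncation level together with the numerical constants (giving the explicit $16$ and the argument $3\omega/\mu$) so that the scale and variance terms carry the same logarithmic factor. The remaining work — verifying the constants in the moment condition and assembling the per-$f$ estimates under the union bound — is routine.
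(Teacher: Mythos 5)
Your overall strategy is the same as the paper's: union bound over $\setF$, a Bernstein-type bound for the product $Z_{f,i}W_{f,i}$ whose variance proxy is tied to $\E[Z_{f,0}^2]$ (hence, via the condition $\E[Z_{f,0}^2]\le\alpha\E[Z_{f,0}W_{f,0}]$, to the mean itself), and the offset cancellation that turns the $\sqrt{1/n}$ deviation into a $1/n$ rate. Your AM--GM step $2\sqrt{2vL/n}\le m + 2vL/(nm)$ is an exact reformulation of the paper's Chernoff computation with the choice $s=\omega/(\omega+2\alpha\nu)$, and you correctly identify the crux: a crude variance bound $v=\Ordo(\omega^2\nu^2)$ would produce a $1/\mu^2$ factor, so the variance proxy must be pinned to $\E[Z_{f,0}^2]$ at the cost of only a $\ln(\omega/\mu)$ inflation.

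There is, however, a genuine gap in how you justify the Bernstein step. A Bernstein inequality with variance proxy $v$ and scale $b$ requires the \emph{entire} moment hierarchy $\E[|U_0|^k]\le(k!/2)\,v\,b^{k-2}$ for all $k\ge 2$, and your truncation argument only establishes the $k=2$ case, $\E[Z_{f,0}^2W_{f,0}^2]\le\E[Z_{f,0}^2]\cdot\Ordo(\nu^2\ln(\omega/\mu))$. Moreover, your claim that the scale can be kept at $b=\Theta(\omega\nu)$ while using this refined variance proxy is false: take $Z$ equal to $\omega s$ with probability $e^{-s^2}$ and $0$ otherwise (with $W$ essentially constant); then $\E[Z^2]=\omega^2s^2e^{-s^2}$ while $\E[|Z|^3]=\omega^3 s^3e^{-s^2}$, so the $k=3$ moment condition forces $b\gtrsim \omega\nu\, s \asymp \omega\nu\sqrt{\ln(\omega^2/\E[Z^2])}$, i.e.\ the scale must also absorb a factor growing with $\omega/\mu$. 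You do hedge at the end by saying both terms should carry the same logarithmic factor (which the target bound accommodates), but the verification of the full moment condition with a common factor $c=\Theta(\ln(\omega/\mu))$ in both $v$ and $b$ is precisely the nontrivial content that the paper imports as a separate lemma (part~(a) of its \cref{thm:bernstein-ineq}, i.e.\ Lemma~A.5 of \citealp{Balazs2016}, which controls all moments via the kurtosis $\K[Z]\le 2(\omega/\mu)^4$ and sets $c\ge 4\ln(4\sqrt{\K[Z]})$). Calling this ``routine'' understates it; without it the Bernstein bound you invoke is not available with the $(v,b)$ pair you need.
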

\begin{proof}
  See \cref{sec:proof-conc-ineq}.
\end{proof}

The condition $\E[Z_{f,0}^2] \le \alpha \E[Z_{f,0}W_{f,0}]$ in \cref{thm:conc-ineq-Bernstein} is related to the widely used Bernstein condition \citep[Definition~2.6]{BartlettMendelson2006}.

In order to apply \cref{thm:conc-ineq-Bernstein}, we need to show that $\fni$ belongs to a (non-random) function class with a bounded covering number.
From \cref{thm:fniL} we already know that $\lipf_{\fni} \le (1+\theta_3) \lipf_{f_n}$, where $\lipf_{f_n}$ is further bounded in \cref{thm:Eapprox} as $\lipf_{f_n} \le \lipf_0$.
The next result provides the missing upper bound on the magnitudes of the bias parameters of $\fni$.
This bound follows from properties of $\fni$ ensured by the final step in the definition of the DCF estimator~\eqref{eq:dcf-estimator}.
Unlike in \cref{sec:Eapprox-bound}, here the bound on the bias terms cannot be allowed to scale polynomially with the sample size~$n$.

\begin{lemma} \label{thm:fni-bias-bound}
  Suppose that event $\event_{\gamma,\delta_n}$ and \eqref{eq:dcf-params} hold.
  Then, there exists $\bnd_0 > 0$ such that $\max_{k\in\inds} |b_{\fni,k} - y_0| \le \bnd_0$, and $\bnd_0 = \Theta\big( (1 + \rrho) (\cphi + \lipphi) \theta_3 \lipf_0\big)$.
  Furthermore, we have $\risk_n(\fni) = \Ordo\big((1 + \rrho^2) \theta_3^2 \lipf_0^2 \lipphi^2\big)$.
\end{lemma}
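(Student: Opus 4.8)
The plan is to reduce the bias bound to controlling the values of $\fni$ at the training covariates, exploiting the structure of the pruned index set $\inds$ together with the centering constant $C_n^+$. For each retained index $k \in \inds = \setI_n(\fnih)$, the definition of $\setI_n$ supplies an active sample $\vX_{i_k}$ at which the $k$-th piece attains the maximum, i.e.\ $\fnih(\vX_{i_k}) = b_{\fnih,k} + \vw_{\fnih,k}^\T\phi_\somenorm(\vX_{i_k},\vXhat_k)$. Since pruning leaves the values of $\fnih$ on $\setX_n$ unchanged, we have $\fni(\vX_i) = C_n^+ + \fnih(\vX_i)$ for all $i \in [n]$, and the choice $C_n^+ = \Ybar - \frac1n\sum_i \fnih(\vX_i)$ forces $\frac1n\sum_i \fni(\vX_i) = \Ybar$. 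Using $b_{\fni,k} = b_{\fnih,k} + C_n^+$, the active identity then rearranges into
\[
  b_{\fni,k} - y_0 = \big(\fni(\vX_{i_k}) - \Ybar\big) + (\Ybar - y_0) - \vw_{\fnih,k}^\T\phi_\somenorm(\vX_{i_k},\vXhat_k) .
\]

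I would bound the three terms separately, the key point being that none carries a polynomial-in-$n$ factor. For the first term I would use the centering to write $\fni(\vX_{i_k}) - \Ybar = \frac1n\sum_j \big(\fni(\vX_{i_k}) - \fni(\vX_j)\big)$ and control each difference by the Lipschitz continuity of $\fni$: as $\fni$ is a maximum of pieces $\vx \mapsto b_{\fni,k} + \vw_{\fnih,k}^\T\phi_\somenorm(\vx,\vXhat_k)$ and $\phi_\somenorm$ is $\lipf_{\phi_\somenorm}$-Lipschitz in its first argument by the same computation as \cref{thm:phi-props} (with $\lipf_{\phi_\somenorm} = \Ordo(\cphi)$), the map $\fni$ is $\Ordo(\cphi \lipf_{\fni})$-Lipschitz, making this term $\Ordo(\cphi \lipf_{\fni} R_{\setX_n})$. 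The third term is $\Ordo(\cphi \lipf_{\fnih} R_{\setX_n})$ by the Cauchy--Schwarz inequality and the magnitude bound $\norm{\phi_\somenorm(\vX_{i_k},\vXhat_k)} \le \cphi\norm{\vX_{i_k}-\vXhat_k} \le 2\cphi R_{\setX_n}$ of \cref{thm:phi-props}. Into both I substitute $\lipf_{\fni} \le \lipf_{\fnih} \le (1+\theta_3)\Ln \le (1+\theta_3)\lip_0$ from \cref{thm:fniL,thm:Eapprox} and $R_{\setX_n} \le 2\rrho$ from \cref{thm:event}, giving $\Ordo(\cphi \theta_3 \lip_0 \rrho)$. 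The middle term satisfies $|\Ybar - y_0| \le \max_i |Y_i - y_0| \le \rsigma + 2\lip_*\rrho$ by \cref{thm:event}; since $\rsigma = \Ordo(\lip_0)$ and $\lip_* = \Ordo(\lip_0)$ follow from $\lip_0^2 = \Theta\big(\theta_0^2 + \clip^2\lip_*^2 + \sigma^2\ln(\bnd_2/\gamma)\big)$ in \cref{thm:Eapprox} together with $\bnd_2 \ge n$, this is $\Ordo((1+\rrho)\lip_0)$. Summing, $\max_{k\in\inds}|b_{\fni,k} - y_0| = \Ordo\big(\cphi(1+\rrho)\theta_3\lip_0\big)$, so $\bnd_0$ may be taken of exactly this order.

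For the risk bound I would sidestep individual residuals and instead invoke the refinement inequality \eqref{eq:estimator-improvement}, which gives $\risk_n(\fni) \le \risk_n(f_n) + \regz_n(f_n)$. The regularizer is controlled through $\regz_n(f_n) = \regz_{0,0,\theta_2}(f_n) = \regz_{0,0,\theta_2}(g_n) \le \regz_{\vtheta}(g_n) \le (\rsigma + 2\lip_*\rrho)^2$, using $f_n = \proj(g_n)$ and \cref{thm:risk-gn-ub}; the empirical risk of $f_n$ is bounded by the triangle inequality $\norm{f_n - y}_n \le \norm{f_n - g_n}_n + \norm{g_n - y}_n$, with $\norm{f_n - g_n}_n \le 2\Ln\epsilon_n(\setXhat_K) = \Ordo(\lip_0\rrho)$ from \cref{thm:distance-fngn,thm:AFPC} and $\norm{g_n - y}_n \le \rsigma + 2\lip_*\rrho$ from \cref{thm:risk-gn-ub}. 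Both pieces are $\Ordo\big((1+\rrho^2)\lip_0^2\big)$, which, as $\theta_3 \ge 1$, is subsumed in the claimed $\Ordo\big((1+\rrho^2)\theta_3^2\lip_0^2\big)$.

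The main obstacle is the bias bound, precisely the requirement that it grow only polylogarithmically (not polynomially) in $n$. A naive estimate through a single residual $\fni(\vX_{i_k}) - Y_{i_k} \le \sqrt{n\,\risk_n(\fni)}$ would incur a spurious $\sqrt{n}$ factor; the remedy is the centering step, which replaces this residual by the deviation $\fni(\vX_{i_k}) - \Ybar$ from the empirical mean, a quantity governed by the Lipschitz constant of $\fni$ (which grows only polylogarithmically in $n$ via $\theta_3\lip_0$) over the bounded covariate radius $R_{\setX_n} = \Ordo(\rrho)$. This is exactly why the pruning-and-centering construction of \eqref{eq:dcf-estimator} is needed, and why the active-index property of $\inds$ must be used to express each $b_{\fni,k}$ through a function value rather than as a free parameter.
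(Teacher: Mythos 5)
Your argument for the bias bound is essentially the paper's own proof: both pick an active sample $\vX_{i_k}$ from the definition of $\inds$, peel off the feature term via Cauchy--Schwarz and \cref{thm:phi-props}, and then control $\fni(\vX_{i_k})-y_0$ by combining the centering identity $\frac1n\sum_i\fni(\vX_i)=\Ybar$ with the Lipschitz continuity of $\fni$ and the bound $|\Ybar-y_0|\le\rsigma+2\lip_*\rrho$ from \cref{thm:event}; your extra $\Ordo(\cphi)$ factor in the Lipschitz constant of $\fni$ (coming from the first-argument Lipschitzness of $\phi_\somenorm$) is harmless since $\cphi$ already appears in $\bnd_0$. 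The only genuine divergence is the risk bound: the paper simply squares and averages the pointwise bound $|\fni(\vX_{i_k})-Y_{i_k}|\le|\fni(\vX_{i_k})-y_0|+\Ordo(\rsigma+\lip_0\rrho)$ it has already established, whereas you route through the refinement inequality \eqref{eq:estimator-improvement}, \cref{thm:risk-gn-ub}, and \cref{thm:distance-fngn,thm:AFPC}. Both give $\Ordo\big((1+\rrho^2)\theta_3^2\lip_0^2\big)$; the paper's route reuses the bias computation and is marginally shorter, while yours makes the dependence on the ERM structure explicit but needs the additional fact $\epsilon_n(\setXhat_K)=\Ordo(\rrho)$. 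No gaps.
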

\begin{proof}
  For each $k\in\inds$, let $i_k \in [n]$ be such that $\fni(\vX_{i_k}) = b_{\fni,k} + \vw_{\fni,k}^\T\phin(\vX_{i_k},\vXhat_k)$, which always exists by the definition of $\inds$ in \eqref{eq:dcf-estimator}.
  The triangle inequality and $\event_{\gamma,\delta_n} \subset \event_\gamma$ yield $\norm{\vX_{i_k} - \vXhat_k} \le 2\rrho$.
  Moreover, \cref{thm:fniL,thm:Eapprox} imply $\lipf_{\fni} \le (1+\theta_3)\lipf_0$.
  Then, by the triangle and Cauchy-Schwarz inequalities, and \cref{thm:phi-props}, we get
  \begin{equation} \label{eq:non-random-func-set-proof-1} \begin{split}
    |b_{\fni,k} - y_0|
    &\le |\fni(\vX_{i_k}) - y_0| + \norm{\vw_{\fni,k}} \norm{\phin(\vX_{i_k},\vXhat_k)}
    \\
    &\le |\fni(\vX_{i_k}) - y_0| + 2 \rrho \cphi (1+\theta_3) \lipf_0 
    .
  \end{split} \end{equation}
  Additionally, from \eqref{eq:dcf-estimator}, we have $\fni(\vX_i) = C_n^+ + \fnih(\vX_i)$ for all $i \in [n]$, which yields
  \begin{equation} \label{eq:centering}
    \frac1n\sum_{i=1}^n\fni(\vX_i) = C_n^+ + \frac1n \sum_{i=1}^n\fnih(\vX_i) = \Ybar .
  \end{equation}
  Since $\lipf_{\fni} \le (1+\theta_3)\lipf_0$, \cref{thm:phi-props} implies that $\fni$ is $((1+\theta_3)\lipf_0\lipphi)$-Lipschitz on $\setR^d$ \wrt\ $\norm{\cdot}$, that is $\fni \in \setF_{(1+\theta_3)\lipf_0\lipphi,\setR^d}$.
  Using this, the triangle and Jensen's inequalities, \eqref{eq:centering}, and \cref{thm:event}, $\event_\gamma$ implies for all $i \in [n]$ that
  {
    \setlength{\abovedisplayskip}{\dimexpr\abovedisplayskip-4mm\relax}
    \setlength{\belowdisplayskip}{\dimexpr\belowdisplayskip-2mm\relax}
    \setlength{\abovedisplayshortskip}{\dimexpr\abovedisplayshortskip-4mm\relax}
    \setlength{\belowdisplayshortskip}{\dimexpr\belowdisplayshortskip-2mm\relax}
    \begin{equation} \label{eq:non-random-func-set-proof-2} \begin{split}
      |\fni(\vX_{i}) - y_0|
      &\le \Big|\fni(\vX_{i}) - \frac1n\sum_{j=1}^n\fni(\vX_j)\Big| + |\Ybar - y_0|
      \\[-1mm]
      &\le 2 (1+\theta_3) \rrho \lipf_0 \lipphi + (\rsigma + 2 \lip_* \rrho)
        .
    \end{split} \end{equation}
  }%
We prove the bound $\bnd_0$ on the bias by combining \eqref{eq:non-random-func-set-proof-1} and \eqref{eq:non-random-func-set-proof-2}, and using $\theta_3 \ge 1$ with $\max\{\lip_*, \rsigma\} = \Ordo(\lipf_0)$.

  By using the triangle inequality and \cref{thm:event}, event $\event_\gamma$ also implies $|\fni(\vX_{i})-Y_{i}| \le |\fni(\vX_{i})-y_0| + \Ordo(\rsigma + \lipf_0\rrho)$.
  The bound on $\risk_n(\fni)$ then follows from \eqref{eq:non-random-func-set-proof-2} by squaring and averaging over all $i \in [n]$.
\end{proof}

\vspace{-4mm}
Having bounded the parameter magnitudes of $\fni$, we now construct a bounded, non-random function class that always contains a function uniformly approximating $\fni$ on the entire space $\setR^d$ to the required accuracy.
This construction relies on the fact that $\phin(\cdot,\cdot)$ is Lipschitz in its second argument (\cref{thm:phi-props}).

We also need the notion of tuples: the set of all tuples of size $k \in \setN$ with elements from a set $\setX \subseteq \setR^d$ is defined as $\setT_k(\setX) \defeq \{\langle \vxtilde_1,\ldots,\vxtilde_k\rangle : \vxtilde_j\in\setX, j\in[k]\}$.
We extend the definition of $\setFn(\setXtilde)$ to allow $\setXtilde$ to be a tuple, so that multiple function parameters may be associated with the same center.

The following result presents the construction mentioned above:
\vspace{-1mm}
\begin{lemma} \label{thm:fni-approx}
  Suppose the conditions of \cref{thm:fni-bias-bound} hold, and let $k_*$ be as in \cref{thm:AFPC}.
  Further, let $\setX_\rho \defeq \{\vx\in\setX_* : \norm{\vx-\E[\vX]} \le \rrho\}$, $\eta > 0$, and $\setXhat_{\rho,\eta}$ be an $\eta$-cover of $\setX_\rho$ \wrt\ $\norm{\cdot}$.
  Define the following (non-random) function class:
  {
    \setlength{\abovedisplayskip}{\dimexpr\abovedisplayskip-3mm\relax}
    \setlength{\belowdisplayskip}{\dimexpr\belowdisplayskip-3mm\relax}
    \setlength{\abovedisplayshortskip}{\dimexpr\abovedisplayshortskip-3mm\relax}
    \setlength{\belowdisplayshortskip}{\dimexpr\belowdisplayshortskip-3mm\relax}
    \begin{equation*}
      \setFbar_{\somenorm,\eta}
      \defeq \bigcup_{k \in [k_*]} \bigcup_{\setXtilde \in \setT_k(\setXhat_{\rho,\eta})} \setFbar_\somenorm(\setXtilde)
    \end{equation*}%
  }%
  where $
  \setFbar_\somenorm(\setXtilde)
  \defeq \big\{ f \in \setFn(\setXtilde)
  : |b_{f,k} - y_0| \le \bnd_0 ,\, \norm{\vw_{f,k}} \le (1+\theta_3) \lipf_0 ,\, k \in [|\setXtilde|]
  \big\}$ for all $\setXtilde \in \setT_k(\setXhat_{\rho,\eta})$ and $k\in\setN$, restricting $\setFn(\setXtilde)$ to functions with bounded parameter magnitudes.
  Then, there exists $f \in \setFbar_{\somenorm,\eta}$ such that $\max_{\vx\in\setR^d}|\fni(\vx)-f(\vx)| \le (1+\theta_3) \lipf_0 \lipphi \eta$.
\end{lemma}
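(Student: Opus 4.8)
The plan is to construct the approximating function $f$ by keeping the bias and slope parameters of $\fni$ unchanged, while ``snapping'' each random AFPC center $\vXhat_k$ to a nearby fixed point of the cover $\setXhat_{\rho,\eta}$. Since the two functions then differ only through a perturbation of the second argument of $\phi_\somenorm$, the uniform error is controlled by the Lipschitz property of $\phi_\somenorm$ established in \cref{thm:phi-props}, together with the already-proven parameter bounds for $\fni$.

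First I would note that $\event_{\gamma,\delta_n} \subset \event_\gamma$ implies $\max_{i\in[n]}\norm{\vX_i-\E[\vX]} \le \rrho$, so $\setX_n \subseteq \setX_\rho$ and hence $\vXhat_k \in \setX_\rho$ for every $k\in\inds$ because $\setXhat_K \subseteq \setX_n$. By the $\eta$-covering property of $\setXhat_{\rho,\eta}$, for each $k\in\inds$ I can pick a cover point $\vxtilde_k \in \setXhat_{\rho,\eta}$ with $\norm{\vXhat_k - \vxtilde_k} \le \eta$. Reindexing $\inds$ and collecting these points into a tuple $\setXtilde \defeq \langle \vxtilde_k : k\in\inds\rangle$ (as a tuple, so that distinct centers collapsing onto the same cover point cause no difficulty), I obtain $\setXtilde \in \setT_{|\inds|}(\setXhat_{\rho,\eta})$ with $1 \le |\inds| \le K \le k_*$ by \cref{thm:AFPC}. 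Thus the set $\setFbar_\somenorm(\setXtilde)$ indexed by this tuple is one of the sets appearing in the union that defines $\setFbar_{\somenorm,\eta}$.

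Next I would define $f(\vx) \defeq \max_{k\in\inds} b_{\fni,k} + \vw_{\fni,k}^\T\phi_\somenorm(\vx,\vxtilde_k)$, i.e.\ $f$ reuses the parameters of $\fni$ over the shifted centers. Membership $f \in \setFbar_\somenorm(\setXtilde)$ follows from the parameter bounds $|b_{\fni,k}-y_0| \le \bnd_0$ supplied by \cref{thm:fni-bias-bound} and $\norm{\vw_{\fni,k}} \le \lipf_{\fni} \le (1+\theta_3)\lip_0$ supplied by \cref{thm:fniL,thm:Eapprox}. For the uniform bound, I fix an arbitrary $\vx\in\setR^d$; since $\fni(\vx)$ and $f(\vx)$ are maxima over the \emph{same} index set $\inds$ and the map $\max$ is $1$-Lipschitz, I get $|\fni(\vx)-f(\vx)| \le \max_{k\in\inds}\big|\vw_{\fni,k}^\T\big(\phi_\somenorm(\vx,\vXhat_k)-\phi_\somenorm(\vx,\vxtilde_k)\big)\big|$. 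Applying Cauchy--Schwartz, the slope bound $\norm{\vw_{\fni,k}} \le (1+\theta_3)\lip_0$, and the second inequality of \cref{thm:phi-props} (Lipschitz continuity of $\phi_\somenorm$ in its second argument, with constant $\lipf_{\phi_\somenorm}$) then yields $|\fni(\vx)-f(\vx)| \le (1+\theta_3)\lip_0\,\lipf_{\phi_\somenorm}\,\eta$ uniformly in $\vx$, which is the claim.

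The argument is essentially bookkeeping once the earlier lemmas are in place, so I do not expect a genuine analytic obstacle. The one point that requires care is the use of a \emph{tuple} rather than a set for $\setXtilde$: this is precisely what permits several distinct AFPC centers to be sent to a single cover point without either breaking the membership requirement of $\setFbar_\somenorm(\setXtilde)$ or inflating the count beyond $|\inds| \le k_*$. The only other thing to track is that the whole construction is conditioned on $\event_\gamma$, which is what guarantees the centers $\vXhat_k$ lie in $\setX_\rho$ and can therefore be covered by $\setXhat_{\rho,\eta}$.
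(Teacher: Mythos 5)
Your proposal is correct and follows essentially the same route as the paper's proof: snap each active center to an $\eta$-close point of $\setXhat_{\rho,\eta}$ (using a tuple to absorb collisions), reuse the parameters of $\fni$, and bound the uniform error via the $1$-Lipschitzness of $\max$, Cauchy--Schwartz, and the second inequality of \cref{thm:phi-props}. The membership and parameter-bound bookkeeping also matches the paper's argument.
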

\begin{proof}
  Let $K_+ \defeq |\inds|$ and write $\inds \defeq \{j_1,\ldots,j_{K_+}\}$ with some indices $j_1,\ldots,j_{K_+} \in [K]$.
  By \cref{thm:event}, $\event_\gamma$ implies $\setXhat_K \subseteq \setX_n \subset \setX_\rho$.
  Since $K \le k_*$ by \cref{thm:AFPC}, it follows that $1 \le K_+ \le K \le k_*$, and since $\setXhat_{\rho,\eta}$ is an $\eta$-cover of $\setX_\rho$ \wrt\ $\norm{\cdot}$, we can select a tuple $\setXtilde \defeq \langle\vxtilde_1,\ldots,\vxtilde_{K_+}\rangle \in \setT_{K_+}(\setXhat_{\rho,\eta})$ such that $\setFbar_\somenorm(\setXtilde) \subset \setFbar_{\somenorm,\eta}$ and $\norm{\vXhat_{j_k} - \vxtilde_k} \le \eta$ for all $k\in[K_+]$.
  The reason for using tuples is that we cannot guarantee that distinct elements from $\hat{\setX}_{\rho,\eta}$ can be associated with all of $\vXhat_{j_1},\ldots,\vXhat_{j_{K_+}}$.

  Recall that we have $\lipf_{\fni} \le (1+\theta_3)\lipf_{f_n} \le (1+\theta_3)\lipf_0$ from \cref{thm:fniL,thm:Eapprox}, and $\max_{k\in\inds}|b_{\fni,k} - y_0| \le \bnd_0$ from \cref{thm:fni-bias-bound}.
  Using this, we define function $f \in \setFbar_\somenorm(\setXtilde)$ by setting $b_{f,k} \defeq b_{\fni,j_k}$ and $\vw_{f,k} \defeq \vw_{\fni,j_k}$ for all $k \in [K_+]$.
  Then, for all $\vx\in\setR^d$, we have by the Cauchy-Schwarz inequality and the $\lipphi$-Lipschitzness of $\phin(\vx,\cdot)$ from \cref{thm:phi-props} that
  {
    \setlength{\abovedisplayskip}{\dimexpr\abovedisplayskip-4mm\relax}
    \setlength{\belowdisplayskip}{\dimexpr\belowdisplayskip-3mm\relax}
    \setlength{\abovedisplayshortskip}{\dimexpr\abovedisplayshortskip-4mm\relax}
    \setlength{\belowdisplayshortskip}{\dimexpr\belowdisplayshortskip-3mm\relax}
    \begin{equation*}
      \big|\fni(\vx) - f(\vx)\big|
      \le \max_{k\in[K_+]}\bigg|\Big(\phin(\vx,\vXhat_{j_k})-\phin(\vx,\vxtilde_k)\Big)^\T\vw_{f,k}\bigg|
      \le \max_{k\in[K_+]} \lipphi \norm{\vXhat_{j_k} - \vxtilde_k} \norm{\vw_{f,k}} ,
    \end{equation*}
  }%
  which proves the result as $\norm{\vXhat_{j_k} - \vxtilde_k} \le \eta$ and $\norm{\vw_{f,k}} \le \lipf_{\fni} \le (1+\theta_3) \lipf_0$ for all $k\in[K_+]$.
\end{proof}

As the concentration inequality in \cref{thm:conc-ineq-Bernstein} requires a finite function set, next we construct a cover of $\setFbar_{\somenorm,\eta}$.
\begin{lemma} \label{thm:fni-approx-cover}
  Suppose that the conditions of \cref{thm:fni-approx} hold, and that the cover $\setXhat_{\rho,\eta}$ of $\setX_\rho$ \wrt~$\norm{\cdot}$ is of minimal cardinality.
  For all $k_0\in\setN$ and $\setXtilde \in \setT_{k_0}(\setR^d)$, define a metric between any $f, \hat{f} \in \setFn(\setXtilde)$ as
  \begin{equation*}
    \psi_{k_0}(f,\hat{f}) \defeq \max_{k\in[k_0]} |b_{f,k}-b_{\hat{f},k}| + \rrho\norm{\vw_{f,k}-\vw_{\hat{f},k}} .
  \end{equation*}
  Let $\eta \in (0,\rrho/2]$ and $\delta \in (0,\bnd_0]$.
  For all $k\in\setN$ and $\setXtilde \in \setT_k(\setXhat_{\rho,\eta})$, define $\hat{\setFbar}_{\somenorm,\delta}(\setXtilde)$ to be a $\delta$-cover of $\setFbar_{\somenorm}(\setXtilde)$ \wrt\ $\psi_k$ of minimal cardinality.
  Finally, let
  \begin{equation*} \begin{split}
    \hat{\setFbar}_{\somenorm,\eta,\delta}
    \defeq
    \bigcup_{k \in [k_*]} \bigcup_{\setXtilde \in \setT_k(\setXhat_{\rho,\eta})}
    \hat{\setFbar}_{\somenorm,\delta}(\setXtilde)
    .
  \end{split} \end{equation*}
Then, for every function $f\in\setFbar_{\somenorm,\eta}$ there exists $\hat{f} \in \hat{\setFbar}_{\somenorm,\eta,\delta}$ which satisfies $|f(\vx) - \hat{f}(\vx)| \le 2\delta \cphi \big(1 + \norm{\vx-\E[\vX]}/\rrho\big)$ for all $\vx\in\setR^d$.
Additionally, $\ln\big|\hat{\setFbar}_{\somenorm,\eta,\delta}\big| = \Ordo\big(d k_* \ln\big(\rrho \bnd_0/(\eta\delta)\big)\big)$.
\end{lemma}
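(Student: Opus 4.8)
The plan is to prove the two assertions separately, each reducing to an earlier lemma: the uniform covering property follows from the pointwise estimate of \cref{thm:pointwise-distance}, and the cardinality bound follows from the parametric covering estimate of \cref{thm:func-cover} and \cref{thm:volume-argument}.

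For the covering property, I would take an arbitrary $f\in\setFbar_{\somenorm,\eta}$, which by definition lies in $\setFbar_\somenorm(\setXtilde)$ for some $k\in[k_*]$ and $\setXtilde=\{\vxtilde_1,\ldots,\vxtilde_k\}\in\setT_k(\setXhat_{\rho,\eta})$. Since $\hat{\setFbar}_{\somenorm,\delta}(\setXtilde)$ is a $\delta$-cover of $\setFbar_\somenorm(\setXtilde)$ \wrt~$\psi_k$, there is an $\hat{f}\in\hat{\setFbar}_{\somenorm,\delta}(\setXtilde)\subseteq\hat{\setFbar}_{\somenorm,\eta,\delta}$ with $\psi_k(f,\hat{f})\le\delta$, which by the definition of $\psi_k$ gives $|b_{f,k'}-b_{\hat{f},k'}|\le\delta$ and $\norm{\vw_{f,k'}-\vw_{\hat{f},k'}}\le\delta/\rrho$ for every $k'\in[k]$. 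I would then apply \cref{thm:pointwise-distance}, and because each center satisfies $\vxtilde_{k'}\in\setX_\rho\subseteq\setB(\E[\vX],\rrho)$ and hence $\norm{\vxtilde_{k'}-\E[\vX]}\le\rrho$, use the triangle inequality $\norm{\vx-\vxtilde_{k'}}\le\norm{\vx-\E[\vX]}+\rrho$ to obtain
\[
  |f(\vx)-\hat{f}(\vx)| \le \delta + \cphi\frac{\delta}{\rrho}\big(\norm{\vx-\E[\vX]}+\rrho\big) = \cphi\delta\Big(\tfrac{\norm{\vx-\E[\vX]}}{\rrho}+1\Big) + \delta .
\]
The claimed bound then follows from $\cphi\ge1$ (so $\delta\le\cphi\delta$) together with $2+t\le2(1+t)$ for $t\ge0$, yielding $2\delta\cphi\big(1+\norm{\vx-\E[\vX]}/\rrho\big)$.

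For the cardinality bound I would estimate $|\hat{\setFbar}_{\somenorm,\eta,\delta}| \le \sum_{k\in[k_*]}|\setT_k(\setXhat_{\rho,\eta})|\max_{\setXtilde}N_{\psi_k}(\setFbar_\somenorm(\setXtilde),\delta)$, using $|\setT_k(\setXhat_{\rho,\eta})|=|\setXhat_{\rho,\eta}|^k$ and, since $\setX_\rho\subseteq\setB(\E[\vX],\rrho)$ and $\setXhat_{\rho,\eta}$ is minimal, $|\setXhat_{\rho,\eta}|\le(3\rrho/\eta)^d$ from \cref{thm:volume-argument} (valid as $\eta\le\rrho/2$). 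To bound $N_{\psi_k}(\setFbar_\somenorm(\setXtilde),\delta)$ I would apply \cref{thm:func-cover}, taking the $k$ biases (in $\setB(y_0,\bnd_0)$) and $k$ slopes (in $\setB(\vzero,(1+\theta_3)\lip_0)$) as the $2k$ parameter blocks. The only nontrivial verification is the metric-Lipschitz hypothesis: from $a+b\le2\max\{a,b\}$ applied inside the maximum defining $\psi_k$, I obtain $\psi_k(f,\hat{f})\le\max_j s_j\norm{p_j-\hat{p}_j}$ with $s_j=2$ for bias blocks and $s_j=2\rrho$ for slope blocks, so $\max_j s_jr_j=\max\{2\bnd_0,2\rrho(1+\theta_3)\lip_0\}$ and $t=k(d_\somenorm+1)$. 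Since $\bnd_0=\Theta(\cphi(1+\rrho)\theta_3\lip_0)$ from \cref{thm:fni-bias-bound} dominates $\rrho(1+\theta_3)\lip_0$ (as $\cphi\ge1$, $\theta_3\ge1$), the constant $\beta$ in \cref{thm:func-cover} may be taken as $\Theta(\bnd_0)$.

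Combining these estimates and writing $A\defeq(3\rrho/\eta)^d(3\beta/\delta)^{d_\somenorm+1}\ge1$ (the latter holds since $\delta\le\bnd_0$), I get $|\hat{\setFbar}_{\somenorm,\eta,\delta}|\le\sum_{k\in[k_*]}A^k\le k_*A^{k_*}$, whence $\ln|\hat{\setFbar}_{\somenorm,\eta,\delta}|=\Ordo\big(\ln k_*+k_*(d\ln(\rrho/\eta)+(d_\somenorm+1)\ln(\bnd_0/\delta))\big)$. Using $d_\somenorm+1=\Ordo(d)$ and merging the logarithms via $\ln(\rrho/\eta)+\ln(\bnd_0/\delta)=\ln(\rrho\bnd_0/(\eta\delta))$ yields $\Ordo\big(dk_*\ln(\rrho\bnd_0/(\eta\delta))\big)$, as claimed. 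I expect the main obstacle to be the bookkeeping in this second part: verifying the mixed bias/slope weighting of $\psi_k$ fits the hypothesis of \cref{thm:func-cover}, and confirming that the slope radius $(1+\theta_3)\lip_0$ is genuinely absorbed into $\bnd_0$ so that exactly the factor $\rrho\bnd_0/(\eta\delta)$, and not an extra power of $\rrho$, survives inside the logarithm.
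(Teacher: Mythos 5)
Your proposal is correct and follows essentially the same route as the paper: the pointwise bound via \cref{thm:pointwise-distance} (extended to tuples) combined with $\norm{\vx-\vxtilde_k}\le\norm{\vx-\E[\vX]}+\rrho$ and $\cphi\ge1$ for the covering property, and \cref{thm:volume-argument} plus \cref{thm:func-cover} with a geometric-sum bound over $k\in[k_*]$ for the cardinality. Your explicit verification that the mixed bias/slope weighting of $\psi_k$ satisfies the hypothesis of \cref{thm:func-cover}, and that the slope radius $(1+\theta_3)\rrho\lip_0$ is absorbed into $\Theta(\bnd_0)$, matches what the paper does implicitly.
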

\begin{proof}
  Note that the result of \cref{thm:pointwise-distance} extends straightforwardly to $\setFn(\setXtilde)$, where $\setXtilde = \langle \vxtilde_1,\ldots,\vxtilde_{k_0}\rangle$ is a tuple.
  Therefore, we have for all $f, \hat{f} \in \setFn(\setXtilde)$ and for all $\vx\in\setR^d$ that
  \begin{equation} \label{eq:fni-approx-cover-proof-1}
    \big|f(\vx) - \hat{f}(\vx)\big| \le \psi_{k_0}(f,\hat{f}) \Big(1+\max_{k\in[k_0]}\cphi\,\norm{\vx-\vxtilde_k}/\rrho\Big) .
  \end{equation}
  Fix $f \in \setFbar_{\somenorm,\eta}$ arbitrarily, and let $\setXtilde \defeq \langle\vxtilde_1,\ldots,\vxtilde_{k_0}\rangle \in \setT_{k_0}(\setXhat_{\rho,\eta})$ such that $f \in \setFbar_\somenorm(\setXtilde)$.
  By the definition of $\delta$-cover \wrt\ $\psi_{k_0}$, we choose $\hat{f} \in \hat{\setFbar}_{\somenorm,\delta}(\setXtilde) \subset \hat{\setFbar}_{\somenorm,\eta,\delta}$ to be such that $\psi_{k_0}(f,\hat{f}) \le \delta$.
  Now fix any $\vx\in\setR^d$.
  Since $\vxtilde_k \in \setX_\rho$ for all $k\in[k_0]$, we have $\norm{\vxtilde_k-\E[\vX]} \le \rrho$, and the triangle inequality yields $\max_{k\in[k_0]}\norm{\vx-\vxtilde_k} \le \norm{\vx-\E[\vX]} + \rrho$.
  Therefore, \eqref{eq:fni-approx-cover-proof-1} and $\cphi \ge 1$ imply the claimed upper bound on $|f(\vx)-\hat{f}(\vx)|$.

  Define $\setT_{\eta,k} \defeq \setT_k(\setXhat_{\rho,\eta})$ for all $k\in[k_*]$.
  By \cref{thm:volume-argument}, $|\setT_{\eta,k}| = |\setXhat_{\rho,\eta}|^k = \Ordo\big((\rrho/\eta)^{dk}\big)$.
  Further, for all $\setXtilde \in \setT_{\eta,k}$, the bounds on the bias and slope parameters of any $f, \hat{f} \in \setFbar_{\somenorm}(\setXtilde)$ imply $\psi_k(f,\hat{f}) \le 2(\bnd_0 + (1+\theta_3)\rrho \lipf_0) = \Ordo(\bnd_0)$.
  Therefore, $N_{\psi_k}\big(\setFbar_{\somenorm}(\setXtilde),\delta\big) = \Ordo\big((\bnd_0/\delta)^{(1+d_\somenorm)k}\big)$ by \cref{thm:func-cover}.
  Note that $\sum_{k\in[k_*]}t^k = (t^{k_*+1}-t)/(t-1) < 2t^{k_*}$ holds for any $t \ge 2$.
  Then, using $\rrho\bnd_0/(\eta\delta) \ge 2$ for all $\eta \in (0,\rrho/2]$ and $\delta \in (0,\bnd_0]$, and $d \le d_\somenorm$, we obtain
  \begin{equation*} \begin{split}
    |\hat{\setFbar}_{\somenorm,\eta,\delta}|
    \le \sum_{k=1}^{k_*} \sum_{\setXtilde \in \setT_{\eta,k}} \hspace{-1mm} N_{\psi_k}\big(\setFbar_\somenorm(\setXtilde),\delta\big)
    = \Ordo\Big(\sum_{k=1}^{k_*} \big|\setT_{\eta,k}\big| (\bnd_0/\delta)^{(1+d_\somenorm)k}\Big)
    = \Ordo\bigg(\Big(\frac{\rrho\bnd_0}{\eta\delta}\Big)^{(1+d_\somenorm)k_*}\bigg) ,
  \end{split} \end{equation*}
which proves the claimed upper bound on $|\hat{\setFbar}_{\somenorm,\eta,\delta}|$ with $1+d_\somenorm = \Ordo(d)$.
\end{proof}

Finally, we are ready to combine the results and prove \cref{thm:near-minimax-rate}.
\newpage
\begin{proof}\textbf{of \cref{thm:near-minimax-rate}}
  Define $\eta \defeq (\rrho/2) (K/n)$ and choose $\delta \in (0,\bnd_0]$ to be such that $\delta = \Theta\big((1+\rrho) \lipf_0 (K/n)\big)$, which is always possible since $\bnd_0^2 = \Theta\big((1+\rrho)^2(\cphi + \lipphi)^2 \theta_3^2 \lipf_0^2\big)$, $\theta_3 \ge 1$, $\min\{\cphi,\lipphi\} \ge 1$, and $K \le n$.
  Let $\tilde{f}_n^{++} \in \setFbar_{\somenorm,\eta}$ be the approximation to the DCF estimator $\fni$ from \cref{thm:fni-approx}, and $\fniapprox \in \hat{\setFbar}_{\somenorm,\eta,\delta}$ be the approximation to $\tilde{f}_n^{++}$ from \cref{thm:fni-approx-cover}.
  Define $c_{\eta,\delta} \defeq (1+\theta_3) \lipf_0 \lipphi \eta + 4\delta\cphi$, where $c_{\eta,\delta} = \Ordo\big(\sqrt{d}(K/n)(1+\rrho)\theta_3\lipf_0\big)$ by $\theta_3 \ge 1$ and $\max\{\lipphi, \cphi\} = \Ordo(\sqrt{d})$.
  Since $\rrho > \rho$ for all $n \ge 2$ and all $\gamma \in (0,1)$, we have $\E\big[\norm{\vX-\E[\vX]}^2/\rrho^2\big] \le \ln(2) < 1$ by \eqref{eq:subgaussian} and Jensen's inequality.
  Then, $\norm{\fni - \fniapprox}_* \le c_{\eta,\delta}$ follows from the triangle inequality, and \cref{thm:fni-approx,thm:fni-approx-cover}.
  Further, by using $(a+b)^2 \le 2(a^2+b^2)$ for all $a,b\in\setR$, we get
  {
    \setlength{\abovedisplayskip}{\dimexpr\abovedisplayskip-3mm\relax}
    \setlength{\belowdisplayskip}{\dimexpr\belowdisplayskip-2mm\relax}
    \setlength{\abovedisplayshortskip}{\dimexpr\abovedisplayshortskip-3mm\relax}
    \setlength{\belowdisplayshortskip}{\dimexpr\belowdisplayshortskip-2mm\relax}
    \begin{equation} \label{eq:near-minimax-rate-proof-1}
      \norm{\fni - f_*}_*^2
      \le 2\norm{\fniapprox - f_*}_*^2 + 2 c_{\eta,\delta}^2
      = 2\norm{\fniapprox - f_*}_*^2 + \Ordo\Big(\frac{d K}{n} (1+\rrho^2) \theta_3^2 \lipf_0^2\Big) .
    \end{equation}
  }%
  Notice that if $\norm{\fniapprox - f_*}_*^2 \le \bnd_0^2/n$, then the term $\norm{\fniapprox - f_*}_*^2$ in \eqref{eq:near-minimax-rate-proof-1} becomes negligible, since $(\cphi + \lipphi)^2 = \Ordo(d)$.
  Hence, we can assume, without loss of generality, that $\fniapprox \in \hat{\setFbar}_{\somenorm,\eta,\delta,\bnd_0} \defeq \big\{f \in \hat{\setFbar}_{\somenorm,\eta,\delta} : \norm{f - f_*}_*^2 > \bnd_0^2/n\big\}$.

  Combining the error decomposition \eqref{eq:error-decomp} using $\offsetc \defeq 4$, the bound of \cref{thm:Eapprox} on the approximation error $\Eapprox = \norm{\fni - y}_n^2 - \norm{f_* - y}_n^2$, and \eqref{eq:near-minimax-rate-proof-1} yields
  {
    \setlength{\abovedisplayskip}{\dimexpr\abovedisplayskip-2mm\relax}
    \setlength{\belowdisplayskip}{\dimexpr\belowdisplayskip-2mm\relax}
    \setlength{\abovedisplayshortskip}{\dimexpr\abovedisplayshortskip-2mm\relax}
    \setlength{\belowdisplayshortskip}{\dimexpr\belowdisplayshortskip-2mm\relax}
    \begin{equation} \label{eq:near-minimax-rate-proof-2}
      \norm{\fni - f_*}_*^2
      = 2\big(\norm{\fniapprox - f_*}_*^2 - 2\Eapprox\big)
      + \Ordo\Big(\frac{d K}{n} (1 + \rrho^2) \theta_3^2 \lipf_0^2\Big) .
    \end{equation}
  }%
  Event $\event_\gamma$ implies $\max_{i\in[n]}\norm{\vX_i-\E[\vX]}^2/\rrho^2 \le 1$.
  Hence, $\norm{\fni - \fniapprox}_n \le c_{\eta,\delta}$ follows from the triangle inequality, and \cref{thm:fni-approx,thm:fni-approx-cover}.
  Then, using the Cauchy-Schwarz inequality, the bound on $\norm{\fni-y}_n$ from \cref{thm:fni-bias-bound}, and $\max\{\cphi,\lipphi\} = \Ordo(\sqrt{d})$, we obtain
  {
    \setlength{\abovedisplayskip}{\dimexpr\abovedisplayskip-2mm\relax}
    \setlength{\belowdisplayskip}{\dimexpr\belowdisplayskip-2mm\relax}
    \setlength{\abovedisplayshortskip}{\dimexpr\abovedisplayshortskip-2mm\relax}
    \setlength{\belowdisplayshortskip}{\dimexpr\belowdisplayshortskip-2mm\relax}
    \begin{equation} \label{eq:near-minimax-rate-proof-3} \begin{split}
      \Eapprox
      &= \norm{\fni - y}_n^2 - \norm{f_* - y}_n^2
      \\
      &= \norm{\fniapprox - y}_n^2 + 2\langle \fni - \fniapprox , \fni - y \rangle_n - \norm{\fni - \fniapprox}_n^2 - \norm{f_* - y}_n^2
      \\
      &\ge \norm{\fniapprox - y}_n^2 - \norm{f_* - y}_n^2 - c_{\eta,\delta}^2 - 2\norm{\fni - \fniapprox}_n \norm{\fni - y}_n
      \\
      &\ge \norm{\fniapprox - y}_n^2 - \norm{f_* - y}_n^2 - c_{\eta,\delta}^2 - \Ordo\big(c_{\eta,\delta} (1+\rrho) \theta_3 \lipf_0 \lipphi\big)
      \\
      &= \norm{\fniapprox - y}_n^2 - \norm{f_* - y}_n^2 - \Ordo\Big( \frac{dK}{n} (1+\rrho^2) \theta_3^2 \lipf_0^2 \Big).
    \end{split} \end{equation}
}%
For any function $f : \setR^d \to \setR$, define $Z_{f,0} \defeq f(\vX) - f_*(\vX)$, $W_{f,0} \defeq f(\vX) + f_*(\vX) - 2Y$ and $Z_{f,i} \defeq f(\vX_i) - f_*(\vX_i)$, $W_{f,i} \defeq f(\vX_i) + f_*(\vX_i) - 2Y_i$ for all $i \in [n]$.
Then, we have $\E[Z_{f,0}W_{f,0}] = \norm{f - y}_*^2 - \norm{f_* - y}_*^2$ and $\frac1n\sum_{i\in[n]}Z_{f,i}W_{f,i} = \norm{f - y}_n^2 - \norm{f_* - y}_n^2$, since $a^2 - b^2 = (a-b)(a+b)$ for all $a,b\in\setR$.
Note that $\norm{f - f_*}_*^2 = \norm{f - y}_*^2 - \norm{f_* - y}_*^2$ for any function $f : \setR^d \to \setR$ \citep[e.g.,][Section~1.1]{GyorfiEtAl2002}, which can be also expressed as $\E[Z_{f,0}^2] = \E[Z_{f,0}W_{f,0}]$.
Therefore, combining \eqref{eq:near-minimax-rate-proof-2} and \eqref{eq:near-minimax-rate-proof-3} implies
\begin{equation} \label{eq:near-minimax-rate-proof-4} \begin{split}
  \norm{\fni - f_*}_*^2
  &\le 2\Big(\norm{\fniapprox - y}_*^2 - \norm{f_* - y}_*^2 - 2\big(\norm{\fniapprox - y}_n^2 - \norm{f_* - y}_n^2\big)\hspace{-0.5mm}\Big)
    \\ &\hspace{8cm}
    + \Ordo\Big(\frac{d K}{n} (1 + \rrho^2) \theta_3^2 \lipf_0^2\Big)
  \\[-1mm]
  &= \Ordo\bigg(
  \max_{f \in \hat{\setFbar}_{\somenorm,\eta,\delta,\bnd_0}} \Big\{ \E[Z_{f,0}W_{f,0}] - \frac2n\sum_{i=1}^n Z_{f,i}W_{f,i} \Big\}
  + \frac{d K}{n} (1 + \rrho^2) \theta_3^2 \lipf_0^2
  \bigg) .
\end{split} \end{equation}
Note that $(Z_{f,0},W_{f,0}), \ldots, (Z_{f,n},W_{f,n})$ are \iid, and $\E[Z_{f,0}^2] \ge \bnd_0^2/n$, for all $f \in \hat{\setFbar}_{\somenorm,\eta,\delta,\bnd_0}$.

Take any $f \in \hat{\setFbar}_{\somenorm,\eta,\delta,\bnd_0}$, and let $\setXtilde \defeq \langle\vxtilde_1,\ldots,\vxtilde_{k_0}\rangle \in \setT_{k_0}(\setXhat_{\rho,\eta})$ be the centers associated with $f$ for some $k_0 \in [k_*]$, that is $f \in \hat{\setFbar}_{\somenorm,\delta}(\setXtilde)$.
Then, for any $\vx\in\setX_*$, using the triangle and Cauchy-Schwarz inequalities, \cref{thm:phi-props}, $y_0 = f_*(\vx_0)$, $\vx_0 \in \setX_*$, $f_* \in \setF_{\lip_*,\setX_*}$, and $\norm{\vxtilde_k-\E[\vX]} \le \rrho$ since $\vxtilde_k \in \setXhat_{\rho,\eta} \subseteq \setX_\rho$, we get
\begin{equation} \label{eq:near-minimax-rate-proof-5} \begin{split}
  |f(\vx) - f_*(\vx)|
  &\le \max_{k\in[k_0]}|b_{f,k} - y_0| + \norm{\phin(\vx,\vxtilde_k)} \norm{\vw_{f,k}} + |y_0 - f_*(\vx)|
  \\
  &\le \bnd_0 + \cphi(\norm{\vx - \E[\vX]} + \rrho) (1+\theta_3) \lipf_0 + \lip_* \norm{\vx-\vx_0}
    .
\end{split} \end{equation}
Since $\norm{\vx-\vx_0} \le 2\norm{\vx-\E[\vX]}$ for all $\vx \in \setX_*$ by the definition of $\vx_0$ in \cref{sec:tech-preps}, and $\E\big[e^{\norm{\vX-\E[\vX]}^2/\rrho^2}\big] \le 2$ from \eqref{eq:subgaussian}, \eqref{eq:near-minimax-rate-proof-5} yields that the random variable $Z_{f,0}$ satisfies $\E\big[e^{Z_{f,0}^2/\omega^2}\big] \le 2$ for some constant $\omega > 0$ such that $\omega = \Theta\big(\bnd_0 + \cphi \rrho (1+\theta_3) \lipf_0\big) = \Ordo(\bnd_0)$.
Similarly, using \cref{thm:event}, $\event_\gamma$ implies $\E\big[e^{Z_{f,i}^2/\omega^2}\big] \le 2$ for all $i \in [n]$.
Additionally, by writing $W_{f,0} = Z_{f,0} + 2(f_*(\vX)-Y)$ and $W_{f,i} = Z_{f,i} + 2(f_*(\vX_i)-Y_i)$ for all $i \in [n]$, we have $\E\big[e^{W_{f,i}^2/\nu^2}\big] \le 2$ for some $\nu > 0$ satisfying $\nu = \Theta(\omega + \rsigma) = \Ordo(\bnd_0)$, for all $i \in [n] \cup \{0\}$.

Then, by applying \cref{thm:conc-ineq-Bernstein} with $\alpha = 1$, $\mu = \bnd_0/\sqrt{n}$, $\omega \le \nu = \Ordo(\bnd_0)$, and using $\big|\hat{\setFbar}_{\somenorm,\eta,\delta,\bnd_0}\big| \le \big|\hat{\setFbar}_{\somenorm,\eta,\delta}\big|$ with the bound from \cref{thm:fni-approx-cover}, we get with probability at least $1-\gamma$ that
\begin{equation} \label{eq:near-minimax-rate-proof-6} \begin{split}
  \max_{f \in \hat{\setFbar}_{\somenorm,\eta,\delta,\bnd_0}} \hspace{-1mm} \Big\{ \E[Z_{f,0}W_{f,0}] - \frac2n\sum_{i=1}^n Z_{f,i}W_{f,i} \Big\}
  &= \Ordo\bigg(\nu^2 \frac{\ln(n)}{n} d k_* \ln\Big(\frac{\rrho \bnd_0}{\eta \delta \gamma}\Big)\bigg)
  \\
  &= \Ordo\Big( \frac{d k_*}{n} \bnd_0^2 \ln(n) \ln(d n / \gamma) \Big)
  ,
\end{split} \end{equation}
where in the last step we simplified using $\eta \delta = \Omega\big(\rrho(1+\rrho)\lipf_0/n^2\big)$, $\cphi = \Ordo(\sqrt{d})$, $\theta_3 = \Ordo\big(\ln(n)\big) = \Ordo(\sqrt{n})$ from \eqref{eq:dcf-params}, so $(\rrho \bnd_0) / (\eta\delta) = \Ordo\big(n^2 \bnd_0 / ((1+\rrho) \lipf_0)\big) = \Ordo\big(n^2\sqrt{dn}\big)$.

We finally prove \cref{thm:near-minimax-rate} by combining \eqref{eq:near-minimax-rate-proof-4} and \eqref{eq:near-minimax-rate-proof-6}, together with the bound $K \le k_* = \Ordo(n^{d_*/(2+d_*)})$ from \cref{thm:AFPC}.
We conclude the proof by appropriately rescaling $\gamma$, and simplifying the bound by using $\bnd_0^2 = \Ordo\big((1+\rrho^2)(\cphi+\lipphi)^2\theta_3^2 \lipf_0^2\big)$ from \cref{thm:fni-bias-bound}, $\lipf_0^2 = \Ordo\big(\theta_0^2 + \clip^2\lip_*^2+\sigma^2\ln(\bnd_2/\gamma)\big)$ from \cref{thm:Eapprox}, $\theta_0^2 = \Ordo\big((\rsigma^2 + \lip_*^2)\ln^2(n)\big)$ from \eqref{eq:dcf-params} and \cref{thm:event}, $\bnd_2 = \Ordo(\bnd_{\ln}^2)$, and $(\cphi + \lipphi)^2\clip^2 = \Ordo\big(1+d\ind\{\somenorm \ne 2\}\big)$ from \cref{thm:phi-props,thm:gstar}.
\end{proof}

\subsubsection{Proof of the Concentration Inequality}
\label{sec:proof-conc-ineq}

In this section, we present the deferred proof of \cref{thm:conc-ineq-Bernstein}.
To this end, we employ the following variant of the Bernstein inequality, applied to products of subgaussian random variables:
\begin{lemma} \label{thm:bernstein-ineq}
  Let $Z$ and $W$ be real-valued random variables satisfying $\E\big[e^{Z^2/\omega^2}\big] \le 2$ and $\E\big[e^{W^2/\nu^2}\big] \le 2$ for some constants $\omega, \nu > 0$, and $\E[Z^2] > 0$.
  Define the \emph{kurtosis} of $Z$ by $\K[Z] \defeq \E[Z^4]/\E[Z^2]^2$, and let $c \ge 4\ln(4\sqrt{\K[Z]})$.
  Then, the following hold:
  \begin{itemize}
  \item[(a)] $\E\big[|Z W|^k\big] \le (k!/2) \E[Z^2] (2c\,\nu^2) (c\,\omega \nu)^{k-2}$ for all integers $k \ge 2$,
  \item[(b)] $\E\big[e^{s(\E[ZW]-ZW)}\big] \le \exp\Big(\tfrac{c\,\nu^2 s^2 \E[Z^2]}{1 - c\,\omega\nu s}\Big)$ for all $s \in \big(0,1/(c\,\omega\nu)\big)$.
  \end{itemize}
\end{lemma}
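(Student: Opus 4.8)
The plan is to prove the moment bound (a) first and then derive the exponential bound (b) from it by a routine Taylor-series argument, which I expect to be the easy direction. Writing $\mu \defeq \E[ZW]$ and expanding $\E[e^{-sZW}] = 1 - s\mu + \sum_{k\ge2}\frac{(-s)^k}{k!}\E[(ZW)^k]$, I would dominate the tail term by term using (a): $\sum_{k\ge2}\frac{s^k}{k!}\E[|ZW|^k] \le \sum_{k\ge2}s^k\,\frac{\E[Z^2](2c\nu^2)}{2}(c\omega\nu)^{k-2} = \frac{c\nu^2 s^2\E[Z^2]}{1-c\omega\nu s}$ whenever $c\omega\nu s < 1$, which is exactly the stated domain. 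Combining with $\ln x \le x-1$ gives $\ln\E[e^{-sZW}] \le \E[e^{-sZW}]-1 \le -s\mu + \frac{c\nu^2 s^2\E[Z^2]}{1-c\omega\nu s}$, and adding $s\mu$ yields (b). So essentially all the difficulty is concentrated in (a).

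For (a) the key point is that the variance proxy must be $\E[Z^2]$ rather than the subgaussian scale $\omega^2$, and a naive split $\E[Z^2W^2]\le\sqrt{\E[Z^4]\E[W^4]}$ is too lossy: it introduces $\sqrt{\K[Z]}$ and forces $c\gtrsim\sqrt{\K[Z]}$ instead of the claimed $c\gtrsim\ln\K[Z]$. To recover the logarithmic dependence I would pass to the tilted probability measure $Q$ with density $g\defeq Z^2/\E[Z^2]$ relative to the underlying measure $P$ (well-defined since $\E[Z^2]>0$), so that $\E[Z^2\,\Phi]=\E[Z^2]\,\E_Q[\Phi]$. The crucial estimate is the entropy bound $D(Q\,\|\,P)=\E[g\ln g]=\E_Q[\ln g]\le\ln\E_Q[g]=\ln\E[g^2]=\ln\K[Z]$, the inequality being Jensen applied to the concave $\ln$ under $Q$. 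For $k=2$ the Donsker--Varadhan / Gibbs variational inequality gives $\frac{t}{\nu^2}\E_Q[W^2]\le D(Q\,\|\,P)+\ln\E_P[e^{tW^2/\nu^2}]$, and with $\E_P[e^{tW^2/\nu^2}]\le 2^t$ for $t\in(0,1)$ (Jensen together with $\E[e^{W^2/\nu^2}]\le2$) I obtain, as $t\to1$, the bound $\E[Z^2W^2]\le\nu^2\E[Z^2]\big(\ln\K[Z]+\ln2\big)\le 2c\nu^2\E[Z^2]$, the last step using $c\ge4\ln(4\sqrt{\K[Z]})$, i.e. $2c\ge \ln\K[Z]+\ln2$. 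This settles $k=2$ with room to spare.

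For $k\ge3$ I would factor out one $Z^2$, reducing the claim (up to constants) to $\E_Q[|Z|^{k-2}|W|^k]\le k!\,c^{k-1}\omega^{k-2}\nu^k$, and truncate $Z$ at level $M\asymp\omega\sqrt c$. On $\{|Z|\le M\}$ the factor $|Z|^{k-2}\le M^{k-2}$ peels off cleanly, leaving $\E_Q[|W|^k]$, which I would bound by $\Ordo\big(\nu^k(\ln\K[Z])^{k/2}\big)$: writing $|W|^k\le\nu^k\Gamma(k/2+1)t^{-k/2}e^{tW^2/\nu^2}$ and estimating $\E_Q[e^{tW^2/\nu^2}]$ by H\"older under $P$ with exponent tuned to $\epsilon\asymp k/\ln\K[Z]$ (so that $\E[g^{1+\epsilon}]=\E_Q[g^\epsilon]\le\K[Z]^{\epsilon}$ stays near $1$ while the $W$-exponential moment remains finite), then optimizing $t$. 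The truncation scale $M^{k-2}\asymp(\omega\sqrt c)^{k-2}$ and this $(\ln\K[Z])^{k/2}\asymp c^{k/2}$ growth multiply to the required $c^{k-1}\omega^{k-2}$, and Stirling supplies the $k!$. All these steps rest on the subgaussian moment bound $\E[Z^{2k}]\le 2\,k!\,\omega^{2k}$, read off directly from $\E[e^{Z^2/\omega^2}]\le2$.

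The main obstacle I anticipate is the tail contribution on $\{|Z|>M\}$ together with making the argument uniform in $k$. Here I would use a Cauchy--Schwarz split under $Q$ (now harmless, since the linear $\E[Z^2]$ factor has already been extracted) against the tilted tail probability $Q(|Z|>M)\le e^{-M^2/(2\omega^2)}\,\E_Q\big[e^{Z^2/(2\omega^2)}\big]$, and this is precisely where the calibration $c\ge4\ln(4\sqrt{\K[Z]})$ enters: it yields $e^{-M^2/(2\omega^2)}=e^{-c/2}\le(16\,\K[Z])^{-1}$, so the kurtosis-sized moments created by the tail are damped back to the $\E[Z^2]$-scale of the target. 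Balancing the truncation level against the small-$Z$ and tail estimates simultaneously for every $k\ge2$ — so that the log-kurtosis overhead never degenerates into a genuine power of $\K[Z]$ — is the delicate bookkeeping the proof must carry out.
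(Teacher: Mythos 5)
Your proposal is necessarily a different route from the paper's, because the paper does not prove this lemma at all: it cites Lemma~A.5 of Balázs (2016) for part (a) and Theorem~2.10 of Boucheron--Lugosi--Massart for part (b). Your derivation of (b) from (a) is exactly the content of that cited theorem (Bernstein moment condition $\Rightarrow$ sub-gamma MGF via termwise domination of the Taylor series and $1+x\le e^x$), and it is correct. Your $k=2$ argument for (a) is also correct and clean: the entropy bound $D(Q\,\|\,P)=\E_Q[\ln g]\le\ln\E_Q[g]=\ln\K[Z]$, Donsker--Varadhan, and $\E_P[e^{tW^2/\nu^2}]\le 2^t$ give $\E[Z^2W^2]\le\nu^2\E[Z^2](\ln\K[Z]+\ln 2)\le 2c\nu^2\E[Z^2]$, matching the target. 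The truncated part for $k\ge 3$ is workable as well (Lyapunov/Hölder with $p-1\asymp1/\ln\K[Z]$, or your tilted version, yields $\E[Z^2|W|^k]\lesssim\E[Z^2]\nu^k\Gamma(k/2+1)(\ln\K[Z])^{k/2}$, which the $k!\,c^{k/2}$ budget absorbs).

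The genuine gap is in the tail term for $k\ge3$, and it is exactly where you locate the difficulty but your proposed fix does not close it. Cauchy--Schwarz under $Q$ leaves the factor $\E_Q\big[|Z|^{2(k-2)}|W|^{2k}\big]^{1/2}=\big(\E_P[Z^{2k-2}W^{2k}]/\E[Z^2]\big)^{1/2}$; bounding the numerator by subgaussian $P$-moments produces $\Ordo\big(\omega^{k-1}\nu^k\,(\cdot)/\E[Z^2]^{1/2}\big)$, and after multiplying by $Q(|Z|>M)^{1/2}\le e^{-c/4}(2\K[Z])^{1/4}$ and using $e^{-c/4}\le(4\sqrt{\K[Z]})^{-1}$ the surviving factor is of order $\omega/\E[Z^4]^{1/4}$, equivalently $(\omega^2/\E[Z^2])^{1/2}\K[Z]^{-1/4}$. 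This ratio is \emph{not} controlled by $\K[Z]$ or $c$: taking $Z\equiv\delta$ with $\delta\ll\omega$ gives $\K[Z]=1$, $c=4\ln 4$, but $\omega^2/\E[Z^2]$ arbitrarily large. So your claim that the Cauchy--Schwarz under $Q$ is ``harmless since the linear $\E[Z^2]$ factor has already been extracted'' is precisely the wrong intuition — squaring reintroduces half a power of $\E[Z^2]$ in the denominator, and the $e^{-c/2}\le(16\K[Z])^{-1}$ calibration only cancels kurtosis-sized quantities, not $\omega^2/\E[Z^2]$-sized ones. The repair is to never unwind to $P$-moments of $Z$ in the tail: estimate the $Q$-moments of $Z$ directly from $\E_Q[e^{Z^2/(2\omega^2)}]\le\sqrt{2\K[Z]}$ (so $\E_Q[Z^{2m}]\le m!(2\omega^2)^m\sqrt{2\K[Z]}$, costing only a $\K[Z]^{1/4}$-type factor that $e^{-c/4}$ does absorb), or equivalently keep the $Z^2$ weight attached through the first Cauchy--Schwarz so that the resulting $\E[Z^4]^{1/2}=\K[Z]^{1/2}\E[Z^2]$ cancels against $e^{-c/4}P$-tail bounds. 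With that modification the bookkeeping closes uniformly in $k$; as written, it does not.
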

\begin{proof}
  Part (a) was proved in Lemma~A.5 of \citet{Balazs2016}.
  Part (b) follows from Theorem~2.10 of \citet{BoucheronEtAl2013}, using part (a).
\end{proof}

Next, we combine \cref{thm:bernstein-ineq} with the ideas of \citet{BalazsGyorgySzepesvari2016} to prove \cref{thm:conc-ineq-Bernstein}.
\medskip
\begin{proof}\textbf{of \cref{thm:conc-ineq-Bernstein}}
Introduce the shorthand notation $Z_f \defeq Z_{f,0}$ and $W_f \defeq W_{f,0}$.
By Lemma~A.2 of \citet{Balazs2016}, we have $\E[Z_f^4] \le 2(2/e)^2\omega^4$.
Since $\E[Z_f^2] \ge \mu^2$, it follows that $\K[Z_f] \le 2(\omega/\mu)^4$ for all $f\in\setF$.
Define $c \defeq 8\ln(3\omega/\mu)$, which satisfies $c \ge \max_{f\in\setF}4\ln(4\sqrt{\K[Z_f]})$.
Then, by applying \cref{thm:bernstein-ineq} with $\E[Z_f^2] \le \alpha \E[Z_f W_f]$, we obtain for all $s \in (0,1)$, $i\in[n]$, and $f\in\setF$ that
\begin{equation} \label{eq:bernstein}
  \E\Big[e^{s(\E[Z_f W_f]-Z_{f,i} W_{f,i})/(c\,\omega\nu)}\Big]
  \le \exp\Big(\frac{c\,\nu^2 s^2 \alpha \E[Z_f W_f]}{(1-s)(c\,\omega\nu)^2}\Big)
  .
\end{equation}
Let $s \in (0,1)$ and $t > 0$ be constants to be chosen later.
Thereby, applying the union and Chernoff bounds, using the independence of $Z_fW_f, Z_{f,1}W_{f,1},\ldots,Z_{f,n}W_{f,n}$ for each fixed $f \in \setF$, and applying \eqref{eq:bernstein}, we obtain
\begin{equation*} \begin{split}
  \Prob\bigg\{ \max_{f\in\setF}\Big\{\E\big[Z_fW_f\big] - \frac{2}{n}\sum_{i=1}^n &Z_{f,i}W_{f,i}\Big\} > \frac{2tc\,\omega\nu}{s\,n} \bigg\}
  \\
  &\le \Prob\bigg\{\max_{f\in\setF}\frac{s}{2c\,\omega\nu}\sum_{i=1}^n\big(\E[Z_fW_f] - 2Z_{f,i}W_{f,i}\big) > t\bigg\}
  \\
  &\le e^{-t} \sum_{f\in\setF} \E\bigg[e^{\frac{s}{2c\,\omega\nu}\big(\sum_{i\in[n]}(\E[Z_fW_f] - 2 Z_{f,i}W_{f,i})\big)}\bigg] 
  \\
  &= e^{-t} \sum_{f\in\setF} e^{-\frac{s\,n\,\E[Z_f W_f]}{2c\,\omega\nu}} \prod_{i=1}^n \E\Big[e^{s(\E[Z_f W_f]-Z_{f,i}W_{f,i})/(c\,\omega\nu)}\Big]
  \\
  &\le e^{-t} \sum_{f\in\setF} \exp\bigg(\frac{s\,n\,\E[Z_fW_f]}{c\,\omega \nu}\Big(-\frac{1}{2} + \frac{s \nu \alpha}{(1-s)\omega}\Big)\bigg)
  \\
  &= \gamma ,
\end{split} \end{equation*}
where we set $s \defeq \omega/(\omega + 2\alpha\nu)$ and $t \defeq \ln(|\setF|/\gamma)$ for the last line.
This proves the claim.
\end{proof}

\section{Approximation Rates of Some DC Classes}

An important part of our analysis is understanding the approximation rate of the chosen function representation for the estimator to the underlying Lipschitz regression function.
To strengthen the connection between our work and the existing literature, we establish uniform approximation results for other delta-convex function classes that have been proposed to extend convex regression techniques to the more general Lipschitz setting of~\eqref{eq:data-model}.

To the best of our knowledge, the approximation rate bounds presented in this section (\cref{thm:max-min-affine-approx,thm:weakly-delta}) have not appeared in the literature.
The closest related result establishes that DC functions are dense in the class of (locally) Lipschitz functions on a closed, bounded, convex domain in $\setR^d$ \citep[Proposition~2.2]{BacakBorwein2011}.

Define the uniform norm of a function $f : \setX \to \setR$ on $\setX \subseteq \setR^d$ by $\norm{f}_{\infty,\setX} \defeq \sup_{\vx\in\setX}|f(\vx)|$.

\subsection{Max-min-affine Functions}
\label{sec:max-min-affine}

For all $k_0, l_0 \in \setN$, define the set of max-min-affine functions by
\begin{equation*} \begin{split}
  \setM_{k_0,l_0}
  \defeq \Big\{
  m : \setR^d \to \setR \,\Big|\, & m(\vx) \defeq \max_{k\in[k_0]}\min_{l\in[l_0]}b_{k,l} + \vx^\T\vw_{k,l} ,\\
   & \vx\in\setR^d ,\, b_{k,l}\in\setR ,\, \vw_{k,l}\in\setR^d ,\, k\in[k_0] ,\, l\in[l_0]
  \Big\} .
\end{split} \end{equation*}
Recall from \eqref{eq:dcf-funcs1} that $v_{f,k}$ denotes the parameter of $f \in \setF_\infty(\setXhat)$ associated with the norm term, for some finite $\setXhat \subset \setR^d$ and any $k \in [|\setXhat|]$.
We then define the class in which this parameter is restricted to be nonpositive by \mbox{$\setF_{\infty-}(\setXhat) \defeq \big\{f\in\setF_\infty(\setXhat) : v_{f,k} \le 0, k \in [|\setXhat|]\big\}$}.
The approximation in \cref{thm:approxL} belongs to $\setF_{\infty-}(\setXhat)$, thereby achieving a uniform approximation rate for max-min-affine functions, as established in the next result.
\begin{corollary} \label{thm:max-min-affine-approx}
  Let $\setX \subset \setR^d$ and suppose there exist $r, t > 0$ such that $N_{\norm{\cdot}}(\setX,\epsilon) \le (r/\epsilon)^t$ for all $\epsilon \in (0,r]$.
  Let $f \in \setF_{\lip,\setX}$ for some Lipschitz constant $\lip > 0$.
  Then, for all $k_0 \in \setN$ and $l_0 \ge 2d$, there exists $m \in \setM_{k_0,l_0}$ such that $\norm{f - m}_{\infty,\setX} \le (1+\sqrt{d}) \lip r k_0^{-1/t}$ and $m \in \setF_{\sqrt{d}\lip,\setR^d}$.
\end{corollary}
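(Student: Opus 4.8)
The plan is to instantiate \cref{thm:approxL} at a scale dictated by the polynomial covering bound, and then to recognize the resulting max-concave approximant as a max-min-affine function. First I would set $\epsilon \defeq r k_0^{-1/t}$; since $k_0 \ge 1$ this lies in $(0,r]$, so the hypothesis yields $N_{\norm{\cdot}}(\setX,\epsilon) \le (r/\epsilon)^t = k_0$. Consequently $\setX$ admits an (internal) $\epsilon$-cover $\setX_\epsilon \subseteq \setX$ with $1 \le |\setX_\epsilon| \le k_0$.

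Next I would apply \cref{thm:approxL} with the max-norm $\somenorm = \infty$, for which $\norm{\cdot}_\infty \le \norm{\cdot} \le \sqrt{d}\norm{\cdot}_\infty$ gives the constants $t_0 = 1$ and $t_1 = \sqrt{d}$. The approximant $\hat{f}(\vx) \defeq \max_{\vxhat\in\setX_\epsilon} f(\vxhat) - \sqrt{d}\lip\norm{\vx-\vxhat}_\infty$ then satisfies $\norm{f-\hat{f}}_{\infty,\setX} \le (1+t_0^{-1}t_1)\lip\epsilon = (1+\sqrt{d})\lip r k_0^{-1/t}$ and $\hat{f} \in \setF_{(t_1/t_0)\lip,\setR^d} = \setF_{\sqrt{d}\lip,\setR^d}$, matching the two claimed bounds. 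It then remains to set $m \defeq \hat{f}$ and to verify $m \in \setM_{k_0,l_0}$.

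The key step is rewriting $\hat{f}$ as a max-min-affine function, which exploits that its norm coefficient is the nonpositive number $-\sqrt{d}\lip$, so that $\hat{f} \in \setF_{\infty-}(\setX_\epsilon)$. Expanding $\norm{\vx-\vxhat}_\infty = \max_{j\in[d]}\max\{x_j-\hat{x}_j,\,\hat{x}_j-x_j\}$ over $2d$ affine functions (where $\hat{x}_j$ is the $j$-th coordinate of $\vxhat$) and using that multiplying by $-\sqrt{d}\lip \le 0$ flips the maximum to a minimum, each inner term becomes $f(\vxhat) - \sqrt{d}\lip\norm{\vx-\vxhat}_\infty = \min_{j\in[d]}\min\big\{f(\vxhat)-\sqrt{d}\lip(x_j-\hat{x}_j),\,f(\vxhat)+\sqrt{d}\lip(x_j-\hat{x}_j)\big\}$, a minimum of $2d$ affine functions in $\vx$. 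Hence $\hat{f}$ is a maximum over $|\setX_\epsilon| \le k_0$ such terms, each a minimum over $2d \le l_0$ affine functions, so $\hat{f} \in \setM_{|\setX_\epsilon|,2d}$. Padding to exactly $k_0$ outer terms and $l_0$ inner terms by duplicating affine pieces (which changes neither the max nor the min) gives $m \in \setM_{k_0,l_0}$.

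I expect the only delicate point to be the bookkeeping $\setF_{\infty-}(\setX_\epsilon) \subseteq \setM_{k_0,l_0}$: one must confirm that the sign condition $v \le 0$ is precisely what converts each $\norm{\cdot}_\infty$ term into a minimum of affine functions, and that the duplication argument legitimately fills out the required counts $k_0$ and $l_0$ without altering the function value. Everything else reduces to a direct substitution of $\epsilon$, $t_0$, and $t_1$ into \cref{thm:approxL}.
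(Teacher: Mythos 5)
Your proposal is correct and follows essentially the same route as the paper: instantiate \cref{thm:approxL} with $\epsilon = r k_0^{-1/t}$, $t_0 = 1$, $t_1 = \sqrt{d}$, and then convert the resulting element of $\setF_{\infty-}(\cdot)$ into a max-min-affine function by expanding $\norm{\cdot}_\infty$ as a maximum of $2d$ linear forms, which the nonpositive coefficient turns into an inner minimum. The only cosmetic difference is that you pad by duplicating affine pieces to reach exactly $k_0$ and $l_0$ terms, whereas the paper pads the center set to cardinality $k_0$ up front; both are equivalent.
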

\begin{proof}
  Write the max-norm $\norm{\cdot}_\infty$ in max-linear form as $\norm{\vx}_\infty = \max_{j\in[d],s\in\{-1,1\}} s \ve_j^\T\vx$ for all $\vx\in\setR^d$, where $\ve_1,\ldots,\ve_d$ are the canonical basis vectors of $\setR^d$.
  Then, functions in $\setM_{k_0,l_0}$ with $l_0 \ge 2d$ can use the internal minimization to represent the negated max-norm $-\norm{\cdot}_\infty$, thereby allowing implementation of any $f \in \setF_{\infty-}(\setXhat)$ with any $\setXhat \defeq \{\vxhat_1,\ldots,\vxhat_{k_0}\}$ as
  \begin{equation*} \begin{split}
    f(\vx)
    = \max_{k\in[k_0]}b_{f,k} \hspace{-0.5mm}+\hspace{-0.5mm} \vu_{f,k}^\T(\vx \hspace{-0.5mm}-\hspace{-0.5mm} \vxhat_k) \hspace{-0.5mm}+\hspace{-0.5mm} v_{f,k}\norm{\vx \hspace{-0.5mm}-\hspace{-0.5mm} \vxhat_k}_\infty
    = \max_{k\in[k_0]}\hspace{-1mm}\min_{\substack{j\in[d],\\ s\in\{-1,1\}}}\hspace{-2.5mm} b_{f,k} \hspace{-0.5mm}+\hspace{-0.5mm} (\vu_{f,k} \hspace{-0.5mm}+\hspace{-0.5mm} v_{f,k} s \ve_j)\hspace{-0.5mm}^\T\hspace{-0.5mm}(\vx \hspace{-0.5mm}-\hspace{-0.5mm} \vxhat_k)
  \end{split} \end{equation*}
for all $\vx\in\setR^d$.
Hence, the function $\hat{f} \in \setF_{\infty-}(\setXhat)$ from \cref{thm:approxL} belongs to $\setM_{k_0,l_0}$.

  Set $\epsilon \defeq r k_0^{-1/t}$, which ensures that $N_{\norm{\cdot}}(\setX,\epsilon) \le (r/\epsilon)^t = k_0$.
  Choose $\setXhat \subseteq \setX$ such that $|\setXhat| = k_0$ and it contains an $\epsilon$-cover of $\setX$ \wrt\ $\norm{\cdot}$.
  The claims then follow from \cref{thm:approxL} using $\setX$, $\setXhat$, $\epsilon$, $t_0 = 1$, $t_1 = \sqrt{d}$, and $f \in \setF_{\lip,\setX}$.
\end{proof}

For any bounded set $\setX$, the covering condition of \cref{thm:max-min-affine-approx} is satisfied with $t = d$ by \cref{thm:volume-argument}.
This yields the approximation rate ${k_0}^{-1/d}$, which is known to be optimal \citep[Theorem~4.2]{DeVoreEtAl1989}.

An appealing property of the class $\setM_{k_0,l_0}$ is that it does not depend on the choice of center points $\setXhat$, unlike $\setF_{\infty-}(\setXhat)$ or $\setF_{\infty}(\setXhat)$.
In fact, $\setF_{\infty-}(\setXhat) \subset \setM_{k_0,l_0}$ for any $\setXhat \subset \setR^d$ with $k_0 \ge |\setXhat|$ and $l_0 \ge 2d$.
However, $\setM_{k_0,l_0}$ uses at least $d$ times more parameters than $\setF_{\infty}(\setXhat)$, specifically at least $2d|\setXhat|(d+1)$ versus $|\setXhat|(d+2) + |\setXhat|d$.
Moreover, we are not aware of any tractable algorithm for solving the non-convex ERM problem over the full class $\setM_{k_0,l_0}$.
Only heuristic methods have been proposed \citep[e.g.,][]{BagirovEtAl2010,BagirovEtAl2022}.

The DCF algorithm (\cref{alg:DCF}) addresses this gap in the presented nonparametric setting.
Using $\somenorm = \infty$ and additional linear constraints $v_{f,k} \le 0$ for all $k\in[K]$, expressed as $\vw_k^\T = [\vu_k^\T\,\,v_k]$ with $\vu_k\in\setR^d$ and $v_k \le 0$ in \eqref{eq:erm}, DCF computes an estimator $f_n \in \setF_{\infty-}(\setXhat_K)$ in polynomial time, where the set $\setXhat_K$ is computed by AFPC.
Since the worst-case approximation functions of \cref{thm:approxL,thm:gstar} already satisfy the nonpositivity constraints $\{v_k \le 0 : k\in[K]\}$, this $f_n$ estimator achieves the near-minimax rate of \cref{thm:near-minimax-rate} and can be converted to an equivalent representation $m_n \in \setM_{K,2d}$.

The final refinement step \eqref{eq:erm-local} is performed directly over $\setM_{K,2d}$, initialized at $m_n$, and the resulting estimator continues to satisfy the near-minimax rate of \cref{thm:near-minimax-rate}.
Our proof adapts to this case (and in fact simplifies) since $\setM_{K,2d}$ depends not on the random covariates $\setX_n$, but only on $K \le k_*$.
This leads to a substantial simplification of \cref{thm:fni-approx,thm:fni-approx-cover}, as we only need to construct a cover of the (non-random) class $\setM_{k_*,2d}$, which follows straightforwardly from \cref{thm:func-cover} after bounding the parameter space.
The convergence rate bound for $m_n$ scales with an additional factor of $d$ relative to that of \cref{thm:near-minimax-rate}, due to the larger number of parameters in $m_n$ compared to $f_n$.

Finally, the symmetrization of this max-min-affine estimator can be carried out in a similar way as for the other DCF variants, as described in \cref{sec:symmetric}.

\subsection{Approximation of Smooth Functions}
\label{sec:approxS}

We now present an approximation result analogous to \cref{thm:approxL} for smooth functions, which we use in \cref{sec:weakly-convex} to establish uniform approximation results for certain delta-convex classes.

For some $\nu > 0$, we say that a function $f : \setX \to \setR$ is $\nu$-smooth on $\setX$ \wrt\ $\norm{\cdot}$ if it is differentiable on $\setX \subseteq \setR^d$, and its gradient $\nabla f : \setX \to \setR^d$ is $\nu$-Lipschitz on $\setX$ \wrt\ $\norm{\cdot}$, that is $\norm{\nabla f(\vx) - \nabla f(\vxhat)} \le \nu\norm{\vx-\vxhat}$ holds for all $\vx,\vxhat\in\setX$.
Denote the class of $\nu$-smooth functions on $\setX$ \wrt\ $\norm{\cdot}$ by $\setF_{\nu,\setX}^\nabla$.

Then consider the following uniform approximation bound for smooth functions:
\begin{theorem} \label{thm:approxS}
  Let $\setX_\epsilon \subseteq \setX$ be an $\epsilon$-cover of a convex set $\setX \subset \setR^d$ \wrt\ $\norm{\cdot}$.
  Let $f \in \setF_{\nu,\setX}^\nabla$ for some constant $\nu > 0$, and define $\tilde{f}_1(\vx) \defeq \max_{\vxhat\in\setX_\epsilon} f(\vxhat) + \nabla f(\vxhat)^\T(\vx-\vxhat) - \nu\norm{\vx-\vxhat}^2$ for all $\vx\in\setR^d$.
  Then, $0 \le f(\vx) - \tilde{f}_1(\vx) \le 2\nu\epsilon^2$ for all $\vx\in\setX$.
\end{theorem}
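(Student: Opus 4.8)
The plan is to mirror the proof of \cref{thm:approxL}, but to replace the $\lip$-Lipschitz estimate with the quadratic ``descent'' bound that characterizes smoothness. The only analytic fact I will need is that for every $f \in \setF_{\nu,\setX}^\nabla$ and all $\vx, \vxhat \in \setX$,
\[
  \big| f(\vx) - f(\vxhat) - \nabla f(\vxhat)^\T(\vx - \vxhat) \big| \le \tfrac{\nu}{2}\norm{\vx-\vxhat}^2 .
\]
I would establish this by writing the left-hand difference as $\int_0^1 \big(\nabla f(\vxhat + t(\vx-\vxhat)) - \nabla f(\vxhat)\big)^\T(\vx-\vxhat)\,dt$, then applying Cauchy-Schwarz and the $\nu$-Lipschitzness of $\nabla f$ to bound the integrand by $\nu\,t\,\norm{\vx-\vxhat}^2$, which integrates to $\tfrac{\nu}{2}\norm{\vx-\vxhat}^2$. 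Convexity of $\setX$ enters precisely here, guaranteeing that the segment $\vxhat + t(\vx-\vxhat)$ lies in $\setX$ for $t\in[0,1]$ so that $\nabla f$ is defined and Lipschitz along it; this is the sole use of the convexity hypothesis.

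With this inequality in hand, both bounds are immediate. For the lower bound, I would fix $\vx\in\setX$ and any $\vxhat\in\setX_\epsilon \subseteq \setX$, and use the lower half of the displayed estimate together with $\norm{\vx-\vxhat}^2\ge 0$ to obtain $f(\vx) \ge f(\vxhat) + \nabla f(\vxhat)^\T(\vx-\vxhat) - \nu\norm{\vx-\vxhat}^2$; taking the maximum over $\vxhat\in\setX_\epsilon$ then yields $f(\vx)\ge\tilde{f}_1(\vx)$. For the upper bound, I would invoke the $\epsilon$-covering property to select one $\vxhat\in\setX_\epsilon$ with $\norm{\vx-\vxhat}\le\epsilon$; since $\tilde{f}_1(\vx)$ dominates its single term at this $\vxhat$, the upper half of the displayed estimate gives $f(\vx)-\tilde{f}_1(\vx) \le \tfrac{\nu}{2}\norm{\vx-\vxhat}^2 + \nu\norm{\vx-\vxhat}^2 = \tfrac{3\nu}{2}\norm{\vx-\vxhat}^2 \le \tfrac{3}{2}\nu\epsilon^2 \le 2\nu\epsilon^2$.

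I do not anticipate any genuine obstacle: the statement is a routine smoothness analogue of \cref{thm:approxL}, and the claimed constant $2\nu\epsilon^2$ is in fact loose (the argument delivers $\tfrac{3}{2}\nu\epsilon^2$). The only subtleties worth flagging are that the internal cover condition $\setX_\epsilon\subseteq\setX$ is what makes the pointwise lower bound legitimate at \emph{every} center and the descent lemma applicable at the \emph{chosen} nearby center, and that convexity of $\setX$ is needed exactly for the gradient integral representation above.
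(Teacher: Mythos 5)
Your proof is correct and follows essentially the same route as the paper: both use smoothness to control the first-order Taylor remainder, obtain the lower bound pointwise at every center from the $-\nu\norm{\cdot}^2$ term, and obtain the upper bound at the nearest center via the covering property. The only difference is that you use the integral (descent-lemma) form of the remainder with constant $\nu/2$, whereas the paper uses a mean-value form bounding the remainder by $t_{\vxhat}\nu\norm{\vx-\vxhat}^2 \le \nu\norm{\vx-\vxhat}^2$; this is why you land on the slightly sharper $\tfrac{3}{2}\nu\epsilon^2$ while the paper states $2\nu\epsilon^2$.
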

\begin{proof}
  Choose $\vx \in \setX$ arbitrarily.
  Because $\setX$ is a convex set, Taylor's theorem and $f \in \setF_{\nu,\setX}^\nabla$ yield for all $\vxhat\in\setX_\epsilon$ that $f(\vx) = f(\vxhat) + \nabla f\big(t_{\vxhat}\vx + (1-t_{\vxhat})\vxhat\big)^\T(\vx-\vxhat)$ for some $t_{\vxhat} \in [0,1]$.
  Then by the Cauchy-Schwarz inequality, we get
  \begin{equation*} \begin{split}
    \tilde{f}_1(\vx) - f(\vx)
    &= \max_{\vxhat\in\setX_\epsilon} f(\vxhat) - f(\vx) + \nabla f(\vxhat)^\T(\vx-\vxhat) - \nu \norm{\vx-\vxhat}^2
    \\
    &\le \max_{\vxhat\in\setX_\epsilon}(t_{\vxhat}-1) \nu \norm{\vx-\vxhat}^2
    \\
    &\le 0
      .
  \end{split} \end{equation*}

  For the other side, we have $\min_{\vxhat\in\setX_\epsilon}\norm{\vx-\vxhat} \le \epsilon$ by the $\epsilon$-covering property.
  Then by Taylor's theorem, $f \in \setF_{\nu,\setX}^\nabla$, and the Cauchy-Schwarz inequality, we get
  {
    \setlength{\abovedisplayskip}{\dimexpr\abovedisplayskip-3mm\relax}
    \setlength{\belowdisplayskip}{\dimexpr\belowdisplayskip-4mm\relax}
    \setlength{\abovedisplayshortskip}{\dimexpr\abovedisplayshortskip-3mm\relax}
    \setlength{\belowdisplayshortskip}{\dimexpr\belowdisplayshortskip-4mm\relax}
    \begin{equation*} \begin{split}
      f(\vx) - \tilde{f}_1(\vx)
      &= \min_{\vxhat\in\setX_\epsilon} f(\vx) - f(\vxhat) - \nabla f(\vxhat)^\T(\vx-\vxhat) + \nu \norm{\vx-\vxhat}^2
      \\
      &\le \min_{\vxhat\in\setX_\epsilon} (t_{\vxhat} + 1) \nu \norm{\vx-\vxhat}^2
      \\
      &\le 2 \nu \epsilon^2
        .
    \end{split} \end{equation*}
  }%
  which proves the claim.
\end{proof}

\vspace{-3mm}
The function $\tilde{f}_1$ of \cref{thm:approxS} provides a lower approximation of $f$.
Similarly, an upper approximation is given by $\breve{f}_1$, defined as $\breve{f}_1(\vx) \defeq \min_{\vxhat\in\setX_\epsilon}f(\vxhat) + \nabla f(\vxhat)^\T(\vx-\vxhat) + \nu \norm{\vx-\vxhat}^2$ for all $\vx\in\setR^d$.

Note that $\tilde{f}_1$ and $\breve{f}_1$ use the quadratic feature $\norm{\cdot}^2$, in contrast to the norm feature $\norm{\cdot}$ used by the functions $\hat{f}$ and $\check{f}$ in \cref{sec:approxL}.
The next result shows that the quadratic feature $\norm{\cdot}^2$ can also be used to approximate non-smooth Lipschitz functions.
\begin{theorem} \label{thm:approxLeps}
  Let $\setX_\epsilon \subseteq \setX$ be an $\epsilon$-cover of a set $\setX \subset \setR^d$ \wrt\ $\norm{\cdot}$.
  Suppose that $f \in \setF_{\lip,\setX}$ for some Lipschitz constant $\lip > 0$, and define $\tilde{f}_0(\vx) \defeq \max_{\vxhat\in\setX_\epsilon} f(\vxhat) - (\lip/\epsilon) \norm{\vx-\vxhat}^2$ for all $\vx \in \setR^d$.
  Then, $-\lip\epsilon/4 \le f(\vx) - \tilde{f}_0(\vx) \le 2 \lip \epsilon$ for all $\vx\in\setX$.
\end{theorem}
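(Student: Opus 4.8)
The plan is to mirror the two-sided structure of the proof of \cref{thm:approxL}, treating the upper and lower bounds separately and relying only on the $\lip$-Lipschitzness of $f$ together with the $\epsilon$-covering condition on $\setX_\epsilon$. Unlike \cref{thm:approxL}, where the penalty was linear in the distance and the one-sided bound $\hat f \le f$ held automatically, here the quadratic penalty $(\lip/\epsilon)\norm{\vx-\vxhat}^2$ can fall below the linear Lipschitz increase $\lip\norm{\vx-\vxhat}$ for small distances, so $\tilde f_0$ may slightly exceed $f$, which is precisely what produces the negative lower constant.

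For the upper bound $f(\vx) - \tilde{f}_0(\vx) \le 2\lip\epsilon$, I would fix $\vx \in \setX$ and use the $\epsilon$-covering property to pick a center $\vxhat \in \setX_\epsilon$ with $\norm{\vx-\vxhat} \le \epsilon$. Evaluating the expression inside the maximum defining $\tilde{f}_0$ at this $\vxhat$, the quadratic penalty contributes at most $(\lip/\epsilon)\epsilon^2 = \lip\epsilon$, while $f \in \setF_{\lip,\setX}$ gives $f(\vxhat) \ge f(\vx) - \lip\epsilon$. Combining these two estimates yields $\tilde{f}_0(\vx) \ge f(\vx) - 2\lip\epsilon$.

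For the lower bound $f(\vx) - \tilde{f}_0(\vx) \ge -\lip\epsilon/4$, I would establish the pointwise inequality $f(\vxhat) - (\lip/\epsilon)\norm{\vx-\vxhat}^2 \le f(\vx) + \lip\epsilon/4$ for every $\vxhat \in \setX_\epsilon$, and then take the maximum over $\vxhat$. Writing $r \defeq \norm{\vx-\vxhat}$ and applying $f(\vxhat) \le f(\vx) + \lip r$ from the Lipschitz property, it suffices to verify that $\lip r - (\lip/\epsilon)r^2 \le \lip\epsilon/4$ for all $r \ge 0$. This reduces to maximizing the scalar quadratic $r \mapsto r - r^2/\epsilon$, whose maximum is attained at the interior point $r = \epsilon/2$ with value $\epsilon/4$.

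Both inequalities are elementary, so no single step is a genuine obstacle. The only point requiring care is the lower bound: the maximizer of the quadratic occurs in the interior at $r = \epsilon/2$ rather than at the cover radius $r = \epsilon$, and recognizing this interior maximizer is exactly what yields the sharp constant $\lip\epsilon/4$ instead of a looser boundary estimate.
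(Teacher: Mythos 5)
Your proposal is correct and follows essentially the same argument as the paper: the upper bound comes from evaluating the maximand at a cover point within distance $\epsilon$, and the lower bound comes from maximizing the scalar quadratic $r \mapsto \lip r - (\lip/\epsilon)r^2$ at its interior maximizer $r = \epsilon/2$, giving the constant $\lip\epsilon/4$. No gaps.
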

\begin{proof}
  Choose $\vx \in \setX$ arbitrarily.
  By $f \in \setF_{\lip,\setX}$, we have
  {
    \setlength{\abovedisplayskip}{\dimexpr\abovedisplayskip-2mm\relax}
    \setlength{\belowdisplayskip}{\dimexpr\belowdisplayskip-2mm\relax}
    \setlength{\abovedisplayshortskip}{\dimexpr\abovedisplayshortskip-2mm\relax}
    \setlength{\belowdisplayshortskip}{\dimexpr\belowdisplayshortskip-2mm\relax}
    \begin{equation*}
      \tilde{f}_0(\vx) - f(\vx)
      = \max_{\vxhat\in\setX_\epsilon}f(\vxhat)-f(\vx) - (\lip/\epsilon)\norm{\vx-\vxhat}^2
      \le \max_{\vxhat\in\setX_\epsilon} \lip\norm{\vx-\vxhat}\Big(1 - \frac{\norm{\vx-\vxhat}}{\epsilon}\Big)
      \le \frac{\lip\epsilon}{4} .
    \end{equation*}
  }%
  For the other side, we have $\min_{\vxhat\in\setX_\epsilon}\norm{\vx-\vxhat} \le \epsilon$ by the $\epsilon$-covering property.
  The claim then follows from $f \in \setF_{\lip,\setX}$ as $f(\vx) - \tilde{f}_0(\vx) = \min_{\vxhat\in\setX_\epsilon} f(\vx) - f(\vxhat) + (\lip/\epsilon)\norm{\vx-\vxhat}^2 \le 2 \lip \epsilon$.
\end{proof}

\vspace{-3mm}
Again, the max-concave approximation $\tilde{f}_0$ has an analogous min-convex variant $\breve{f}_0$, defined as $\breve{f}_0(\vx) \defeq \min_{\vxhat\in\setX_\epsilon} f(\vxhat) + (\lip/\epsilon) \norm{\vx-\vxhat}^2$ for all $\vx\in\setR^d$.
\cref{fig:approxS} illustrates the approximations of this section on the smooth and non-smooth examples from \cref{fig:approxL}.
\begin{figure}[b]
  \begin{center}
    \fbox{\includegraphics[width=0.24\textwidth]{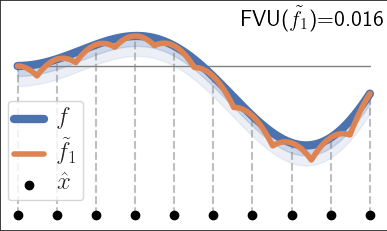}}
    \fbox{\includegraphics[width=0.24\textwidth]{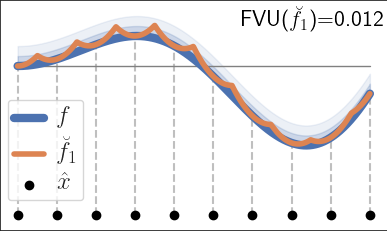}}
    \fbox{\includegraphics[width=0.24\textwidth]{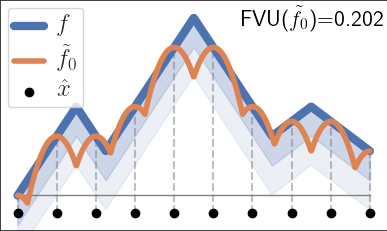}}
    \fbox{\includegraphics[width=0.24\textwidth]{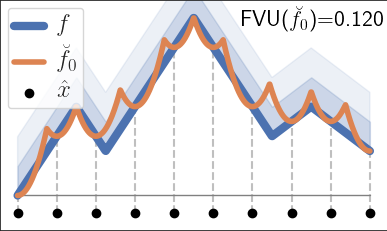}}
  \end{center}
  \vspace{-5mm}
  \caption{
    Approximation of a smooth function (left two plots) by $\tilde{f}_1$ and $\breve{f}_1$ of \cref{thm:approxS}, and of a non-smooth Lipschitz function (right two plots) by $\tilde{f}_0$ and $\breve{f}_0$ of \cref{thm:approxLeps}.
    The setting and notation are the same as in \cref{fig:approxL}, except that in the left two plots, the shaded areas indicate distances of $\nu\epsilon^2$ and $2\nu\epsilon^2$ from $f$.}
  \label{fig:approxS}
  \vspace{-1mm}
\end{figure}

Although it is straightforward to modify the DCF algorithm to use $\norm{\cdot}^2$ instead of $\norm{\cdot}$ in the function representation of $\setF_2(\setXhat_K)$, the near-minimax guarantee of \cref{thm:near-minimax-rate} does not carry over.
Our proof of \cref{thm:near-minimax-rate} does not extend to this case due to the challenge that the approximation functions $\tilde{f}_0$ and $\tilde{f}_1$ are only locally Lipschitz, and the coefficient of the quadratic feature $\norm{\cdot}^2$ in $\tilde{f}_0$ grows as $\lip/\epsilon$, which scales polynomially in $n$ since $\epsilon \approx n^{-1/(2+d_*)}$.
Addressing this issue remains an open direction for future research.

\subsection{Weakly Max-affine and Delta-max-affine Functions}
\label{sec:weakly-convex}

The functions $\tilde{f}_0$ and $\tilde{f}_1$ of \cref{thm:approxS,thm:approxLeps} are weakly convex in the sense of \citet{Vial1983}; that is, there exist $s_0, s_1 \ge 0$ such that $\tilde{f}_0 + s_0 q$ and $\tilde{f}_1 + s_1 q$ are convex functions, where $q$ is a symmetric convex quadratic function given by $q(\vx) \defeq \norm{\vx}^2$ for all $\vx\in\setR^d$.

Define the class of max-affine functions with at most $k_0 \in \setN$ hyperplanes by
\begin{equation*}
  \setM_{k_0} \defeq \Big\{ m : \setR^d \to \setR \,\Big|\, m(\vx) \defeq \max_{k\in[k_0]}b_k + \vw_k^\T\vx ,\, \vx\in\setR^d ,\, b_k\in\setR ,\, \vw_k\in\setR^d ,\, k\in[k_0]\Big\} .
\end{equation*}
Let the class of weakly max-affine functions be $\setM_{k_0}^w \defeq \{f \,|\, f \defeq m - s q ,\, m\in\setM_{k_0} ,\, s\in\setR\}$.
Furthermore, consider the closely related class of delta-max-affine functions, defined by $\setM_{k_0}^\Delta \defeq \{f \,|\, f \defeq m_1 - m_2 ,\, m_1,m_2\in\setM_{k_0}\}$.
The next result provides uniform approximation bounds for both of these classes.
For convenience, let $\setF_{\infty,\setX} \defeq \setF_{\infty,\setX}^\nabla \defeq \{ f \,|\, f : \setX \to \setR \}$.

\begin{corollary} \label{thm:weakly-delta}
  Let $\setX \subset \setR^d$ be a convex set, and suppose there exist $r, t > 0$ such that $N_{\norm{\cdot}}(\setX,\epsilon) \le (r/\epsilon)^t$ for all $\epsilon \in (0,r]$.
  Let $f \in \setF_{\lip,\setX} \cap \setF_{\nu,\setX}^\nabla$ for some constants $\lip, \nu \in (0,\infty]$.
  Then, for all $k_0 \in \setN$, there exist functions $f_w \in \setM_{k_0}^w$ and $f_\Delta \in \setM_{k_0}^\Delta$ such that
  \begin{equation*}
    \norm{f_w - f}_{\infty,\setX} \le 2 \epsilon \min\{\lip, \nu\epsilon\}
    ,\quad\qquad
    \norm{f_\Delta - f}_{\infty,\setX} \le 3 \epsilon \min\{\lip, \nu\epsilon\}
    ,\quad\qquad
    \epsilon \defeq r k_0^{-1/t} .
  \end{equation*}
\end{corollary}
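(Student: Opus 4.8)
The plan is to build both approximations from the max-concave lower estimators $\tilde{f}_0$ and $\tilde{f}_1$ already constructed in \cref{thm:approxLeps,thm:approxS}, exploiting the fact that expanding the quadratic feature $\norm{\vx-\vxhat}^2 = \norm{\vx}^2 - 2\vxhat^\T\vx + \norm{\vxhat}^2$ turns each of them into a max-affine function minus a scaled copy of $q(\vx) \defeq \norm{\vx}^2$. First I would fix $\epsilon \defeq r k_0^{-1/t}$; the covering hypothesis then gives $N_{\norm{\cdot}}(\setX,\epsilon) \le (r/\epsilon)^t = k_0$, so I can choose an internal $\epsilon$-cover $\setX_\epsilon \subseteq \setX$ with $|\setX_\epsilon| \le k_0$, which will keep the hyperplane count at $k_0$ throughout.

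For the weakly max-affine estimator I would split on which term attains $\min\{\lip,\nu\epsilon\}$. When $\lip \le \nu\epsilon$ I take $f_w \defeq \tilde{f}_0$ from \cref{thm:approxLeps} with $s \defeq \lip/\epsilon$, and when $\nu\epsilon < \lip$ I take $f_w \defeq \tilde{f}_1$ from \cref{thm:approxS} with $s \defeq \nu$; the conventions $\setF_{\infty,\setX} = \setF_{\infty,\setX}^\nabla = \{f : \setX \to \setR\}$ cover the endpoints $\lip = \infty$ or $\nu = \infty$. Expanding $\norm{\vx-\vxhat}^2$ in either definition collects the terms into $f_w(\vx) = m(\vx) - s\,q(\vx)$ for a max-affine $m \in \setM_{k_0}$ built on $\setX_\epsilon$, so $f_w \in \setM_{k_0}^w$. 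The two source theorems give $\norm{f_w - f}_{\infty,\setX} \le 2\lip\epsilon$ in the first case (the upper-side bound $2\lip\epsilon$ dominating the lower-side $\lip\epsilon/4$) and $\le 2\nu\epsilon^2$ in the second, and both equal $2\epsilon\min\{\lip,\nu\epsilon\}$ by the choice of case.

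For the delta-max-affine estimator I would replace the concave term $-s\,q$ by a max-affine lower bound. Since $q$ is convex its tangent planes lie below it, and the exact identity $q(\vx) - \bigl(q(\vxhat) + \nabla q(\vxhat)^\T(\vx-\vxhat)\bigr) = \norm{\vx-\vxhat}^2$ shows that $m_2(\vx) \defeq \max_{\vxhat\in\setX_\epsilon} s\,q(\vxhat) + s\,\nabla q(\vxhat)^\T(\vx-\vxhat) \in \setM_{k_0}$ satisfies $0 \le s\,q(\vx) - m_2(\vx) = \min_{\vxhat\in\setX_\epsilon} s\norm{\vx-\vxhat}^2 \le s\epsilon^2$ for all $\vx\in\setX$, via the nearest cover point. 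Setting $f_\Delta \defeq m - m_2 \in \setM_{k_0}^\Delta$ and applying the triangle inequality yields $\norm{f_\Delta - f}_{\infty,\setX} \le \norm{s\,q - m_2}_{\infty,\setX} + \norm{f_w - f}_{\infty,\setX} \le s\epsilon^2 + 2\epsilon\min\{\lip,\nu\epsilon\}$. In each branch $s\epsilon^2 = \epsilon\min\{\lip,\nu\epsilon\}$ (since $s = \lip/\epsilon$ gives $s\epsilon^2 = \lip\epsilon$ and $s = \nu$ gives $s\epsilon^2 = \nu\epsilon^2$), so the total is $3\epsilon\min\{\lip,\nu\epsilon\}$. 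The only point requiring care is matching the scale $s$ in each branch so that the quadratic approximation error $s\epsilon^2$ lands exactly on $\epsilon\min\{\lip,\nu\epsilon\}$ and the constant $3$ comes out cleanly; everything else is bookkeeping once the \emph{max-affine minus scaled quadratic} structure of $\tilde{f}_0$ and $\tilde{f}_1$ is recognized.
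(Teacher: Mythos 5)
Your proposal is correct and follows essentially the same route as the paper's proof: select $\tilde{f}_0$ or $\tilde{f}_1$ according to which of $\lip$ and $\nu\epsilon$ attains the minimum, expand $\norm{\vx-\vxhat}^2$ to expose the max-affine-minus-$s\,q$ structure, and then replace $s\,q$ by the maximum of its first-order Taylor approximations at the cover points to pass from $\setM_{k_0}^w$ to $\setM_{k_0}^\Delta$ via the triangle inequality. The accounting ($s\epsilon^2 = \epsilon\min\{\lip,\nu\epsilon\}$ in each branch, giving the constant $3$) matches the paper exactly.
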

\begin{proof}
  The choice $\epsilon = r k_0^{-1/t}$ ensures that $N_{\norm{\cdot}}(\setX,\epsilon) \le (r/\epsilon)^t = k_0$, which allows us to choose $\setXhat \subseteq \setX$ with $|\setXhat| = k_0$ such that it contains an $\epsilon$-cover of $\setX$ \wrt\ $\norm{\cdot}$.
  Suppose that $\{\lip,\nu\} \ne \{\infty\}$, otherwise the claim is trivial and vacuous.

  For finite $\lip$ and $\nu$, define the following weakly max-affine functions for all $\vx\in\setR^d$:
  \begin{equation*} \begin{aligned}
    \tilde{f}_0(\vx) &\defeq m_0(\vx) \hspace{-0.5mm}-\hspace{-0.5mm} (\lip/\epsilon) q(\vx) , & m_0(\vx) &\defeq \max_{\vxhat\in\setXhat}f(\vxhat) - (\lip/\epsilon)\norm{\vxhat}^2 + 2(\lip/\epsilon)\vxhat^\T\vx ,
    \\
    \tilde{f}_1(\vx) &\defeq m_1(\vx) \hspace{-0.5mm}-\hspace{-0.5mm} \nu q(\vx) , & m_1(\vx) &\defeq \max_{\vxhat\in\setXhat}f(\vxhat) - \nu\norm{\vxhat}^2 - \nabla f(\vxhat)^{\T}\vxhat + \big(\nabla f(\vxhat) \hspace{-0.5mm}+\hspace{-0.5mm} 2\nu\vxhat\big)\hspace{-0.5mm}^{\T}\hspace{-0.5mm}\vx .
  \end{aligned} \end{equation*}
Clearly, $\tilde f_0, \tilde f_1 \in \setM_{k_0}^w$, where $\tilde f_0$ coincides with the function constructed in \cref{thm:approxLeps} and $\tilde f_1$ with the one in \cref{thm:approxS}.
Set $\tilde{f}_1 \defeq \tilde{f}_0$ for $\nu = \infty$ and $\tilde{f}_0 \defeq \tilde{f}_1$ for $\lip = \infty$.
Then, the claimed upper bound on $\norm{f_w - f}_{\infty,\setX}$ follows directly for some $f_w \in \{\tilde{f}_0, \tilde{f}_1\}$.

Next, we approximate the quadratic function $q$ by a max-affine function formed as the maximum of the first-order Taylor approximations of $q$ at the points in $\setXhat$.
Define $\hat{m}(\vx) \defeq \max_{\vxhat\in\setXhat}-\norm{\vxhat}^2 + 2\vxhat^\T\vx$ for all $\vx\in\setR^d$.
Then, by the $\epsilon$-covering property of $\setXhat$, we have $q(\vx) - \hat{m}(\vx) = \min_{\vxhat\in\setXhat}\norm{\vx-\vxhat}^2 \in [0,\epsilon^2]$ for all $\vx\in\setX$.
Let $\delta \defeq \min\{\lip, \nu\epsilon\}$ and $m_w \in \{m_0, m_1\}$ such that $f_w \defeq m_w - (\delta/\epsilon)q \in \{\tilde{f}_0, \tilde{f}_1\}$ satisfies $\norm{f_w - f}_{\infty,\setX} \le 2\epsilon \delta$.
Define $f_\Delta \defeq m_w - (\delta/\epsilon)\hat{m}$, where $f_\Delta \in \setM_{k_0}^\Delta$.
The second claim then follows by using the triangle inequality as $\norm{f_\Delta - f}_{\infty,\setX} \le \norm{f_\Delta - f_w}_{\infty,\setX} + \norm{f_w - f}_{\infty,\setX} \le (\delta/\epsilon)\epsilon^2 + 2\epsilon \delta = 3 \epsilon \delta$.
\end{proof}

For estimator design in regression, weakly max-affine functions were studied by \citet{SunYu2019}, although without establishing convergence rates.
Using delta-max-affine functions, \citet{SiahkamariEtAl2020} proposed an estimator and proved a suboptimal convergence rate for the case in which the regression function is delta-convex.
Our proof techniques do not apply to either of these function classes, for the reasons explained at the end of \cref{sec:approxS}.
Nevertheless, \cref{thm:weakly-delta} may be useful for extending these developments and designing near-minimax estimators for smooth regression functions, which lies beyond the scope of this work.

\section{Variants of DCF}
\label{sec:variants}

As mentioned in \cref{sec:near-minimax-rate}, one can train over any of the function classes $\setFn(\cdot)$, $\setFn^-(\cdot)$, or $\setFn^\Delta(\cdot)$ with $\somenorm \in \{1,2,\infty,+\}$ using DCF (\cref{alg:DCF}).
While all these settings enjoy the near-minimax guarantee of \cref{thm:near-minimax-rate}, they can differ significantly in empirical performance, as shown in \cref{sec:experiments}.
We further illustrate this difference in \cref{fig:approx-fni} for the 1-dimensional examples discussed earlier in \cref{fig:approxL,fig:approxS}.
\begin{figure}[b!]
  \begin{center}
    $\setFn(\cdot)$ \hspace{32mm} $\setFn^-(\cdot)$ \hspace{30mm} $\setFn^\Delta(\cdot)$
    \\ \vspace{2mm}
    \includegraphics[width=0.305\textwidth]{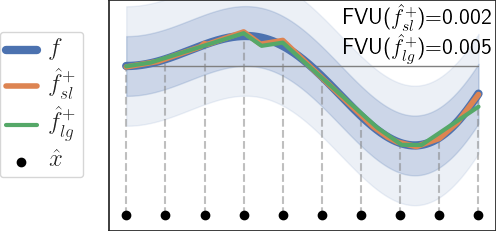}
    \hspace{4mm}
    \fbox{\includegraphics[width=0.24\textwidth]{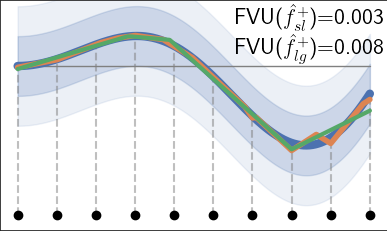}}
    \hspace{4mm}
    \fbox{\includegraphics[width=0.24\textwidth]{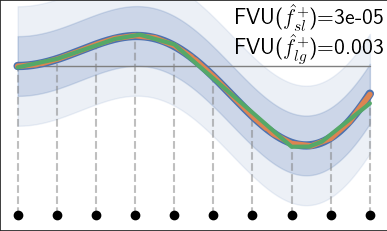}}
    \\ \vspace{2mm}
    \includegraphics[width=0.305\textwidth]{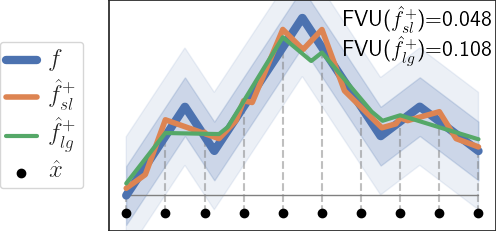}
    \hspace{4mm}
    \fbox{\includegraphics[width=0.24\textwidth]{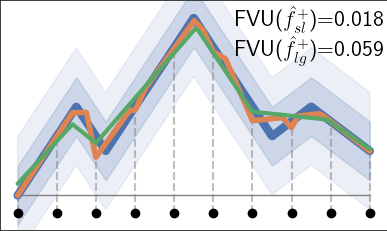}}
    \hspace{4mm}
    \fbox{\includegraphics[width=0.24\textwidth]{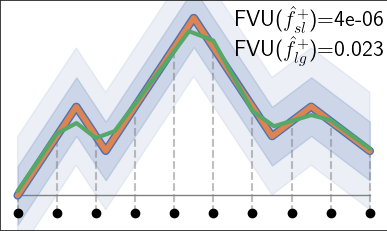}}
  \end{center}
  \vspace{-5mm}
  \caption{
    DCF approximations, each column showing the result for $\setFn(\cdot)$, $\setFn^-(\cdot)$, $\setFn^\Delta(\cdot)$, respectively.
    We used the norms $\somenorm \in \{1,2,\infty\}$ which are equivalent for $d = 1$.
    The settings and the notations are the same as in \cref{fig:approxL}.
    DCF uses the same parameters as in \cref{sec:experiments}, with $\theta_2 = (R_{\setX_n}/n)^2$ for $\hat{f}^+_{\textrm{sl}}$ and $\theta_2 = R_{\setX_n}^2/n$ for $\hat{f}^+_{\textrm{lg}}$.
  }
  \label{fig:approx-fni}
\end{figure}
The plots show that $\setFn(\cdot)$ struggles to approximate the concave regions, while $\setFn^{-}(\cdot)$ struggles with convex regions.
In contrast, the symmetric representation in $\setFn^\Delta(\cdot)$ overcomes both issues and achieves significantly better approximation accuracy.
In all cases, the accuracy is an order of magnitude better than that of the worst-case optimal approximation functions from \cref{thm:approxL} in \cref{fig:approxL}.
The plots also illustrate how the choice of the regularizer $\theta_2$ controls the ``smoothness'' of the estimator.
In these noise-free settings, smaller values of $\theta_2$ yield better results; however, in the noisy problems discussed in \cref{sec:experiments}, such choices can lead to overfitting.

Recall the definition $\setFn^-(\setXhat_K) \defeq \{-f : f \in \setFn(\setXhat_K)\}$ from \cref{sec:dcf-discuss}.
Training over $\setFn^-(\setXhat_K)$ requires only a minor modification to DCF: one can flip the sign of the response variables $Y_1,\ldots,Y_n$ during training, and then flip the sign of the final estimator $\fni$ at the end.
The case of $\setFn^\Delta$ is slightly more involved, and we discuss it in detail in the next section.

\subsection{Symmetric Representations}
\label{sec:symmetric}

We now describe the modifications to DCF (\cref{alg:DCF}) needed to work with the symmetric class $\setFn^\Delta(\setXhat_K) \defeq \{f\,|\,f \defeq f_1 - f_2 ,\, f_1,f_2\in\setFn(\setXhat_K)\}$.

As before, we compute a single set of center points $\setXhat_K$ using AFPC (\cref{alg:AFPC}).
The main difference lies in modifying \eqref{eq:erm} to use two sets of parameters, $\langle (b_k,\vw_k) : k\in[K] \rangle$ and $\langle (b'_k,\vw'_k) : k\in[K] \rangle$, as shown in the following:
\begin{equation} \label{eq:erm-symm} \begin{split}  \raisetag{4mm}
  &\min_{\substack{\reg \in \setR, \\ b_1,\ldots,b_K \in \setR, \\ \vw_1,\ldots,\vw_K \in \setR^{d_\somenorm}, \\ b'_1,\ldots,b'_K \in \setR, \\ \vw'_1,\ldots,\vw'_K \in \setR^{d_\somenorm}}}
  \hspace{-5mm}
  \theta_1 \reg^2 + \sum_{k\in[K]} \theta_2\big(\norm{\vw_k}^2 + \norm{\vw'_k}^2\big) \\[-12mm]
  & \hspace{35mm} + \frac1n \sum_{i\in[n]} \ind\big\{\vX_i \in \setC_k(\setXhat_K)\big\} \Big(b_k - b_k' + \phin(\vX_i,\vXhat_k)^\T(\vw_k - \vw_k') - Y_i\Big)^2  \\
  &\qquad \textrm{such that for all }\, k,l\in[K]:
    b_k \ge b_l + \phin(\vXhat_k,\vXhat_l)^\T\vw_l ,\quad \norm{\vw_k} \le \reg + \theta_0 ,\\
  & \hspace{55.25mm} b'_k \ge b'_l + \phin(\vXhat_k,\vXhat_l)^\T\vw'_l ,\quad \norm{\vw'_k} \le \reg + \theta_0 .
\end{split} \end{equation}

Let the solution of \eqref{eq:erm-symm} be $\big(\reg_n, \langle (b_{n,k},\vw_{n,k},b'_{n,k},\vw'_{n,k}) : k\in[K] \rangle\big)$, and define the (initial) estimator by $f_n \defeq f_{n,1} - f_{n,2}$, where $f_{n,1}(\vx) \defeq \max_{k\in[K]}b_{n,k} + \phin(\vx,\vXhat_k)^\T\vw_{n,k}$, and $f_{n,2}(\vx) \defeq \max_{k\in[K]}b'_{n,k} + \phin(\vx,\vXhat_k)^\T\vw'_{n,k}$, for all $\vx\in\setR^d$.
Clearly, $f_{n,1}, f_{n,2} \in \setFn(\setXhat_K)$, and we have $f_n \in \setFn^\Delta(\setXhat_K)$.

Next, the final step refines the estimator $f_n$ to $\fnih \,\in\, \setFn^\Delta(\setXhat_K)$ that satisfies \eqref{eq:erm-local}, where the regularizer
{
  \setlength{\abovedisplayskip}{\dimexpr\abovedisplayskip-2mm\relax}
  \setlength{\abovedisplayshortskip}{\dimexpr\abovedisplayshortskip-2mm\relax}
  \begin{equation} \label{eq:erm-local-symm} \begin{split}
    \regz_{c_0,c_1,c_2}(f) \defeq c_1\max_{k\in[K],j\in[2]}\big(\norm{\vw_{f_j,k}} - c_0\big)_+^2 + c_2\sum_{k\in[K],j\in[2]}\norm{\vw_{f_j,k}}^2
  \end{split} \end{equation}
}%
is defined for all $c_0, c_1, c_2 \ge 0$, and for all $f \defeq f_1 - f_2 \in \setFn^\Delta(\setXhat_K)$ with $f_1, f_2 \in \setFn(\setXhat_K)$.
Here, we use $\lipf_{f} \defeq \max_{k\in[K],j\in[2]}\norm{\vw_{f_j,k}}$ to define $\regz_n(\cdot)$ in \eqref{eq:erm-local}.

Suppose the refined estimator $\fnih$ is expressed as $\fnih \defeq \hat{f}_{n,1}^+ - \hat{f}_{n,2}^+$ with some functions $\hat{f}_{n,1}^+, \hat{f}_{n,2}^+ \in \setFn(\setXhat_K)$.
Then, the final estimator $\fni \defeq f_{n,1}^+ - f_{n,2}^+$ is defined for all $\vx\in\setR^d$ as
\begin{equation} \label{eq:dcf-estimator-symm} \begin{split}
  f_{n,j}^+(\vx) \defeq \big(\tfrac{3}{2}\hspace{-0.5mm}-\hspace{-0.5mm}j\big)C_n^+ + \hspace{-0.5mm}\max_{k\in\setI_{n,j}^+} \hspace{-0.5mm} b_{\hat{f}_{n,j}^+,k}\hspace{-0.5mm}-\hspace{-0.5mm}C_\Delta^+ + \phin(\vx,\vXhat_k)^\T\vw_{\hat{f}_{n,j}^+,k} ,\qquad j\in[2] ,
\end{split} \end{equation}
where $\setI_{n,j}^+ \defeq \setI_n(\hat{f}_{n,j}^+)$, $C_n^+$ and $\setI_n(\cdot)$ are defined as in \eqref{eq:dcf-estimator}, and the average bias term $C_\Delta^+$ is given by $C_\Delta^+ \defeq  \frac12 \sum_{j\in[2]}\frac{1}{|\setI_{n,j}^+|}\sum_{k\in\setI_{n,j}^+}b_{\hat{f}_{n,j}^+,k}$.

Let $b^+_{j,k} \defeq (\frac32-j)C_n^+ + b_{\hat{f}_{n,j}^+,k} - C_\Delta^+$ be the bias parameters used in defining $\fni$, for all $j\in[2]$ and $k\in\setI_{n,j}^+$.
Define the average bias terms as $\bar{b}_{n,j}^+ \defeq \frac{1}{|\setI_{n,j}^+|}\sum_{k\in\setI_{n,j}^+} b^+_{j,k}$ for all $j\in[2]$.
Note that subtracting the constant $C_\Delta^+$ from all the bias parameters leaves the function $\fni$ unchanged and ensures that $\bar{b}_{n,1}^+ + \bar{b}_{n,2}^+ = 0$.

We claim that the theoretical guarantee of \cref{thm:near-minimax-rate} extends to DCF estimators based on the symmetric set $\setFn^\Delta(\cdot)$.
The derivation in \cref{sec:analysis} can be straightforwardly adapted to this case, except for one detail concerning the bounding of the unregularized bias parameters in \cref{thm:loose-param-bounds,thm:fni-bias-bound}.
The techniques presented in \cref{sec:proof-near-minimax-rate} yield bounds on differences between pairs of bias parameters, rather than on individual ones.
To decouple these bounds, we apply the following result.

\vspace{-2mm}
\begin{lemma} \label{thm:bias-sep}
  Let $n, m \in \setN$, and $a_1, \ldots, a_n , b_1, \ldots, b_m\in \setR$.
  Define $\bar{a} \defeq (1/n)\sum_{i=1}^n a_i$ and $\bar{b} \defeq (1/m)\sum_{j=1}^m b_j$.
  Suppose that $\bar{a} + \bar{b} = 0$, and that there exist $c \in \setR$ and $\bnd > 0$ such that $\max_{i\in[n],j\in[m]}|a_i - b_j - c| \le \bnd$.
  Then, $\max\big\{\max_{i\in[n]}|a_i - c/2|, \max_{j\in[m]}|b_j+c/2|\big\} \le 3\bnd/2$.
\end{lemma}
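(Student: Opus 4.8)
The plan is to first eliminate the offset $c$ by a symmetric shift, and then exploit the averaging structure of the hypothesis together with the constraint $\bar a + \bar b = 0$. Concretely, I would introduce the shifted quantities $a_i' \defeq a_i - c/2$ and $b_j' \defeq b_j + c/2$. Since $a_i' - b_j' = a_i - b_j - c$, the hypothesis becomes the symmetric bound $\max_{i\in[n],j\in[m]}|a_i' - b_j'| \le \beta$, while the averages satisfy $\bar a' + \bar b' = \bar a + \bar b = 0$. Moreover $|a_i - c/2| = |a_i'|$ and $|b_j + c/2| = |b_j'|$, so the conclusion is exactly $\max\{\max_{i\in[n]}|a_i'|, \max_{j\in[m]}|b_j'|\} \le 3\beta/2$, and it suffices to treat the case $c = 0$.

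Next I would bound each individual shifted term by the opposite mean. Averaging the pairwise bound over $j$ and applying the triangle inequality gives $|a_i' - \bar b'| = |(1/m)\sum_{j=1}^m (a_i' - b_j')| \le \beta$ for every $i\in[n]$; symmetrically, averaging over $i$ gives $|\bar a' - b_j'| \le \beta$ for every $j\in[m]$. Then I would control the means themselves: averaging $|a_i' - b_j'| \le \beta$ over \emph{both} indices yields $|\bar a' - \bar b'| \le \beta$, and combining this with $\bar a' = -\bar b'$ gives $2|\bar a'| = |\bar a' - \bar b'| \le \beta$, hence $|\bar a'| \le \beta/2$ and likewise $|\bar b'| \le \beta/2$.

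A single further application of the triangle inequality then closes the argument: $|a_i'| \le |a_i' - \bar b'| + |\bar b'| \le \beta + \beta/2 = 3\beta/2$, and symmetrically $|b_j'| \le 3\beta/2$. Undoing the shift recovers the claimed bounds on $|a_i - c/2|$ and $|b_j + c/2|$, completing the proof. There is no substantive obstacle in this lemma; the only point that requires care is recognizing that the offset $c$ must be split \emph{symmetrically} (into $c/2$ on each side) so that the shifted problem simultaneously inherits the pairwise bound and the zero-sum constraint $\bar a' + \bar b' = 0$. Once the shift is set up correctly, the averaging argument that converts the pairwise constraint into a bound on the means, and thence into individual bounds, is entirely routine.
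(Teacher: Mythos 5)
Your proof is correct and follows essentially the same route as the paper's: bound $|\bar a - \bar b - c|\le\beta$ by averaging the pairwise bound, use $\bar a = -\bar b$ to get $|\bar a - c/2|,|\bar b + c/2|\le\beta/2$, and then bound each individual term via the opposite mean with one more triangle inequality. The symmetric shift by $c/2$ is just a notational repackaging of the same argument.
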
 \vspace{-2mm} %
\begin{proof}
  By Jensen's inequality, we get $|\bar{a} - \bar{b} - c| \le \frac{1}{n m}\sum_{i=1}^n\sum_{j=1}^m|a_i - b_j - c| \le \bnd$.
  Combining this with $\bar{a} = -\bar{b}$ yields $|\bar{a} - c/2| \le \bnd/2$ and $|\bar{b} + c/2| \le \bnd/2$.
  Then, by using the reverse triangle and Jensen's inequalities, we get for all $i\in[n]$ that $|a_i - c/2| - |\bar{b} + c/2| \le |a_i - \bar{b} - c| \le \bnd$, which implies $\max_{i\in[n]}|a_i - c/2| \le 3\bnd/2$.
  The other claim, $\max_{j\in[m]}|b_j+c/2| \le 3\bnd/2$, follows analogously.
\end{proof}

\vspace{-4mm}
Similarly to the proof of \cref{thm:fni-bias-bound}, one can show that for every $k \in \setI_{n,1}^+$ there exists $l \in \setI_{n,2}^+$ such that $|b^+_{1,k} - b^+_{2,l} - y_0| \le \bnd_0$, and symmetrically with the roles of the two components interchanged.
To lift this to all index pairs, we bound the spread of the bias parameters within each component.
By the definition of $\setI_{n,j}^+$, for all $j\in[2]$ and $k\in\setI_{n,j}^+$ there exists $i_k \in [n]$ with $f_{n,j}^+(\vX_{i_k}) = b^+_{j,k} + \vw_{f_{n,j}^+,k}^\T\phin(\vX_{i_k},\vXhat_k)$.
Hence, for all $j\in[2]$ and $k, l \in \setI_{n,j}^+$, the triangle and Cauchy-Schwarz inequalities, \cref{thm:phi-props}, the Lipschitz continuity of $f_{n,j}^+$, and $\setXhat_K \subseteq \setX_n$ yield
{
  \setlength{\abovedisplayskip}{\dimexpr\abovedisplayskip-2mm\relax}
  \setlength{\belowdisplayskip}{\dimexpr\belowdisplayskip-3mm\relax}
  \setlength{\abovedisplayshortskip}{\dimexpr\abovedisplayshortskip-2mm\relax}
  \setlength{\belowdisplayshortskip}{\dimexpr\belowdisplayshortskip-3mm\relax}
  \begin{equation*} \begin{split}
    |b^+_{j,k} - b^+_{j,l}|
    &\le |f_{n,j}^+(\vX_{i_k}) - f_{n,j}^+(\vX_{i_l})| +
      \norm{\vw_{f_{n,j}^+,k}}\norm{\phin(\vX_{i_k},\vXhat_k)} +
      \norm{\vw_{f_{n,j}^+,l}}\norm{\phin(\vX_{i_l},\vXhat_l)}
    \\[-1mm]
    &\le (\lipphi + 2\cphi) \max_{k\in[K]}\norm{\vw_{f_{n,j}^+,k}} \max_{i,i'\in[n]}\norm{\vX_i - \vX_{i'}}
    \\[-3mm]
    &= \Ordo(\bnd_0) .
  \end{split} \end{equation*}
}%
Combining the two bounds through the triangle inequality gives $|b^+_{1,k} - b^+_{2,l} - y_0| = \Ordo(\bnd_0)$ for all $k\in\setI_{n,1}^+$ and $l\in\setI_{n,2}^+$, which we then separate using \cref{thm:bias-sep} with $\bar{b}_{n,1}^+ + \bar{b}_{n,2}^+ = 0$.

The bias parameters of $f_n$ can be centered to satisfy $0 = \sum_{k\in[K]} b_{n,k} + b'_{n,k}$ without changing the function.
Analogously to the proof of \cref{thm:loose-param-bounds}, we get $|b_{n,k} - b'_{n,k} - y_0| = \Ordo(\bnd_1)$ for all $k \in [K]$.
To extend this to all pairs, we use the constraints in \eqref{eq:erm-symm}.
Specifically, for each pair $k, l \in [K]$, we have $b_k \ge b_l + \phin(\vXhat_k,\vXhat_l)^\T\vw_l$ and $b_l \ge b_k + \phin(\vXhat_l,\vXhat_k)^\T\vw_k$, which imply $|b_{n,k} - b_{n,l}| = \Ordo\big(\cphi R_{\setX_n} \max_{k'\in[K]}\norm{\vw_{n,k'}}\big) = \Ordo(\bnd_1)$.
The same reasoning applies to $\{b'_{n,k} : k\in[K]\}$.
We then use the triangle inequality to bound the distance between all pairs as $|b_{n,k}-b'_{n,l}-y_0| = \Ordo(\bnd_1)$ for all $k,l\in[K]$, and invoke \cref{thm:bias-sep}.
The remaining details are straightforward and omitted for brevity.

\vspace{-2mm}
\subsection{Adapting to Convex Shape-restricted Regression}
\label{sec:convex-regression}

We now consider the setting of convex (shape-restricted) regression, where the $\lip_*$-Lipschitz regression function $f_*$ is known to be convex.
The goal is to estimate $f_*$ with a convex function.

Let $\setFcvx_{\lip,\setX} \defeq \big\{f \in \setF_{\lip,\setX} : f \textrm{ is convex}\big\}$ denote the set of $\lip$-Lipschitz, convex functions on a convex set $\setX \subseteq \setR^d$.
Let $\nabla f(\vx)$ denote an arbitrary but fixed subgradient of $f$ at $\vx\in\setR^d$.
We consider the statistical model \eqref{eq:data-model}, modified so that $f_* \in \setFcvx_{\lip_*,\setX_*}$, for an unknown Lipschitz constant $\lip_* > 0$ and an unknown convex domain $\setX_* \subseteq \setR^d$.
In this setting, DCF (\cref{alg:DCF}) can be applied by restricting $\setFn(\setXhat_K)$ to convex functions, where the center points $\setXhat_K \defeq \{\vxhat_1,\ldots,\vxhat_K\}$ are computed by AFPC (\cref{alg:AFPC}).

The max-affine representation is by far the most widely used in convex regression.
DCF can be readily adapted to this case by disabling the ``norm feature'' and restricting $\setFn(\setXhat_K)$ to the class of max-affine functions as
{
  \setlength{\abovedisplayskip}{\dimexpr\abovedisplayskip-2mm\relax}
  \setlength{\belowdisplayskip}{\dimexpr\belowdisplayskip-2mm\relax}
  \setlength{\abovedisplayshortskip}{\dimexpr\abovedisplayshortskip-2mm\relax}
  \setlength{\belowdisplayshortskip}{\dimexpr\belowdisplayshortskip-2mm\relax}
  \begin{equation} \label{eq:max-affine-match}
    \setM_K = \Big\{f \in \setFn(\setXhat_K)
    : \vw_{f,k}^\T = [\vu_{f,k}^\T \,\, v_{f,k}] ,\, \vu_{f,k} \in \setR^d ,\, v_{f,k} = 0 ,\, k\in[K]\Big\}
    ,
  \end{equation}
}%
which holds for all $\somenorm \in \{1,2,\infty\}$.

To adapt the proof of \cref{thm:near-minimax-rate} to the convex regression setting, we replace \cref{thm:approxL} with the following approximation result, which slightly extends the result of \citet{Balazs2022}.

\begin{theorem} \label{thm:approxLcvx}
  Let $\setX_\epsilon$ be an $\epsilon$-cover of a convex set $\setX \subset \setR^d$.
  Suppose $f \in \setFcvx_{\lip,\setX}$ for some Lipschitz constant $\lip > 0$, and let $\hat{m}(\vx) \defeq \max_{\vxhat\in\setX_\epsilon} f(\vxhat) + \nabla f(\vxhat)^\T(\vx-\vxhat)$ for all $\vx\in\setR^d$.
  Then, for all $\vx\in\setX$ and $\vxtilde\in\setX_\epsilon$,
  \begin{equation*}
    0 \le f(\vx) - \hat{m}(\vx) \le 2\lip\epsilon
    ,\quad\qquad \hat{m}(\vxtilde) = f(\vxtilde)
    ,\quad\qquad \hat{m} \in \setFcvx_{\lip,\setR^d} \cap \setM_{|\setX_\epsilon|}.
  \end{equation*}
\end{theorem}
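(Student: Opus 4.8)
The plan is to follow the structure of the proof of \cref{thm:approxL}, replacing the McShane-type norm construction with the supporting-hyperplane representation afforded by convexity. The three claims decouple cleanly, and the only substantive ingredient beyond bookkeeping is a bound on the size of the subgradients.

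First I would establish the lower bound together with the interpolation property. Since $f$ is convex, each affine term $f(\vxhat) + \nabla f(\vxhat)^\T(\vx-\vxhat)$ appearing in the definition of $\hat{m}$ is a supporting hyperplane, so the subgradient inequality $f(\vx) \ge f(\vxhat) + \nabla f(\vxhat)^\T(\vx-\vxhat)$ holds for every $\vxhat\in\setX_\epsilon$; taking the maximum over $\vxhat$ yields $f \ge \hat{m}$ on $\setX$, i.e.\ $f(\vx) - \hat{m}(\vx) \ge 0$. For $\vxtilde\in\setX_\epsilon$, the term indexed by $\vxhat = \vxtilde$ equals $f(\vxtilde)$, so $\hat{m}(\vxtilde) \ge f(\vxtilde)$; combined with the just-proved lower bound $\hat{m}(\vxtilde) \le f(\vxtilde)$, this gives $\hat{m}(\vxtilde) = f(\vxtilde)$.

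The key auxiliary fact is the subgradient norm bound $\norm{\nabla f(\vxhat)} \le \lip$, valid for $\lip$-Lipschitz convex functions (take a small step from $\vxhat$ in the direction of $\nabla f(\vxhat)$ and combine the subgradient inequality with the Lipschitz bound). Given this, the upper bound follows exactly as in \cref{thm:approxL}: the $\epsilon$-covering property supplies a center $\vxhat\in\setX_\epsilon$ with $\norm{\vx-\vxhat}\le\epsilon$, and then $f(\vx) - \hat{m}(\vx) \le f(\vx) - f(\vxhat) - \nabla f(\vxhat)^\T(\vx-\vxhat) \le \lip\norm{\vx-\vxhat} + \norm{\nabla f(\vxhat)}\,\norm{\vx-\vxhat} \le 2\lip\epsilon$, where the first summand is controlled by the Lipschitz property of $f$ and the second by Cauchy–Schwarz together with the subgradient bound. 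The same bound $\norm{\nabla f(\vxhat)}\le\lip$ then settles the last claim: $\hat{m}$ is a maximum of $|\setX_\epsilon|$ affine functions, hence $\hat{m}\in\setM_{|\setX_\epsilon|}$ and is convex, and since each slope has norm at most $\lip$ and a maximum of $\lip$-Lipschitz functions is again $\lip$-Lipschitz, we obtain $\hat{m}\in\setFcvx_{\lip,\setR^d}$.

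I expect the whole argument to be short and routine; the one place requiring care is the subgradient norm bound at boundary points of $\setX$, which is the single step where convexity and Lipschitzness are used jointly rather than separately. This is handled by the standard fact that a $\lip$-Lipschitz convex function on a convex set extends to a $\lip$-Lipschitz convex function on $\setR^d$ with subgradients of norm at most $\lip$, which is consistent with the statement's convention that $\nabla f$ is a fixed subgradient defined on all of $\setR^d$.
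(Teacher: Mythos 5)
Your proposal is correct and follows essentially the same route as the paper's proof: the subgradient (supporting-hyperplane) inequality gives $f \ge \hat{m}$, the term at $\vxhat=\vxtilde$ gives the interpolation, and the upper bound and Lipschitz claim both rest on the $\epsilon$-covering property together with the subgradient norm bound $\norm{\nabla f(\vxhat)} \le \lip$. Your explicit attention to the boundary-point subgradient issue is a reasonable refinement of a step the paper treats as immediate, but it does not change the argument.
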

\begin{proof}
  The convexity of $f$ directly implies $0 \le f - \hat{m}$.
  Moreover, since $f \in \setF_{\lip,\setX}$, we get for all $\vx\in\setX$ that
  \begin{equation} \label{eq:cvx-m-ineq}
    f(\vx) - \hat{m}(\vx) = \min_{\vxhat\in\setX_\epsilon}f(\vx) - f(\vxhat) - \nabla f(\vxhat)^\T(\vx-\vxhat) \le 2\lip\min_{\vxhat\in\setX_\epsilon}\norm{\vx-\vxhat} ,
  \end{equation}
  which implies $f(\vx)-\hat{m}(\vx) \le 2\lip\epsilon$ for all $\vx\in\setX$ by the $\epsilon$-covering property of $\setX_\epsilon$.
  Furthermore, choosing $\vx = \vxtilde \in \setX_\epsilon$ in \eqref{eq:cvx-m-ineq}, we get $f(\vxtilde) - \hat{m}(\vxtilde) \le 0$, implying the second claim.
  The third claim follows because the max function is $1$-Lipschitz, and $\norm{\nabla f(\vxhat)} \le \lip$ for all $\vxhat \in \setX_\epsilon$ as $\setX_\epsilon \subseteq \setX$ and $f \in \setF_{\lip,\setX}$.
\end{proof}

Since \cref{thm:approxLcvx} delivers the same $\Ordo(\lip\epsilon)$ approximation accuracy as \cref{thm:approxL}, one may apply DCF with the class of max-affine functions $\setM_K$ to achieve the near-minimax rate of \cref{thm:near-minimax-rate} in the convex regression setting.
This matches the theoretical guarantees of the APCNLS algorithm of \citet{Balazs2022}, but unlike APCNLS, DCF does not require knowledge of the Lipschitz constant $\lip_*$.
Moreover, the optimization problem \eqref{eq:erm} in DCF uses $K^2$ constraints, which provides a substantial reduction in computational burden compared to the $nK$ constraints used in APCNLS.

Motivated by the experimental results of \cref{sec:experiments}, it may be beneficial to use a richer function class than the max-affine one.
In particular, one can allow $v_{f,k} \ge 0$ for all $k\in[K]$ instead of enforcing $v_{f,k} = 0$ as in \eqref{eq:max-affine-match}, which still ensures that the set $\setFn(\setXhat_K)$ is restricted to convex functions for all $\somenorm \in \{1,2,\infty\}$.
Moreover, one can restrict the set $\setF_+(\setXhat_K)$ to convex functions using the following simple result:

\begin{lemma} \label{thm:cvx-plus}
    Fix $\hat{x}, u, v \in \setR$, and define $f(x) \defeq (x-\hat{x})_+ u + (\hat{x}-x)_+ v$ for all $x \in \setR$.
  Then $f$ is convex on $\setR$ if and only if $u \ge -v$.
\end{lemma}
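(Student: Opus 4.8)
The plan is to exploit the fact that $f$ is a continuous, piecewise-affine function of the single real variable $x$ whose only breakpoint is at $\hat{x}$, and then reduce convexity to a comparison of the two slopes meeting at that breakpoint. Convexity of a univariate piecewise-linear function is equivalent to its slopes being nondecreasing from left to right, and with a single kink this collapses to exactly one inequality, which will turn out to be $u \ge -v$.

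First I would split the real line at $\hat{x}$ and simplify the two ReLU terms. For $x \ge \hat{x}$ we have $(x-\hat{x})_+ = x-\hat{x}$ and $(\hat{x}-x)_+ = 0$, so $f(x) = u(x-\hat{x})$; for $x \le \hat{x}$ we have $(x-\hat{x})_+ = 0$ and $(\hat{x}-x)_+ = \hat{x}-x$, so $f(x) = v(\hat{x}-x)$. The two pieces agree at $x=\hat{x}$, where $f(\hat{x}) = 0$, so $f$ is continuous, with right-hand slope $u$ and left-hand slope $-v$ at the breakpoint.

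Rather than cite the piecewise-linear convexity criterion, I would prove both directions directly to keep the argument self-contained. For the ``only if'' direction I would argue by contraposition: if $u < -v$, take $x_1 \defeq \hat{x}-1$ and $x_2 \defeq \hat{x}+1$, whose midpoint is $\hat{x}$. Then $f(x_1) = v$, $f(x_2) = u$, and $f(\hat{x}) = 0$, so convexity would demand $0 = f(\hat{x}) \le \tfrac12\bigl(f(x_1)+f(x_2)\bigr) = \tfrac{u+v}{2} < 0$, a contradiction; hence convexity forces $u \ge -v$. For the ``if'' direction, given $u \ge -v$ I would observe that the one-sided derivative of $f$ equals $-v$ on $(-\infty,\hat{x})$ and $u$ on $(\hat{x},\infty)$, so it is nondecreasing across the single breakpoint, which yields convexity of $f$; equivalently, one verifies the convex-combination inequality $f(\lambda x_1 + (1-\lambda)x_2) \le \lambda f(x_1) + (1-\lambda)f(x_2)$ case-by-case according to where the two points and their combination fall relative to $\hat{x}$.

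There is essentially no deep obstacle here, since this is a one-dimensional fact; the only point requiring a little care is the nondifferentiability at $\hat{x}$. I would handle it cleanly by phrasing the ``if'' direction through the monotonicity of the one-sided slopes (so that the kink is an upward jump in slope, which preserves convexity) and the ``only if'' direction through a single convex combination that lands exactly on $\hat{x}$, making both implications transparent and the constant $\hat{x}$ irrelevant to the criterion.
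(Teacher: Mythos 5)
Your proof is correct, but it takes a different route from the paper. The paper's argument is a one-line algebraic decomposition: it rewrites $f(x) = (x-\hat{x})_+(u+v) - (x-\hat{x})v$, so that $f$ is a scalar multiple of the convex ReLU function plus an affine term; the sign of $u+v$ then immediately determines whether $f$ is convex or (non-affinely) concave, giving both directions at once. You instead work directly with the piecewise-affine structure: you identify the left and right slopes $-v$ and $u$ at the single breakpoint, prove the ``if'' direction via monotonicity of the one-sided derivatives, and prove the ``only if'' direction by contraposition with the explicit midpoint test at $x_1 = \hat{x}-1$, $x_2 = \hat{x}+1$ (your evaluations $f(x_1)=v$, $f(x_2)=u$, $f(\hat{x})=0$ are all correct). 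Your approach is more elementary and self-contained, and it makes the ``only if'' direction fully explicit, whereas the paper leaves implicit the fact that a concave, non-affine function fails to be convex; the paper's approach is shorter and exhibits the decomposition of $f$ into a convex part plus an affine part, which is the structural fact the surrounding section actually uses. Either argument is acceptable.
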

\begin{proof}
  Write $f(x) = (x-\hat{x})_+(u+v) - \big((x-\hat{x})_+ - (\hat{x}-x)_+\big)v = (x-\hat{x})_+(u+v) - (x-\hat{x})v$ which is convex on $\setR$ if $u + v \ge 0$ and concave otherwise by the convexity of $(\cdot)_+$.
\end{proof}

Then, by \cref{thm:cvx-plus}, the above-mentioned restriction to convex functions can be formalized by using an extra linear constraint as
\begin{equation*}
  \setF_+^{\textrm{cvx}}(\setXhat_K) \defeq \Big\{ f \in \setF_+(\setXhat_K) : \vw_{f,k}^\T = [\vu_{f,k}^\T\,\,\vv_{f,k}^\T],\, \vu_{f,k} \ge -\vv_{f,k},\, \vu_{f,k},\vv_{f,k}\in\setR^d,\,k\in[K]\Big\} .
\end{equation*}
A detailed analysis of when these extended convex function classes provide performance gains compared to max-affine functions is left for future research.

\section{Conclusions}
\label{sec:conclusion}

We introduced the polynomial-time DCF algorithm, which decomposes the nonparametric estimation of a Lipschitz function into three steps: a partitioning step, an initial convex fitting step over the resulting partition, and an optional refinement step applied to the initial solution.
As shown in \cref{thm:near-minimax-rate}, DCF achieves the adaptive near-minimax rate in our setting, capturing the intrinsic dimension of the covariate space without relying on external model selection procedures.

Our empirical results show that DCF, when equipped with an appropriately chosen regularization parameter $\theta_2$, can be competitive with state-of-the-art methods and can outperform other theoretically justified algorithms.
However, its sensitivity to $\theta_2$, together with its computationally intensive training procedure, highlights an important direction for future research.

\acks{%
We thank Csaba Szepesv\'ari for many helpful comments on an early draft of the paper.
This work was funded by \textsc{G\&G} (Mariana Gema and the author).
}

\vskip 0.2in
\bibliography{dcf}

\end{document}